\title{\papertitle}
\author{
  Richard Nock \qquad Ehsan Amid \qquad Manfred K. Warmuth \\
 Google Research\\
{\normalsize $\{$richardnock,eamid,manfred$\}$@google.com} \\
}
\begin{document}

\date{}

\maketitle

\begin{abstract}
  One of the most popular ML algorithms, \adaboost, can be
derived from the dual of a relative entropy
minimization problem subject to the fact that the positive weights
on the examples sum to one. Essentially, harder examples receive higher probabilities. We generalize this setup to the recently introduced {\it tempered
exponential measure}s (\acrotem s) where normalization is enforced on a specific power of the measure and not the measure itself.
\acrotem s are indexed by a parameter $t$ and generalize exponential families ($t=1$). Our algorithm, \tadaboost, recovers \adaboost~as a special case ($t=1$). We show that \tadaboost~retains \adaboost's celebrated exponential convergence rate when $t\in [0,1)$ while allowing a slight improvement of the rate's hidden constant compared to $t=1$. \tadaboost~partially computes on a generalization of classical arithmetic over the reals and brings notable properties like guaranteed bounded leveraging coefficients for $t\in [0,1)$. From the loss that \tadaboost~minimizes (a generalization of the exponential loss), we show how to derive a new family of \textit{tempered} losses for the induction of domain-partitioning classifiers like decision trees. Crucially, strict properness is ensured for all while their boosting rates span the full known spectrum. Experiments using \tadaboost +trees display that significant leverage can be achieved by tuning $t$.
\end{abstract}

\section{Introduction}\label{sec-intro}

\adaboost~is one of the most popular ML algorithms \cite{fsAD,ssIB}. It efficiently aggregates weak hypotheses
into a highly accurate linear combination \cite{kTO}. 
The common motivations of boosting algorithms focus on choosing good linear weights (the leveraging coefficients) for combining the weak hypotheses. A dual view of boosting highlights the dual parameters, which are the weights on the examples. These weights define a distribution, and \adaboost~can be viewed as minimizing a relative entropy to the last distribution subject to a linear constraint introduced by the current hypothesis \cite{kwBA}. For this reason (more in Section \ref{sec-rel}), \adaboost's weights define an exponential family.

\textbf{In this paper}, we go beyond weighing the examples with a discrete exponential family distribution, relaxing the constraint that the total mass be unit but instead requiring it for the measure's $1/(2-t)$'th power, where $t$ is a temperature parameter. Such measures, called {\it tempered exponential measures} (\acrotem s), have been recently introduced \cite{anwCA}. Here we apply the discrete version of these \acrotem s for deriving a novel boosting algorithm called \tadaboost. Again the measures are solutions to a relative entropy minimization problem, but the relative entropy is built from Tsallis entropy and  ``tempered'' by a parameter $t$. As $t\rightarrow 1$ \acrotem s become standard exponential family distributions and our new algorithm merges into \adaboost. As much as \adaboost~minimizes the exponential loss, \tadaboost~minimizes a generalization of this loss we denote as the \textit{tempered exponential loss}.

\acrotem s were introduced in the context of clustering, where they were shown to improve the robustness to outliers of clustering's population minimizers \cite{anwCA}. Boosting is a high-precision machinery: \adaboost~is known to achieve near-optimal boosting rates under the weak learning assumption \cite{aghmBS}, but it has long been known that numerical issues can derail it, in particular, because of the unbounded weight update rule \cite{kIA}. So the question of what the \acrotem~setting can bring for boosting is of primordial importance. As we show, \tadaboost~can suffer no rate setback as boosting's exponential rate of convergence can be preserved for all $t \in [0,1)$. Interestingly, however, for such a range of $t$, the weight update becomes bounded. \tadaboost~makes use of a generalization of classical arithmetic over the reals introduced decades ago \cite{nlwGA} and besides linear separators, it can also learn progressively clamped models\footnote{Traditionally, clamping a sum is done after it has been fully computed. In our case, it is clamped after each new summand is added.}. Also, the weight update makes appear a new regime whereby weights can "switch off and on": an example's weight can become zero if too well classified by the current linear separator, and later on revert to non-zero if badly classified by a next iterate.

Boosting algorithms for linear models like \adaboost~bring more than just learning good linear separators: it is known that (ada)boosting linear models can be used to emulate the training of \textit{decision trees} (DT) \cite{mnwRC}, which are models known to lead to some of the best of-the-shelf classifiers when linearly combined \cite{fhtAL}. Unsurprisingly, the algorithm obtained emulates the classical top-down induction of a tree found in major packages like CART \cite{bfosCA} and C4.5 \cite{qC4}. The \textit{loss} equivalently minimized, which is, \textit{e.g.}, Matusita's loss for \adaboost~\cite[Section 4.1]{ssIB}, is a lot more consequential. Contrary to losses for real-valued classification, losses to train DTs rely on the estimates of the posterior learned by the model; they are usually called \textit{losses for Class Probability Estimation} (CPE \cite{rwCB}). The CPE loss is crucial to elicit because (i) it is important to check whether it is proper (Bayes rule is optimal for the loss \cite{sEO}), and (ii) it conditions boosting rates, only a handful of them being known, for the most popular CPE losses \cite{kmOT,nwLO,snsBP}.

\textbf{In this paper}, we show that this emulation scheme on \tadaboost~provides a new family of CPE losses with remarkable constancy with respect to properness: losses are \textit{strictly} proper (Bayes rule is the \textit{sole} optimum) for any $t\in (-\infty, 2)$ and proper for $t=-\infty$. Furthermore, over the range $t\in [-\infty, 1]$, the range of boosting rates spans the full spectrum of known boosting rates \cite{kmOT}. 

We provide experiments displaying the boosting ability of \tadaboost~over a range of $t$ encompassing potentially more than the set of values covered by our theory, and highlight the potential of using $t$ as a parameter for efficient tuning the loss \cite[Section 8]{rwCB}. For the sake of readability, proofs are relegated to the appendix (\supplement). A primer on \acrotem s is also given in \supplement, Section \ref{sec-sup-primer}.

\section{Related work}\label{sec-rel}

Boosting refers to the ability of an algorithm to combine the outputs of moderately accurate, "weak" hypotheses into a highly accurate, "strong" ensemble. Originally, boosting was introduced in the context of Valiant's PAC learning model as a way to circumvent the then-existing amount of related negative results \cite{kTO,vAT}. After the first formal proof that boosting is indeed achievable \cite{sTS}, \adaboost~became the first practical and proof-checked boosting algorithm \cite{fsAD,ssIB}. Boosting was thus born in a machine learning context, but later on, it also emerged in statistics as a way to learn from class residuals computed using the gradient of the loss \cite{fhtAL,nnTP}, resulting this time in a flurry of computationally efficient algorithms, still called boosting algorithms, but for which the connection with the original weak / strong learning framework is in general not known.

Our paper draws its boosting connections with \adaboost's formal lineage. \adaboost~has spurred a long line of work alongside different directions, including statistical consistency \cite{btAI}, noise handling \cite{lsRC,mnwRC}, low-resource optimization \cite{nwLO}, \emph{etc}. The starting point of our work is a fascinating result in convex optimization establishing a duality between the algorithm and its memory of past iteration's performances given by the probability distribution 
over the examples \cite{kwBA}. From this standpoint, \adaboost~solves the dual of the optimization of a relative entropy between the new and current distribution subject to a linear constraint on the weak classifier's performance. Whenever a relative entropy is minimized subject to linear constraints, then the solution is a member of an exponential family of distributions (see \textit{e.g.} \cite[Section 2.8.1]{aIG} for an axiomatization of exponential families). Indeed \adaboost's distribution on the examples is a member of a discrete exponential family where the training examples are the finite support of the distribution, sufficient statistics are defined from the weak learners, and the leveraging coefficients are the natural parameters. In summary, there is an intimate relationship between boosting \`a-la-\adaboost, exponential families, and Bregman divergences \cite{kwBA,cssLR,nnOT} and our work
"elevates" these methods above exponential families.

\section{Definitions}\label{sec-defs-setting}

We define the $t$-logarithm and $t$-exponential,
\begin{eqnarray}
\log_t (z) \defeq \frac{1}{1-t}\cdot\left(z^{1-t}-1\right) & , & \exp_t (z) \defeq \left[1+(1-t) z\right]^{1/(1-t)}_+ \quad ([z]_+ \defeq \max\{0,z\})\label{defExpLogT},
\end{eqnarray}
where the case $t=1$ is supposed to be the extension by continuity to the $\log$ and $\exp$ functions, respectively. To preserve the concavity of $\log_t$ and the convexity of $\exp_t$, we need $t\geq 0$. In the general case, we also note the asymmetry of the composition: while $\exp_t \log_t(z) = z, \forall t \in \mathbb{R}$, we have $\log_t \exp_t(z) = z$ for $t=1$ ($\forall z\in \mathbb{R}$), but
\begin{eqnarray*}
\log_t \exp_t(z) = \max\left\{-\frac{1}{1-t}, z\right\} \quad (t<1) & \mathrm{and} & \log_t \exp_t(z) = \min\left\{\frac{1}{t-1}, z\right\} \quad (t>1).
\end{eqnarray*}
Comparisons between vectors and real-valued functions written on vectors are assumed component-wise. We assume $t\neq 2$ and define notation $t^* \defeq 1/(2-t)$. We now define the key set in which we model our weights (boldfaces denote  vector notation).
\begin{definition}\label{defCOSIMPLEX}
The co-simplex of $\mathbb{R}^m$, $\tilde{\Delta}_m$ is defined as $\tilde{\Delta}_m \defeq \{\ve{q}\in \mathbb{R}^m: \ve{q} \geq \ve{0} \wedge \ve{1}^\top \ve{q}^{1/t^*} = 1\}$.
\end{definition}
The letters $\ve{q}$ will be used to denote \acrotem s 
in $\tilde{\Delta}_m$ while $\ve{p}$ denote the co-density $\ve{q}^\frac{1}{t^*}$ or any element of the probability simplex. We define the general tempered relative entropy as
\begin{eqnarray}
  D_t(\ve{q}' \| \ve{q}) & \defeq & \sum_{i\in [m]} {q}_i' \cdot \left(\log_t {q}_i'  - \log_t {q}_i \right) - \log_{t-1} {q}_i' + \log_{t-1} {q}_i,
\end{eqnarray}
where $[m] \defeq \{1,..., m\}$. The tempered relative entropy is a Bregman divergence with convex generator $\phi_t(z) \defeq z \log_t z - \log_{t-1}(z)$ (for $t\in \mathbb{R}$) and $\phi_t(z)'=\log_t(x)$. As $t\rightarrow 1$, $D_t(\ve{q},\ve{q}')$ becomes the relative entropy with generator $\phi_1(x)=x \log(x) - x$.

\section{Tempered boosting as tempered entropy projection}\label{sec-defs-tboost}

We start with a fixed sample $\mathcal{S}= \{(\ve{x}_i,y_i):i\in[m]\}$ where observations $\ve{x}_i$ lie in some domain $\mathcal{X}$ and labels $y_i$ are $\pm 1$. \adaboost~maintains a distribution $\ve{p}$ over the sample. At the current iteration, this distribution is updated based on a current \textit{weak hypothesis} $h\in \mathbb{R}^{\mathcal{X}}$ using an exponential update:
$$p_i' = 
\frac{p_{i} \cdot \exp(-\mu u_{i})}{\sum_k p_{k} \cdot \exp(-\mu u_{k})}, \text{ where }u_{i}\defeq y_i h(\ve{x}_i).$$
In \cite{kwBA} this update is motivated as minimizing a relative entropy subject to the constraint that $\ve{p}'$ is a distribution summing to 1 and $\ve{p}'^\top \ve{u}=0$.
Following this blueprint, we create a boosting algorithm maintaining a discrete \acrotem~over the sample which is motivated as a constrained minimization of the tempered relative entropy, with a normalization constraint on the co-simplex of $\mathbb{R}^m$:
\begin{eqnarray}
\ve{q}' & \defeq & \arg\min_{\begin{array}{c}\widetilde{\ve{q}} \in \tilde{\Delta}_m\\\widetilde{\ve{q}}^\top \ve{u} = 0\end{array}} D_t(\widetilde{\ve{q}} \| \ve{q}), \quad \mbox{ with } \ve{u} \in \mathbb{R}^m. \label{defBATEP}
\end{eqnarray}
We now show that the solution $\ve{q}'$ is a tempered generalization of \adaboost's exponential update.
\begin{theorem}\label{thENTBOOST}
For all $t \in \mathbb{R}\backslash \{2\}$, all solutions to \eqref{defBATEP} have the form
  \begin{eqnarray}
q_{i}' = \frac{\exp_t(\log_t q_{i} - \mu  u_{i})}{{Z}_{t}}  \quad \left( = \frac{q_{i} \otimes_t \exp_t(- \mu  u_{i})}{{Z}_{t}},  \mbox{ with } a\otimes_t b \defeq [a^{1-t} + b^{1-t} - 1]^{\frac{1}{1-t}}_+\right), \label{defWoptM}
  \end{eqnarray}
  where ${Z}_{t}$ ensures co-simplex normalization of the co-density. Furthermore, the unknown $\mu$ satisfies
  \begin{eqnarray}
    \mu \in \arg\max -\log_t ({Z}_{t}(\mu)) \quad (= \arg\min {Z}_{t}(\mu)),\label{defLossFromZ}
  \end{eqnarray}
 or equivalently is a solution to the nonlinear equation
\begin{eqnarray}
    \ve{q}'(\mu)^\top \ve{u} & = & 0.\label{eqMU}
\end{eqnarray}
Finally, if either (i) $t \in \mathbb{R}_{>0}\backslash \{2\}$ or (ii) $t=0$ and $\ve{q}$ is not collinear to $\ve{u}$, then ${Z}_{t}(\mu)$ is strictly convex: the solution to \eqref{defBATEP} is thus unique, and can be found from expression \eqref{defWoptM} by finding the unique minimizer of \eqref{defLossFromZ} or (equivalently) the unique solution to \eqref{eqMU}.
  \end{theorem}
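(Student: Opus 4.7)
The plan is to treat \eqref{defBATEP} as a (strictly) convex program with two equality constraints and attack it by Lagrange duality. Writing $1/t^{*}=2-t$, I set up
$$L(\widetilde{\ve{q}},\lambda,\mu) \;\defeq\; D_t(\widetilde{\ve{q}} \,\|\, \ve{q}) \;+\; \lambda\bigl( \ve{1}^\top \widetilde{\ve{q}}^{\,2-t} - 1 \bigr) \;+\; \mu \,\widetilde{\ve{q}}^\top \ve{u}.$$
Since $\phi_t'(z)=\log_t z$, interior ($\widetilde{q}_i>0$) stationarity reads $\log_t \widetilde{q}_i - \log_t q_i + \mu u_i + \lambda(2-t)\widetilde{q}_i^{\,1-t} = 0$, the boundary cases $\widetilde{q}_i = 0$ being handled by complementary slackness (this is the origin of the ``switch on/off'' behavior in the $t<1$ regime where $\exp_t$ clips to zero). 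Using the identity $z^{1-t}=1+(1-t)\log_t z$ I linearize stationarity in $\log_t \widetilde{q}_i$ and then match it against the companion identity $\log_t(a/Z)=Z^{t-1}(\log_t a - \log_t Z)$, which follows directly from the definition of $\log_t$. Matching coefficients gives the single relation $Z^{1-t}=1+\lambda(2-t)(1-t)$, equivalently $\log_t Z = \lambda(2-t)$; so $\lambda$ is absorbed into the co-simplex normalizer $Z_t$, delivering \eqref{defWoptM}.

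For the dual characterization of $\mu$, I would differentiate $Z_t(\mu)^{2-t} = \sum_i \exp_t(\log_t q_i - \mu u_i)^{2-t}$ using the crisp chain-rule identity $\tfrac{d}{dz}\exp_t(z)^{2-t}=(2-t)\exp_t(z)$, itself a one-line consequence of $\exp_t'(z)=\exp_t(z)^t$. This gives
$$\frac{d\,(Z_t^{\,2-t})}{d\mu} \;=\; -(2-t)\sum_i u_i\, \exp_t(\log_t q_i - \mu u_i) \;=\; -(2-t)\,Z_t(\mu)\cdot \ve{q}'(\mu)^\top \ve{u},$$
and therefore $dZ_t/d\mu = -Z_t^{\,t}\cdot \ve{q}'(\mu)^\top \ve{u}$, so the first-order condition is exactly \eqref{eqMU}; the equivalence with $\arg\max(-\log_t Z_t)$ follows since $-\log_t$ is strictly decreasing on $(0,\infty)$.

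The last and hardest step is strict convexity. Differentiating once more and simplifying via $q_i'=\exp_t(\cdot)/Z_t$, I obtain $\tfrac{d^2 Z_t}{d\mu^2} = Z_t^{\,2t-1}\bigl[\sum_i u_i^2 (q_i')^t - (1-t)\bigl(\sum_i q_i' u_i\bigr)^2\bigr]$. Cauchy--Schwarz applied to the split $q_i' u_i = (q_i')^{1-t/2}\cdot (q_i')^{t/2} u_i$, together with the co-simplex identity $\sum_i (q_i')^{2-t}=1$, yields $(\sum_i q_i' u_i)^2 \le \sum_i u_i^2 (q_i')^t$. For $t>0$ this forces the bracket to be at least $t\sum_i u_i^2(q_i')^t$, strictly positive as soon as $\ve{u}$ is non-zero on the active support of $\ve{q}'$ (automatic when the feasible set is non-trivial), giving $Z_t''>0$. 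At $t=0$ the $t$-th power becomes the active-support indicator, and strict positivity reduces exactly to the stated non-collinearity of $\ve{q}$ and $\ve{u}$. Strict convexity of $Z_t$ then delivers uniqueness of $\mu$, and via \eqref{defWoptM} uniqueness of $\ve{q}'$. The technicality sitting throughout is the clipping of $\exp_t$ for $t<1$: it is handled by restricting the differential calculus to live indices, legitimate because indices enter or leave the support continuously in $\mu$ and contribute zero at the transition.
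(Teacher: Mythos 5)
Your proposal follows essentially the same route as the paper: a Lagrangian/KKT derivation of the normalized $\exp_t$ form, the derivative identity $Z_t'(\mu)=-Z_t^t\cdot \ve{q}'(\mu)^\top\ve{u}$ linking \eqref{defLossFromZ} and \eqref{eqMU}, and a second-derivative-plus-Cauchy--Schwarz argument for strict convexity; your second-derivative formula matches the paper's \eqref{zConv1} exactly, and your lower bound ($Z_t''\geq Z_t^{2t-1}\cdot t\sum_i u_i^2 (q_i')^t$ on the active support) is in fact slightly sharper than the paper's Lemma \ref{convexZ}, which only keeps $t\,Z_t^{2t-1}(\ve{q}'^\top\ve{u})^2$. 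Two genuine differences are worth noting. First, where the paper proves Lemma \ref{lemLogZ} (the dual objective equals $-\log_t Z_t(\mu)$) to justify \eqref{defLossFromZ} by duality, you substitute the direct first-order computation; this is adequate given that primal feasibility forces $\ve{q}'(\mu)^\top\ve{u}=0$ and that you later establish convexity, but it identifies $\mu$ only as a critical point in the regimes where convexity is not proved. Second, and this is the one place where you skip work the paper actually needs: your ``matching coefficients'' step presupposes that the stationarity condition can be solved in the normalized-$\exp_t$ form, which requires showing that the normalization multiplier satisfies $1+(1-t)\lambda>0$ (equivalently that $Z^{1-t}=1+\lambda(2-t)(1-t)$ defines a positive $Z$), and that the complementary-slackness cases $\widetilde{q}_i=0$ coincide exactly with the $[\cdot]_+$ clipping in $\exp_t$; the paper devotes the sign analysis around \eqref{lagAN} precisely to ruling out the other branch, and your sketch compresses this into a single clause. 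Finally, your claim that the bracket is at least $t\sum_i u_i^2(q_i')^t$ follows from Cauchy--Schwarz only when $t\leq 1$ (the multiplication by $1-t$ flips for $t>1$), but for $t>1$ positivity of the bracket is immediate since both terms are nonnegative, so the conclusion stands.
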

  (Proof in \supplement, Section \ref{proof-thENTBOOST}) The $t$-product $\otimes_t$, which satisfies $\exp_t(a+b) = \exp_t(a) \otimes_t \exp_t(b)$, was introduced in \cite{nlwGA}. Collinearity never happens in our ML setting because $\ve{u}$ contains the edges of a weak classifier: $\ve{q} > 0$ and collinearity would imply that $\pm$ the weak classifier performs perfect classification, and thus defeats the purpose of training an ensemble. $\forall t \in \mathbb{R}\backslash \{2\}$, we have the simplified expression for the normalization coefficient of the \acrotem~and the co-density $\ve{p}'$ of $\ve{q}'$:
  \begin{eqnarray}
 {Z}_{t} = \left\| \exp_t\left(\log_t \ve{q} - \mu \cdot \ve{u}\right) \right\|_{1/t^*} &\!\!\! ;\!\! & p_i' = \frac{p_{i} \otimes_{t^*} \exp_{t^*}\left(- \frac{\mu  u_{i}}{t^*}\right)}{Z'_{t}} \quad \!\!\!\left(\mbox{ with }Z'_{t} \defeq {Z}^{1/t^*}_{t} \right).\label{defWCODoptM}
\end{eqnarray}

\section{Tempered boosting for linear classifiers and clamped linear classifiers}\label{sec-defs-tada}

\begin{algorithm}[t]
\caption{\tada $(t,\mathcal{S},J)$}\label{tadalgo}
\begin{algorithmic}
  \STATE  \textbf{Input:} $t\in [0,1]$, training sample $\mathcal{S}$, $\#$iterations $J$;
  \STATE  \textbf{Output:} classifiers $H_J,H^{(\nicefrac{1}{1-t})}_J$ (see \eqref{defLinClin});
  \STATE  Step 1 : initialize tempered weights: $\ve{q}_1 = (1/m^{t^*}) \cdot \ve{1} \quad (\in \tilde{\Delta}_m)$;
  \STATE  Step 2 : for $j=1, 2, ..., J$
  \STATE \hspace{1cm} Step 2.1 : get weak classifier $h_j \leftarrow \mbox{weak$\_$learner}(\ve{q}_j, \mathcal{S})$;
  \STATE \hspace{1cm} Step 2.2 : choose weight update coefficient $\mu_j \in \mathbb{R}$;
  \STATE \hspace{1cm} Step 2.3 : $\forall i \in [m]$, for $u_{ji} \defeq y_i h_j(\ve{x}_i)$, update the tempered weights as
\begin{eqnarray}
q_{(j+1)i} = \frac{q_{ji} \otimes_t \exp_t(- \mu_j  u_{ji})}{{Z}_{tj}}, \quad \text{where }{Z}_{tj} = \left\| \ve{q}_j \otimes_t \exp_t(- \mu_j  \ve{u}_{j}) \right\|_{1/t^*}\label{temperedweights}.
    \end{eqnarray}
  \STATE \hspace{1cm} Step 2.4 : choose leveraging coefficient $\alpha_j \in \mathbb{R}$;
\end{algorithmic}
\end{algorithm}

\paragraph{Models} A model (or classifier) is an element of $\mathbb{R}^{\mathcal{X}}$. For any model $H$, its empirical risk over $\mathcal{S}$ is $\zoloss(H,\mathcal{S}) \defeq (1/m) \cdot \sum_{i}\iver{y_i \neq \mathrm{sign}(H(\ve{x}_i))}$ where $\iver{.}$, Iverson's bracket \cite{kTN}, is the Boolean value of the inner predicate. We learn linear separators and \textit{clamped} linear separators. Let $(v_j)_{j\geq 1}$ be the terms of a series and $\delta \geq 0$. The clamped sum of the series is:
\begin{eqnarray*}
\sideset{_{(-\delta)}^{(\delta)}}{}\sum_{j\in [J]} v_j & \defeq & \min\left\{\delta,\max\left\{-\delta,v_J + \sideset{_{(-\delta)}^{(\delta)}}{}\sum_{j\in [J-1]} v_j \right\}\right\} \quad (\in [-\delta, \delta]), \mbox{ for $J>1$},
\end{eqnarray*}
and we define the base case $J=1$ by replacing the inner clamped sum by 0. Note that clamped summation is non-commutative, and so is different from clamping in $[-\delta, \delta]$ the whole sum itself\footnote{Fix for example $a=-1, b=3, \delta = 2$. For $v_1 = a, v_2 = b$, the clamped sum is $2 = -1 + 3$, but for $v_1 = b, v_2 = a$, the clamped sum becomes $1 = \textbf{2} - 1$.}. Given a set of so-called weak hypotheses $h_j \in \mathbb{R}^{\mathcal{X}}$ and leveraging coefficients $\alpha_j \in \mathbb{R}$ (for $j\in [J]$), the corresponding linear separators and clamped linear separators are
\begin{eqnarray}
H_J(\ve{x}) \defeq \sum_{j\in [J]} \alpha_j h_j(\ve{x}) & ; & H^{(\delta)}_J(\ve{x}) \defeq \sideset{_{(-\delta)}^{(\delta)}}{}\sum_{j\in [J]} \alpha_j h_j(\ve{x}).\label{defLinClin}
\end{eqnarray}

\paragraph{Tempered boosting and its general convergence} Our algorithm, \tada, is presented in Algorithm \ref{tadalgo}. Before analyzing its convergence, several properties are to be noted for \tadaboost: first, it keeps the appealing property, introduced by \adaboost, that examples receiving the wrong class by the current weak classifier are reweighted higher (if $\mu_j > 0$). Second, the leveraging coefficients for weak classifiers in the final classifier ($\alpha_j$s) are not the same as the ones used to update the weights ($\mu_j$s), unless $t=1$. Third and last, because of the definition of $\exp_t$ \eqref{defExpLogT}, if $t<1$, tempered weights can switch off and on, \textit{i.e.}, become 0 if an example is "too well classified" and then revert back to being $> 0$ if the example becomes wrongly classified by the current weak classifier (if $\mu_j > 0$). To take into account those zeroing weights, we denote $[m]_j^\dagger \defeq \{i : q_{ji} = 0\}$ and $m_j^\dagger \defeq \mathrm{Card}([m]_j^\dagger)$ ($\forall j \in [J]$). Let $R_j \defeq \max_{i \not\in [m]_j^\dagger} |y_i h_j(\ve{x}_i)| / q_{ji}^{1-t}$ and ${q^\dagger_j} \defeq \max_{i \in [m]_j^\dagger} |y_i h_j(\ve{x}_i)|^{1/(1-t)} / R^{1/(1-t)}_j$. It is worth noting that ${q^\dagger_j}$ is homogeneous to a tempered weight. 
  \begin{theorem}\label{thBOOST1}
    At iteration $j$, define the weight function ${q}'_{ji} \defeq q_{ji}$ if $i \not\in [m]_j^\dagger$ and ${q^\dagger_j}$ otherwise; set
    \begin{eqnarray}
 \rho_{j} & \defeq & \frac{1}{(1+m_j^\dagger {q^\dagger_j}^{2-t})R_j} \cdot \sum_{i\in [m]} q'_{ji} y_i h_j(\ve{x}_i) \quad (\in [-1,1]).\label{defRHO}
    \end{eqnarray}
    In algorithm \tada, consider the choices (with the convention $\prod_{k=1}^{0} v_k \defeq 1$)
    \begin{eqnarray}
 \mu_j \defeq -\frac{1}{R_j} \cdot \log_t \left(\frac{1-\rho_{j}}{M_{1-t}(1-\rho_{j},1+\rho_{j})}\right) & , & \alpha_j \defeq m^{1-t^*}\cdot \left(\prod_{k=1}^{j-1} {Z}_k\right)^{1-t} \cdot \mu_j,\label{defRHOMU}
    \end{eqnarray}
    where $M_q(a,b) \defeq ((a^q+b^q)/2)^{1/q}$ is the $q$-power mean. Then for any $H \in \{H_J,H^{(\nicefrac{1}{1-t})}_J\}$, its empirical risk is upperbounded as:
    \begin{eqnarray}
      \zoloss(H,\mathcal{S}) \leq \prod_{j=1}^J {Z}^{2-t}_{tj} \leq \prod_{j=1}^J \left(1+m_j^\dagger {q^\dagger_j}^{2-t}\right) \cdot K_t(\rho_j) \quad \left(K_t(z) \defeq \frac{1-z^2}{M_{1-t}(1-z,1+z)}\right).\label{eqGua01}
      \end{eqnarray}
    \end{theorem}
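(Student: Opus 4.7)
The plan is to follow AdaBoost's classical margin--product template, adapted to the tempered arithmetic. The argument splits into (A) a telescoping step yielding $\zoloss(H,\mathcal{S})\leq\prod_j Z_{tj}^{2-t}$ for either $H\in\{H_J,H^{(1/(1-t))}_J\}$, and (B) a per-iteration step bounding $Z_{tj}^{2-t}$ via a two--point convexity inequality together with the announced choice of $\mu_j$.

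\textbf{Part (A), telescoping.} Raising \eqref{temperedweights} to the power $1-t$ and using $(a\otimes_t b)^{1-t}=[a^{1-t}+b^{1-t}-1]_+$ together with $\exp_t(z)^{1-t}=[1+(1-t)z]_+$ turns the update into the affine-with-clipping recursion $(q_{(j+1)i}Z_{tj})^{1-t}=[q_{ji}^{1-t}-(1-t)\mu_j u_{ji}]_+$. Iterating this from $q_{1i}^{1-t}=m^{-(1-t^*)}$ and substituting the definition of $\alpha_j$ produces
$$m^{t^*}\,q_{(J+1)i}\prod_{k=1}^J Z_{tk}\;=\;\exp_t(V_{(J+1)i}),$$
where $V_{(J+1)i}$ denotes the partial sum $-\sum_{j=1}^J\alpha_j u_{ji}$ clamped from below at $-1/(1-t)$ after every addition (the clamp activates exactly at the ``switch-off'' sets $[m]_j^\dagger$, and a weight can revert from zero to positive at a later step once the partial sum climbs back above $-1/(1-t)$). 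Because one-sided clamping from below only increases values, $V_{(J+1)i}\geq -y_iH_J(\ve{x}_i)$, and a parallel induction using $\min\{\delta,x\}\leq x$ also yields $V_{(J+1)i}\geq -y_iH^{(1/(1-t))}_J(\ve{x}_i)$. Combining the displayed identity with the co-simplex constraint $\sum_i q_{(J+1)i}^{2-t}=1$ and the identity $t^*(2-t)=1$ produces $(1/m)\sum_i\exp_t(V_{(J+1)i})^{2-t}=\prod_j Z_{tj}^{2-t}$. Whenever $y_iH(\ve{x}_i)\leq 0$ one has $V_{(J+1)i}\geq 0$, hence $\exp_t(V_{(J+1)i})^{2-t}\geq 1=\iver{y_iH(\ve{x}_i)\leq 0}$; averaging over $i$ yields the first inequality of \eqref{eqGua01}.

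\textbf{Part (B), per-iteration bound.} Expand $Z_{tj}^{2-t}=\sum_i[q_{ji}^{1-t}+\exp_t(-\mu_j u_{ji})^{1-t}-1]_+^{(2-t)/(1-t)}$. For $i\notin[m]_j^\dagger$, the substitutions $v_{ji}\defeq u_{ji}/(R_jq_{ji}^{1-t})\in[-1,1]$ and $\tilde\mu_j\defeq\mu_jR_j$ factor the summand as $q_{ji}^{2-t}\exp_t(-\tilde\mu_j v_{ji})^{2-t}$. For $i\in[m]_j^\dagger$, the analogous substitution with $q^\dagger_j$ replacing $q_{ji}$ and $v^\dagger_{ji}\defeq u_{ji}/(R_j(q^\dagger_j)^{1-t})\in[-1,1]$ (bounded by the defining property of $q^\dagger_j$) upper-bounds the summand by $(q^\dagger_j)^{2-t}\exp_t(-\tilde\mu_j v^\dagger_{ji})^{2-t}$. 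For $t\in[0,1]$, $v\mapsto\exp_t(-\tilde\mu v)^{2-t}$ is convex on $[-1,1]$ (composition of the convex non-decreasing $z\mapsto z^{2-t}$ with the convex $\exp_t$), so the two-point upper bound $\exp_t(-\tilde\mu v)^{2-t}\leq\tfrac{1-v}{2}\exp_t(\tilde\mu)^{2-t}+\tfrac{1+v}{2}\exp_t(-\tilde\mu)^{2-t}$ applies. Summing, the constant contributions collect into $S\defeq 1+m_j^\dagger(q^\dagger_j)^{2-t}$ via $\sum_i q_{ji}^{2-t}=1$, while the identities $q_{ji}^{2-t}v_{ji}=q_{ji}u_{ji}/R_j$ and $(q^\dagger_j)^{2-t}v^\dagger_{ji}=q^\dagger_j u_{ji}/R_j$ collect the linear-in-$v$ terms into $S\rho_j$ by the very definition \eqref{defRHO}; this yields
$$Z_{tj}^{2-t}\;\leq\;S\cdot\left[\tfrac{1-\rho_j}{2}\exp_t(\tilde\mu_j)^{2-t}+\tfrac{1+\rho_j}{2}\exp_t(-\tilde\mu_j)^{2-t}\right].$$
Minimizing the bracket over $\tilde\mu_j$ (using $\exp_t'(z)=\exp_t(z)^t$) gives the first-order condition $(1-\rho_j)\exp_t(\tilde\mu_j)=(1+\rho_j)\exp_t(-\tilde\mu_j)$, whose solution is the announced $\tilde\mu_j=-\log_t((1-\rho_j)/M_{1-t}(1-\rho_j,1+\rho_j))$. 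Back-substituting $\exp_t(\pm\tilde\mu_j)=(1\pm\rho_j)/M_{1-t}(1-\rho_j,1+\rho_j)$ collapses the bracket to $(1-\rho_j)(1+\rho_j)/M_{1-t}(1-\rho_j,1+\rho_j)=K_t(\rho_j)$, giving the second inequality.

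\textbf{Main obstacle.} The most delicate aspect is the coherent treatment of the switch-off indices $[m]_j^\dagger$ throughout: verifying in (A) that the unrolling identity remains exact each time the $[\cdot]_+$ clamp activates (so that $V_{(J+1)i}$ really is the one-sided clamped sum, even across on/off/on transitions of a single example's weight), and checking in (B) that the virtual weight $q^\dagger_j$ is small enough to force $v^\dagger_{ji}\in[-1,1]$ yet the induced factorization $(q^\dagger_j)^{2-t}\exp_t(\cdots)^{2-t}$ still upper-bounds the true summand at $q_{ji}=0$. Once this bookkeeping is settled, the convexity bound, its minimization over $\tilde\mu_j$, and the collapse of the bracket to $K_t(\rho_j)$ reduce to direct calculation.
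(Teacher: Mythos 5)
Your proposal is correct and follows essentially the same route as the paper's proof: your Part (A) is the paper's weight-unravelling identity plus the sandwich property of (doubly) clamped summations used to telescope $\zoloss$ against $\prod_j Z_{tj}^{2-t}$, and your Part (B) is the paper's two-point chord bound with the virtual weight $q^\dagger_j$ on the switched-off indices, followed by the same first-order choice of $\mu_j$ and the same collapse of the bracket to $K_t(\rho_j)$. The only glossed details (equality of the $\otimes_t$ and $\exp_t(\log_t q - \mu u)$ forms on the co-simplex, the induction comparing singly and doubly clamped sums, and the interiority of the critical point, which the paper isolates in Lemma \ref{lemBMUj} but which your plug-in argument does not strictly need) are minor and do not affect correctness.
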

    (Proof in \supplement, Section \ref{proof-thBOOST1}) We jointly comment \tadaboost~and Theorem \ref{thBOOST1} in two parts.
    \paragraph{Case $t\rightarrow 1^-$:} \tadaboost~converges to \adaboost~and Theorem \ref{thBOOST1} to its convergence analysis: \tada~converges to \adaboost~as presented in \cite[Figure 1]{ssIBj}: the tempered simplex becomes the probability simplex, $\otimes_t$ converges to regular multiplication, weight update \eqref{temperedweights} becomes \adaboost's, $\alpha_j \rightarrow \mu_j$ in \eqref{defRHOMU} and finally the expression of $\mu_j$ converges to \adaboost's leveraging coefficient in \cite{ssIBj} ($\lim_{t\rightarrow 1} M_{1-t}(a,b) = \sqrt{ab}$). Even guarantee \eqref{eqGua01} converges to \adaboost's popular guarantee of \cite[Corollary 1]{ssIBj} ($\lim_{t\rightarrow 1} K_t(z) = \sqrt{1-z^2}$, $m_j^\dagger = 0$). Also, in this case, we learn only the unclamped classifier since $\lim_{t\rightarrow 1^-} H^{(\nicefrac{1}{1-t})}_J = H_J$.
    \setlength{\intextsep}{0pt}
\setlength{\columnsep}{10pt}
\paragraph{Case $t < 1$:} Let us first comment on the convergence rate. The proof of Theorem \ref{thBOOST1} shows that $K_t(z) \leq \exp(-z^2/(2t^*))$. Suppose there is no weight switching, so $m_j^\dagger = 0, \forall j$ (see Section \ref{sec-exp}) and, as in the boosting model, suppose there exists $\gamma > 0$ such that $|\rho_{j}| \geq \gamma, \forall j$. Then \tadaboost~is guaranteed to attain empirical risk below some $\varepsilon > 0$ after a number of iterations equal to $J = (2t^* / \gamma^2) \cdot \log (1/\varepsilon)$.
\begin{figure}[t]
  \begin{center}
    \includegraphics[trim=0bp 0bp 0bp 0bp,clip,width=0.5\columnwidth]{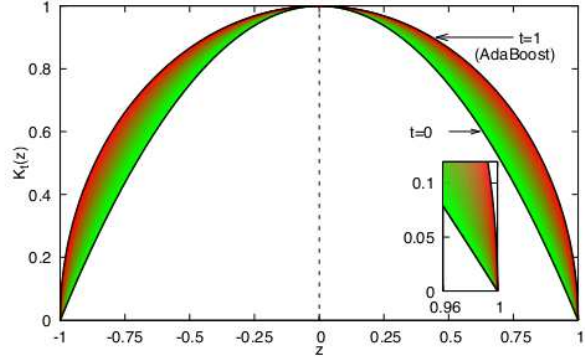}
    \end{center}
  \caption{Plot of $K_t(z)$ in \eqref{eqGua01}, $t\in[0,1]$ (the smaller, the better for convergence).}\label{figRHO}
\end{figure}
$t^*$ being an increasing function of $t \in [0,1]$, we see that \tadaboost~is able to slightly improve upon \adaboost's celebrated rate \cite{tAP}. However, $t^* = 1/2$ for $t=0$ so the improvement is just on the hidden constant. This analysis is suited for small values of $|\rho_{j}|$ and does not reveal an interesting phenomenon for better weak hypotheses. Figure \ref{figRHO} compares $K_t(z)$ curves ($K_1(z) \defeq \lim_{t\rightarrow 1} K_t(z) = \sqrt{1-z^2}$ for \adaboost, see \cite[Corollary 1]{ssIB}), showing the case $t<1$ can be substantially better, especially when weak hypotheses are not "too weak". If $m_j^\dagger > 0$, switching weights can impede our convergence analysis, though we still show that exponential convergence is always possible if $m_j^\dagger {q^\dagger_j}^{2-t}$ is small enough. A good criterion to train weak hypotheses is then the optimization of the edge $\rho_j$, thus using $\ve{q}'_j$ normalized in the simplex.
Other key features of \tadaboost~are as follows. First, the weight update and leveraging coefficients of weak classifiers are bounded because $|\mu_j| < 1/(R_j(1-t))$ (\supplement, Lemma \ref{lemBMUj}) (this is not the case for $t\rightarrow 1^-$). This guarantees that new weights are bounded before normalization (unlike for $t \rightarrow 1^-$). Second, we remark that $\mu_j \neq \alpha_j$ if $t\neq 1$. Factor $m^{1-t^*}$ is added for convergence analysis purposes; we can discard it to train the unclamped classifier: it does not change its empirical risk. This is, however, different for factor $\prod_{k=1}^{j-1} {Z}_k$: from \eqref{eqGua01}, we conclude that this is an indication of how well the past ensemble performs. As it gets better and better, it progressively dampens the leverage of the next weak classifiers, a phenomenon that does not occur in boosting, where an excellent weak hypothesis on the current weights can have a leveraging coefficient so large that it wipes out the classification of the past ones. This can be useful to control numerical instabilities. We also conjecture that a finer analysis would prove a similar phenomenon on margin optimization as, \textit{e.g.}, in \cite{sfblBT}.
    \paragraph{The tempered exponential loss} In the same way as \adaboost~introduced the now famous exponential loss, \eqref{eqGua01} recommends to minimize the normalization coefficient, following \eqref{defWCODoptM},
    \begin{eqnarray}
      {Z}^{2-t}_{tj}(\mu) & = & \left\| \exp_t\left(\log_t \ve{q}_j - \mu \cdot \ve{u}_j\right) \right\|_{1/t^*}^{1/t^*}\quad \left(\mbox{with } u_{ji} \defeq y_i h_j(\ve{x}_i)\right)\label{defTEXPloss}.
    \end{eqnarray}
    In an equivalent form, one can easily show that
    \begin{eqnarray}
      {Z}_{tj}^{2-t}(\mu) & = & \sum_i q^{2-t}_{ji} \cdot \exp^{2-t}_t\left(-\frac{\mu u_{ji}}{q^{1-t}_{ji} }\right)\nonumber\\
      & = & \sum_i p_{ji} \exp^{2-t}_t\left(-\frac{\mu u_{ji}}{p^{1-t^*}_{ji} }\right)\label{defTEXPLOSS},
    \end{eqnarray}
    where we remind that $\ve{p}_j$ is the co-density of $\ve{q}_j$ (Section \ref{sec-defs-setting}), which is in the probability simplex. Hence, \eqref{defTEXPLOSS} is a classical expectation, though the $\exp_t$ also integrates the co-density weights. In boosting, examples that have received the right classification receive smaller weights. What \eqref{defTEXPLOSS} shows is that in this expectation, the contribution of examples with the right class ($\mu u_{ij} > 0$) dimishes even further. We cannot easily unravel the normalization coefficient to make appear a complete classifier, because of the presence of $[.]_+$ in $\exp_t$ \eqref{defExpLogT}. However, if $\max_i |h_j(\ve{x}_i)|$ is small enough for any $j \in [J]$, we easily obtain from \eqref{defTEXPloss} after dropping $j$ and the argument $\mu$ for readability, that we end up minimizing a loss being
    \begin{eqnarray}
\temperedexploss{t}(H, \mathcal{S}) \left(\defeq {Z}^{2-t}_{tj}(\mu)\right) & = & \frac{1}{m} \cdot \sum_i \exp^{2-t}_t\left(- y_i H(\ve{x}_i)\right), \label{defEXPRISK}
    \end{eqnarray}
    where we have absorbed in $H$ the factor $m^{1-t^*}$ appearing in the $\exp_t$ (scaling $H$ by a positive value does not change its empirical risk). One retrieves \adaboost's exponential loss by setting $t\rightarrow 1^-$ in \eqref{defEXPRISK}, so we denote with a slight abuse of language \eqref{defEXPRISK}, and \eqref{defTEXPloss} by extension, the \textit{tempered exponential loss}. Notice that one can choose to minimize $\temperedexploss{t}(H, \mathcal{S})$ disregarding any constraint on $|H|$.

\section{A broad family of boosting-compliant proper losses for decision trees}\label{sec-newc}

\paragraph{Losses for class probability estimation} When it comes to tabular data, it has long been known that some of the best models to linearly combine with boosting are decision trees (DT, \cite{fhtAL}). Decision trees, like other domain-partitioning classifiers, are not trained by minimizing a \textit{surrogate loss} defined over real-valued predictions, but defined over \textit{class probability estimation} (CPE, \cite{rwID}), those estimators being posterior estimation computed at the leaves. Let us introduce a few definitions for those. A CPE loss $\loss : \{-1,1\} \times [0,1]
\rightarrow \mathbb{R}$ is
\begin{eqnarray}
\loss(y,u) & \defeq & \iver{y=1}\cdot \partialloss{1}(u) +
                     \iver{y=-1}\cdot \partialloss{-1}(u). \label{eqpartialloss}
\end{eqnarray}
Functions $\partialloss{1}, \partialloss{-1}$ are called \textit{partial} losses. The pointwise conditional risk of local guess $u \in [0,1]$ with respect to a ground truth $v \in [0,1]$ is: 
\begin{eqnarray}
  \poirisk(u,v) & \defeq & v\cdot \partialloss{1}(u) + (1-v)\cdot \partialloss{-1}(u) \label{eqpoirisk}.
\end{eqnarray}
A loss is \textit{proper} iff for any ground truth $v \in [0,1]$, $\poirisk(v,v) = \inf_u \poirisk(u,v)$, and strictly proper iff $u=v$ is the sole minimizer \cite{rwID}. The (pointwise) \textit{Bayes} risk is $\poibayesrisk(v) \defeq \inf_u \poirisk(u,v)$. The log/cross-entropy-loss, square-loss, Matusita loss are examples of CPE losses. One then trains a DT minimizing the expectation of this loss over leaves' posteriors, $\expect_\leaf [\poibayesrisk(p_\leaf)]$, $p_\leaf$ being the local proportion of positive examples at leaf $\leaf$ -- or equivalently, the local posterior.

\paragraph{Deriving CPE losses from (ada)boosting} Recently, it was shown how to derive in a general way a CPE loss to train a DT from the minimization of a surrogate loss with a boosting algorithm \cite{mnwRC}. In our case, the surrogate would be ${Z}_{tj}$ \eqref{defTEXPloss} and the boosting algorithm, \tadaboost. The principle is simple and fits in four steps: (i) show that a DT can equivalently perform simple linear classifications, (ii) use a weak learner that designs splits and the boosting algorithm to fit the leveraging coefficient and compute those in closed form, (iii) simplify the expression of the loss using those, (iv) show that the expression simplified is, in fact, a CPE loss. To get (i), we remark that a DT contains a tree (graph). One can associate to each node a real value. To classify an observation, we sum all reals from the root to a leaf and decide on the class based on the sign of the prediction, just like for any real-valued predictor. Suppose we are at a leaf. What kind of weak hypotheses can create splits "in disguise"? Those can be of the form
\begin{eqnarray*}
h_j(\ve{x}) & \defeq & \iver{x_k \geq a_j} \cdot b_j, \quad a_j, b_j \in \mathbb{R}, 
\end{eqnarray*}
where the observation variable $x_k$ is assumed real valued for simplicity and the test $\iver{x_k \geq a_j}$ splits the leaf's domain in two non-empty subsets. This creates half of the split. $\overline{h}_j(\ve{x}) \defeq \iver{x_k < a_j} \cdot -b_j$ creates the other half of the split. Interestingly, $h_j$ satisfies the weak learning assumption iff $\overline{h}_j$ does \cite{mnwRC}. So we get the split design part of (ii). We compute the leveraging coefficients at the new leaves from the surrogate's minimization / boosting algorithm, end up with new real predictions at the new leaves (instead of the original $b_j, -b_j$), push those predictions in the surrogate loss for (iii), simplify it and, quite remarkably end up with a loss of the form $\expect_\leaf [\mathrm{L}(p_\leaf)]$, where $\mathrm{L}$ turns out to be the pointwise Bayes risk $\poibayesrisk$ of a proper loss \cite{mnwRC}. 

In the case of \cite{mnwRC}, it is, in fact, granted that we end up with such a "nice" CPE loss because of the choice of the surrogates at the start. In our case, however, nothing grants this \textit{a priori} if we start from the tempered exponential loss $\temperedexploss{t}$ \eqref{defTEXPloss} so it is legitimate to wonder whether such a chain of derivations (summarized) can happen to reverse engineer an interesting CPE loss: 
\begin{eqnarray}
\temperedexploss{t} \stackrel{?}{\mapsto} \mathrm{L} \stackrel{?}{\mapsto} \bayestrisk \stackrel{?}{\mapsto} \partialtloss{1}; \partialtloss{-1} \quad (\mbox{proper ? strictly proper ? for which $t$s ?, ...})
  \end{eqnarray}
  When such a complete derivation happens until the partial losses $\partialloss{1}; \partialloss{-1}$ and their properties, we shall write that minimizing $\temperedexploss{t}$ \textit{elicits} the corresponding loss and partial losses.
  \begin{theorem}\label{th-CPE}
    Minimizing $\temperedexploss{t}$ elicits the CPE loss we define as the \textbf{tempered loss}, with partial losses:
    \begin{eqnarray}
\partialtloss{1}(u) \defeq \left(\frac{1-u}{M_{1-t}(u, 1-u)}\right)^{2-t} & , & \partialtloss{-1}(u) \defeq \partialtloss{1}(1-u), \quad (t \in [-\infty,2]).
    \end{eqnarray}
    The tempered loss is symmetric, differentiable, strictly proper for $t\in (-\infty, 2)$ and proper for $t=-\infty$.
  \end{theorem}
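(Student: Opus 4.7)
The plan is to follow the four-step elicitation chain summarized just before the theorem. At a leaf with local positive proportion $p$, restrict the surrogate $Z_{tj}^{2-t}$ from \eqref{defTEXPLOSS} to the examples at that leaf: for a constant prediction $\mu$, the per-leaf contribution collapses to $p \cdot \exp_t^{2-t}(-\mu) + (1-p) \cdot \exp_t^{2-t}(\mu)$. Using $\frac{d}{dx}\exp_t^{2-t}(x) = (2-t)\exp_t(x)$, the stationary equation reads $(1-p)\exp_t(\mu) = p \exp_t(-\mu)$. Combined with the identity $\exp_t(\mu)^{1-t} + \exp_t(-\mu)^{1-t} = 2$, which follows at once from \eqref{defExpLogT}, this gives the closed-form optimum $\exp_t(\mu^\ast) = p / M_{1-t}(p,1-p)$ and $\exp_t(-\mu^\ast) = (1-p)/M_{1-t}(p,1-p)$. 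Substituting back and simplifying via the power-mean identity collapses the per-leaf contribution to $2p(1-p)/M_{1-t}(p, 1-p)$, which I will identify with the pointwise Bayes risk $\poibayesrisk(p)$ of the claimed CPE loss.

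Next I would read off the partial losses by reparametrizing the candidate posterior $u \in [0,1]$ via $\mu = \mu^\ast(u)$. The coefficient of $p$ in $p\cdot\exp_t^{2-t}(-\mu^\ast(u)) + (1-p)\cdot\exp_t^{2-t}(\mu^\ast(u))$ is exactly $\exp_t^{2-t}(-\mu^\ast(u)) = \left((1-u)/M_{1-t}(u,1-u)\right)^{2-t}$, which matches $\partialtloss{1}(u)$; the coefficient of $1-p$ matches $\partialtloss{-1}(u) = (u/M_{1-t}(u,1-u))^{2-t}$. Symmetry $\partialtloss{-1}(u) = \partialtloss{1}(1-u)$ is then immediate, and the check $\poirisk(p,p) = \poibayesrisk(p)$ is direct. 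Differentiability of both partial losses on $(0,1)$ follows because $M_{1-t}(u, 1-u)$ is smooth and positive there for every $t < 2$, with continuous extension to the endpoints.

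For properness I would invoke the standard equivalence that a CPE loss is (strictly) proper iff its pointwise Bayes risk is (strictly) concave. Concavity is free: $\poibayesrisk(p)$ is an infimum of functions affine in $p$, hence concave, yielding propriety throughout $t \in [-\infty, 2)$. For \emph{strict} propriety on $t \in (-\infty, 2)$, it suffices to show strict monotonicity of the minimizer $\mu^\ast(p)$ in $p$, so that distinct values of $p$ select distinct affine supports in the infimum and therefore strict concavity. This reduces to showing that $\mu \mapsto \exp_t(\mu)/\exp_t(-\mu)$ is strictly increasing on its domain: taking a logarithmic derivative gives $\frac{1}{1+(1-t)\mu} + \frac{1}{1-(1-t)\mu}$, which is strictly positive on the admissible range for every $t \neq 1$, with the $t=1$ case reducing to $e^{2\mu}$ by continuity. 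Finally, at $t = -\infty$, $M_{1-t}(p, 1-p) \to \max(p, 1-p)$ gives $\poibayesrisk(p) = 2\min(p, 1-p)$, which is concave but piecewise linear — proper but not strictly proper, matching the statement.

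The delicate part is arguing strict concavity of $\poibayesrisk$ uniformly over $t \in (-\infty, 2)$: the power-mean $M_{1-t}(\cdot, \cdot)$ flips its qualitative behavior as $1-t$ crosses zero at $t = 1$ and as $|1-t|$ grows, so a direct second-derivative attack on $2p(1-p)/M_{1-t}(p,1-p)$ would force a messy case split. The minimizer-monotonicity route above sidesteps this by working with the envelope (infimum-of-affine) representation of $\poibayesrisk$ and isolating the essential ingredient, namely that $\exp_t(\mu)/\exp_t(-\mu)$ is strictly $t$-monotone on its admissible range.
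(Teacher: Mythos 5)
Your computation of the per-leaf minimizer and of the resulting Bayes risk is sound and essentially reproduces the paper's Part (B): solving the stationarity condition $(1-p)\exp_t(\mu)=p\exp_t(-\mu)$ together with $\exp_t(\mu)^{1-t}+\exp_t(-\mu)^{1-t}=2$ (valid because the optimal $\mu$ stays in the unclamped range, cf.\ Lemma \ref{lemBMUj}) gives $\exp_t(\pm\mu^\ast)=\{p,1-p\}/M_{1-t}(p,1-p)$, hence the value $2p(1-p)/M_{1-t}(p,1-p)$ and the claimed partial losses; this matches \eqref{predHLeaf}--\eqref{cbrTL} up to the bookkeeping of the initial weights and the parent's contribution, which the paper carries explicitly and you absorb by rescaling.

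The gap is in the properness argument. The criterion you invoke --- ``a CPE loss is (strictly) proper iff its pointwise Bayes risk is (strictly) concave'' --- cannot do the work you ask of it: the lower envelope $v\mapsto\inf_u \poirisk(u,v)$ is an infimum of affine functions and is therefore concave for \emph{every} CPE loss, proper or not, so ``concavity is free'' certifies nothing; likewise, strict concavity of that envelope neither implies nor is implied by strict properness of a \emph{given} pair of partial losses. What must be shown is that the infimum over $u$ is attained at $u=v$, and only there. The paper does this head-on: it solves $\partial_u\poirisk(u,v)=0$ and exhibits $\partial_u\poirisk(u,v)$ as a positive multiple of $u-v$ for all $t<2$, which gives strict properness in one stroke and avoids the case split over $t$ you were worried about. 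Your own material actually contains a correct repair: since $\poirisk(u,v)=G(\mu^\ast(u),v)$ with $G(\mu,v)\defeq v\exp_t^{2-t}(-\mu)+(1-v)\exp_t^{2-t}(\mu)$ convex in $\mu$ with unique minimizer $\mu^\ast(v)$ for $v\in(0,1)$, and since your logarithmic-derivative computation shows $u\mapsto\mu^\ast(u)$ is strictly increasing, it follows that $\poirisk(u,v)>\poirisk(v,v)$ for $u\neq v$ --- but that chain, not the concavity-of-the-envelope claim, is the argument that needs to be written. (The $t=-\infty$ endpoint should likewise be settled at the level of the limiting partial loss, as the paper does with $\partialinftloss{1}(u)=2\cdot\iver{u\leq 1/2}$, rather than only through the limit of the Bayes risk.)
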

  Differentiability means the partial losses are differentiable, and symmetry follows from the relationship between partial losses \cite{nnOT} (the proof, in \supplement, Section \ref{proof-th-CPE}, derives the infinite case, $\partialinftloss{1}(u) = 2 \cdot \iver{u \leq 1/2}$). Let us explicit the Bayes risk of the tempered loss and a key property.
  \begin{lemma}\label{lemCONT}
The Bayes risk of the tempered loss is ($M_q$ defined in Theorem \ref{thBOOST1}):
  \begin{eqnarray}
\bayestrisk(u) & = & \frac{2u(1-u)}{M_{1-t}(u, 1-u)},\label{pcbr}
  \end{eqnarray}
  and it satisfies $\forall u \in [0,1], \forall z \in [2 \cdot \min\{u,1-u\}, 1]$, $\exists t\in [-\infty, 2]$ such that $\bayestrisk(u) = z$.
\end{lemma}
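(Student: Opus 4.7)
The plan has two parts, matching the two claims in the lemma.

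For the formula \eqref{pcbr}, I would invoke the strict properness established in Theorem \ref{th-CPE}, which gives $\bayestrisk(u) = \poirisk(u,u) = u\cdot \partialtloss{1}(u) + (1-u)\cdot \partialtloss{-1}(u)$. Substituting the definitions of the partial losses and factoring out the common denominator yields
\begin{eqnarray*}
\bayestrisk(u) & = & \frac{u(1-u)^{2-t} + (1-u)u^{2-t}}{M_{1-t}(u,1-u)^{2-t}} \;=\; \frac{u(1-u)\bigl(u^{1-t}+(1-u)^{1-t}\bigr)}{M_{1-t}(u,1-u)^{2-t}}.
\end{eqnarray*}
The crucial algebraic identity is that the definition of the power mean gives $u^{1-t}+(1-u)^{1-t} = 2\,M_{1-t}(u,1-u)^{1-t}$, so the exponent in the denominator collapses from $2-t$ to $1$ and the claimed formula drops out. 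Routine limits handle the boundary cases $t\in\{1, -\infty, 2\}$ by continuity of $M_q$ in $q$.

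For the range claim, the plan is to view $\bayestrisk(u)$ as a function of $t$ with $u$ fixed and to use the intermediate value theorem. First I would record that for fixed positive $a,b$, the map $q\mapsto M_q(a,b)$ is continuous on $\mathbb{R}\cup\{\pm\infty\}$ (with the usual extensions $M_{-\infty}=\min$, $M_{+\infty}=\max$) and strictly monotone nondecreasing in $q$. Since $1-t$ is strictly decreasing in $t$, the function $t\mapsto M_{1-t}(u,1-u)$ is continuous and nonincreasing on $[-\infty, 2]$, hence $t\mapsto \bayestrisk(u) = 2u(1-u)/M_{1-t}(u,1-u)$ is continuous and nondecreasing on the same interval.

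It then suffices to evaluate the two endpoints. At $t=-\infty$ we get $M_{+\infty}(u,1-u)=\max\{u,1-u\}$, so $\bayestrisk(u) = 2u(1-u)/\max\{u,1-u\} = 2\min\{u,1-u\}$, using $u(1-u) = \min\cdot\max$. At $t=2$ we get the harmonic mean $M_{-1}(u,1-u) = 2u(1-u)/(u+(1-u)) = 2u(1-u)$, so $\bayestrisk(u)=1$. The intermediate value theorem then yields that for every $z\in [2\min\{u,1-u\}, 1]$ there exists $t\in[-\infty,2]$ achieving $\bayestrisk(u)=z$, which is the statement. No step is a serious obstacle; the only subtlety is verifying the continuity/monotonicity of the power mean at the $q=0$ crossing (geometric mean) and at $q=\pm\infty$, which are standard and can be cited.
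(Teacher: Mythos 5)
Your proposal is correct and follows essentially the same route as the paper: the formula \eqref{pcbr} is obtained there by the identical properness-plus-power-mean simplification (the algebra appears verbatim in the appendix proof of Theorem \ref{th-CPE}, where $u(1-u)^{2-t}+(1-u)u^{2-t}$ is factored and $u^{1-t}+(1-u)^{1-t}=2M_{1-t}^{1-t}(u,1-u)$ collapses the denominator), and the range claim is treated as trivial via exactly your continuity/monotonicity-in-$t$ argument with the endpoint evaluations $t=-\infty$ (giving $2\min\{u,1-u\}$) and $t=2$ (harmonic mean, giving $1$). The only caveat, which neither you nor the paper dwells on, is the degenerate case $u\in\{0,1\}$, where your "fixed positive $a,b$" assumption implicitly restricts to $u\in(0,1)$.
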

Lemma \ref{lemCONT}, whose proof is trivial, allows to show a key boosting result: $t=1$ retrieves Matusita's loss, for which a near optimal boosting rate is known \cite{kmOT} while $t = -\infty$ retrieves the empirical risk, which yields the worst possible guarantee \cite{kmOT}. In between, we have, for example, CART's Gini criterion for $t=0$, which yields an intermediate boosting guarantee. Continuity with respect to $t$ of the Bayes risks in between the empirical risk and Matusita's loss means the boosting ranges of the tempered loss cover \textit{the full known spectrum of boosting rates} for $t\in [-\infty,1]$. We know of no (differentiable and) proper CPE loss with such a coverage. Note that (i) this is a non-constructive result as we do not associate a specific $t$ for a specific rate, and (ii) the state of the art boosting rates for DT induction does not seem to cover the case $t\in(1,2)$, thus left as an open question.

  \setlength\tabcolsep{0pt}
  
\begin{table*} 
  \centering
  \includegraphics[trim=0bp 0bp 0bp 0bp,clip,width=\columnwidth]{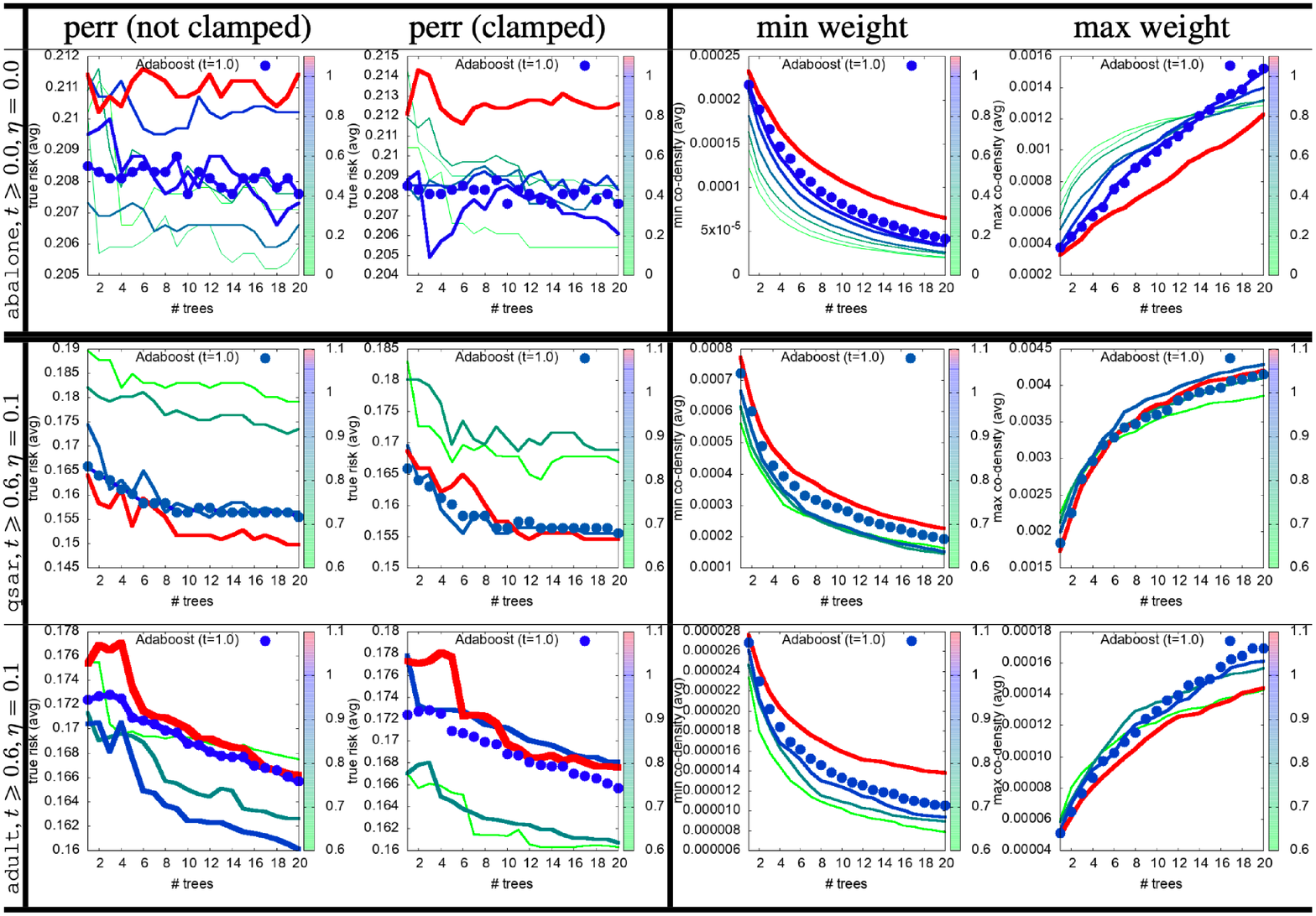}
\caption{Experiments on \tadaboost~comparing with \adaboost~($t=1$, bullets) on three domains (rows), displaying from left to right the estimated true error of non-clamped and clamped models, and the min and max codensity weights. These domains were chosen to give an example of three different situations: small values of $t$ perform well (\domainname{abalone}), the best performance is achieved by the largest $t$ (\textit{e.g.} \adaboost, \domainname{qsar}), and the worst performance is achieved by the largest $t$ (\domainname{adult}). Topmost row is without noise ($\eta = 0$) while the others are with $10\%$ training noise; $t$ scale displayed with varying color and width (colormap indicated on each plot). Averages shown for readability: see Table \ref{tab:stattests} for exhaustive statistical tests.}
    \label{tab:plots}
  \end{table*}

\section{Experiments}\label{sec-exp}

We have performed experiments on a testbed of 10 UCI domains, whose details are given in \supplement~(Section \ref{t-s-domains}). Experiments were carried out using a 10-fold stratified cross-validation procedure.

To compare \tadaboost~with \adaboost, we ran \tadaboost~with a first range of values of $t\in \{0.0, 0.2, 0.4, 0.6, 0.8, 0.9\}$. This is in the range of values covered by our convergence result for linear separators in Theorem \ref{thBOOST1}. Our results on decision tree induction cover a much wider range, in particular for $t \in (1,2)$. To assess whether this can be an interesting range to study, we added $t=1.1$ to the set of tested $t$ values. When $t>1$, some extra care is to be put into computations because the weight update becomes unbounded, in a way that is worse than \adaboost. Indeed, as can be seen from \eqref{temperedweights}, if $\mu_j  y_i h_j(\ve{x}_i) \leq -1/(t-1)$ (the example is badly classified by the current weak hypothesis, assuming wlog $\mu_j > 0$), the weight becomes infinity before renormalization. In our experiments, picking a value of $t$ close to $2$ clearly shows this problem, so to be able to still explore whether $t>1$ can be useful, we picked a value close to $1$, namely $t=1.1$, and checked that in our experiments this produced no such numerical issue. We also considered training clamped and not clamped models. 

All boosting models were trained for a number of $J=20$ decision trees. Each decision tree is induced using the tempered loss with the corresponding value of $t$ (see Theorem \ref{th-CPE}) following the classical top-down template, which consists in growing the current heaviest leaf in the tree and picking the best split for the leaf chosen. We implemented \tadaboost~exactly as in Section \ref{sec-defs-tada}, including computing leveraging coefficients as suggested. Thus, we do not scale models. More details are provided in \supplement.
In our experiments, we also included experiments on a phenomenon highlighted more than a decade ago \cite{lsRC} and fine-tuned more recently \cite{mnwRC}, the fact that a convex booster's model is the weakest link when it has to deal with noise in training data. This is an important task because while the tempered exponential loss is convex, it does not fit into the blueprint loss of \cite[Definition 1]{lsRC} because it is not $C^1$ if $t\neq 1$. One might thus wonder how \tadaboost~behaves when training data is affected by noise. Letting $\eta$ denote the proportion of noisy data in the training sample, we tried $\eta \in \{0.0, 0.1\}$. We follow the noise model of \cite{lsRC} and thus independently flip the true label of each example with probability $\eta$.

For each run, we recorded the average test error and the average maximum and minimum co-density weight. Table \ref{tab:plots} presents a subset of the results obtained on three domains. Table \ref{tab:stattests} presents a more synthetic view in terms of statistical significance of the results for $t\neq 1$ vs. $t=1$ (\adaboost). The table reports only results for $t\geq 0.6$ for synthesis purposes. Values $t<0.6$ performed on average slightly worse than the others \textit{but} on some domains, as the example of \domainname{abalone} suggests in Table \ref{tab:stattests} (the plots include all values of $t$ tested in $[0,1.1]$), we clearly got above-par results for such small values of $t$, both in terms of final test error but also fast early convergence to low test error. This comment can be generalized to all values of $t$.

The weights reveal interesting patterns as well. First, perhaps surprisingly, we never encountered the case where weights switch off, regardless of the value of $t$. The average minimum weight curves of Table \ref{tab:plots} generalize to all our tests (see \supplement). This does not rule out the fact that boosting for a much longer number of iterations might lead to weights switching off/on, but the fact that this does not happen at least early during boosting probably comes from the fact that the leveraging coefficients for weights ($\mu_.$) are bounded. Furthermore, their maximal absolute value is all the smaller as $t$ decreases to $0$. Second, there is a pattern that also repeats on the maximum weights, not on all domains but on a large majority of them and can be seen in \domainname{abalone} and \domainname{adult} in Table \ref{tab:plots}: the maximum weight of \adaboost~tends to increase much more rapidly compared to \tadaboost~with $t<1$. In the latter case, we almost systematically observe that the maximum weight tends to be upperbounded, which is not the case for \adaboost~(the growth of the maximal weight looks almost linear). Having bounded weights could be of help to handle numerical issues of (ada)boosting \cite{kIA}.

Our experiments certainly confirm the boosting nature of \tadaboost~if we compare its convergence to that of \adaboost: more often than not, it is in fact comparable to that of \adaboost. While this applies broadly for $t\geq 0.6$, we observed examples where much smaller values (even $t=0.0$) could yield such fast convergence. Importantly, this applies to clamped models as well and it is important to notice because it means attaining a low "boosted" error does not come at the price of learning models with large range. This is an interesting property: for $t=0.0$, we would be guaranteed that the computation of the clamped prediction is always in $[-1,1]$. Generalizing our comment on small values of $t$ above, we observed that an efficient tuning algorithm for $t$ could be able to get very substantial leverage over \adaboost. Table \ref{tab:stattests} was crafted for a standard limit $p$-val of 0.1 and "blurs" the best results that can be obtained. On several domains (\domainname{winered}, \domainname{abalone}, \domainname{eeg}, \domainname{creditcard}, \domainname{adult}), applicable $p$-values for which we would conclude that some $t\neq 1$ performs better than $t=1$ drop in between $7E-4$ and $0.05$. Unsurprisingly, \adaboost~also manages to beat significantly alternative values of $t$ in several cases. Our experiments with training noise ($\eta = 0.1$) go in the same direction. Looking at Table \ref{tab:plots}, one could eventually be tempted to conclude that $t$ slightly smaller than 1.0 may be a better choice than adaboosting ($t=1$), as suggested by our results for $t=0.9$, but we do not think this produces a general "rule-of-thumb". There is also no apparent "noise-dependent" pattern that would obviously separate the cases $t<1$ from $t=1$ even when the tempered exponential loss does not fit to \cite{lsRC}'s theory. Finally, looking at the results for $t>1$ also yields the same basic conclusions, which suggests that boosting can be attainable outside the range covered by our theory (in particular Theorem \ref{thBOOST1}).

All this brings us to the experimental conclusion that the question does not reside on opposing the case $t\neq 1$ to the case $t=1$. Rather, our experiments suggest -- pretty much like our theory does -- that the actual question resides in how to efficiently \textit{learn} $t$ on a domain-dependent basis. Our experiments indeed demonstrate that substantial gains could be obtained, but our simple attempts at performing such approaches, motivated by the following Section, did not produce a statistically impactful candidate so far.

  \setlength\tabcolsep{6pt}
  
\begin{table*}
  \centering
  \resizebox{\columnwidth}{!}{%
  \begin{tabular}{r?c|c|c|c|c|c|c|c?c|c|c|c|c|c|c|c}\Xhline{2pt}
    $\eta$ & \multicolumn{8}{c?}{$0.0$} & \multicolumn{8}{c}{$0.1$}\\
$t$    & \multicolumn{2}{c|}{$0.6$} & \multicolumn{2}{c|}{$0.8$} & \multicolumn{2}{c|}{$0.9$} & \multicolumn{2}{c?}{$1.1$}& \multicolumn{2}{c|}{$0.6$} & \multicolumn{2}{c|}{$0.8$} & \multicolumn{2}{c|}{$0.9$} & \multicolumn{2}{c}{$1.1$}\\
    $\iver{\mbox{clamped}}$ & 0 & 1  & 0 & 1  & 0 & 1  & 0 & 1 & 0 & 1  & 0 & 1  & 0 & 1  & 0 & 1 \\ \Xhline{2pt}
    $\#$better &  2 & 3  &  1 &  2 & 1  & 3  &  &  &  1 & 1 & 1 & 2 & 2 & 1 &  & \\\hline
    $\#$equivalent & 5 & 5 & 6 & 6 & 7 & 7 & 6 & 7 & 4 & 8 & 8 & 7 & 8 & 9 & 8 & 10 \\\hline
    $\#$worse & 3 & 2 & 3 & 2 & 2 & & 4 & 3 & 5 & 1 & 1 & 1 & & &2  & \\\Xhline{2pt}
\end{tabular}
}
\caption{Outcomes of student paired $t$-tests over 10 UCI domains, with training noise $\eta\in \{0.0,0.1\}$, for $t \in \{0.6, 0.8, 0.9, 1.0, 1.1\}$ and with / without clamped models. For each triple ($\eta$, $t$, $\iver{\mbox{clamped}}$), we give the number of domains for which the corresponding setting of \tadaboost~is statistically better than \adaboost ($\#$better), the number for which it is statistically worse ($\#$worse) and the number for which we cannot reject the assumption of identical performances. Threshold $p-$val = 0.1.}
    \label{tab:stattests}
  \end{table*}

\section{Discussion: loss selection with \tadaboost}\label{sec-disc}

Our theory yields a family of surrogate loss functions and corresponding training algorithms for the induction of linear combinations of classifiers or the induction of decision trees, with boosting-compliant rates on training. This formal picture looks quite uniform from the training standpoint but it is nuanced experimentally by the fact that the best values for $t$ for good generalization depend on the domain at hand (and on additional experimental factors, such as the presence of noise, the size of models, etc.). While this could be expected because generalization entails prediction on unseen data, there would be a specific reason, in our case, not just to have $t$ domain dependent, but in fact to tune $t$ during training \cite{nnOT}. As we explain, this problem entails questions relative to training and generalisation and to make a parallel with model selection in ML \cite[Chapter 4]{mrtFO}, we call it loss selection.

Suppose that $\mathcal{S}$ is sampled i.i.d. according to some unknown $\mathcal{D}$. Let $\mathcal{H}$ denote a set of linear combinations of decision trees. We know that with probability $\geq 1-\delta$, every $H \in \mathcal{H}$ has
\begin{eqnarray*}
\temperedexploss{t}(H, \mathcal{D}) \leq \temperedexploss{t}(H, \mathcal{S}) + \mathcal{O}(L_t R_m(\mathcal{H})) + Q(m,\delta),\label{eqGEN}
\end{eqnarray*}
where $L_t$ is the Lipschitz constant of the tempered exponential loss, $R_m(\mathcal{H})$ is the Rademacher complexity of $\mathcal{H}$ (a capacity parameter) and $Q$ does not depend on $t$ nor $H$, see \cite{bmRA} and \cite{mrtFO}. Since $\zoloss(H, \mathcal{D})\leq \temperedexploss{t}(H, \mathcal{D})$, \eqref{eqGEN} also brings guarantees on the true risk of every classifier in $\mathcal{H}$. Reasoning in terms of structural risk minimization through parameter $t$, we should pick $t$ not just to get a convenient loss to minimize empirically ($\temperedexploss{t}$)
but also to reduce the uncertainty on translating good empirical results in generalization as well, which would command to reduce $L_t$ as well. We assume that the domain in which we are allowed to fix $t$ is $[0,1]$.

Denote for short $u_t(z) \defeq \exp_t^{2-t}(-z)$. Without restriction on $z$, $u_t$ is not Lipschitz, but (i) our experiments display that we tend to learn classifiers with bounded magnitude and (ii) scaling a classifier by a positive factor does not change its empirical risk, so assume its argument satisfies $|z| \leq Z$ for some $Z>0$. In this case, the Lipschitz constant can be computed as $L_{t,Z} = (2-t) \exp_t(Z)$, which brings
\begin{eqnarray*}
  \frac{\partial L_{t,Z}}{\partial t} & = & \frac{\exp_t^t(Z)}{1-t^*}\cdot \left(\exp_t^{1-t}(Z)\cdot \left[\log\exp_t(Z)- (1-t^*)\right] -Z \right) \: \left(\mbox{recall } t^* \defeq \frac{1}{2-t}\right).
\end{eqnarray*}
We observe $\frac{\partial L_{t,0}}{\partial t} = -1$ but as $Z$ increases, the function can get $>0$: for example, if $Z=1/(1-t)$, the function is positive for $t\in [1/2, 1]$. From a ML standpoint, we thus have the following behaviour impacting the Lipschitz constant:
\begin{itemize}
\item [{(\textbf{L})}] early during training or when we have a small number of trees, we can have $Z$ very small, while later on, as the number of trees increases, the relevant $Z$ increases as well.
\end{itemize}
Hence, to mitigate the impact of the capacity parameter in \eqref{eqGEN}, {(\textbf{L})} suggests to increase $t$ early during training while eventually reducing it later on. What about the loss that we minimize during training, \eqref{defTEXPloss} ? It can be shown that
\begin{eqnarray}
\frac{\partial {Z}^{2-t}_{tj}}{\partial t} &= & \frac{1}{|1-t|} \sum_{i \in [m]} \left( \frac{1}{2-t}\cdot q^{2-t}_{(j+1)i} \cdot \log q^{2-t}_{(j+1)i} - q_{(j+1)i}q^{1-t}_{ji}\cdot \log q^{2-t}_{ji} \right),
\end{eqnarray}
which is perhaps not easily readable, but when $\ve{q}_j > \ve{0}$, has a nice equivalent expression. For $t \in [0,1)$, let $H_t(z) \defeq (1/(2-t)) \cdot \left(z^{2-t} \cdot \log z^{2-t} - z^{2-t}\right)$. 
  \begin{lemma}
    Suppose $\ve{q}_j > \ve{0}$. Recalling $D_.(.\|.)$ is the notation for a Bregman divergence, we have
    \begin{eqnarray}
\frac{\partial {Z}^{2-t}_{tj}}{\partial t} &= & \frac{1}{|1-t|} \cdot D_{H_t} (\ve{q}_{j+1}\| \ve{q}_j) + H_t(\ve{q}_j). \label{eqDERPARZ}
    \end{eqnarray}
  \end{lemma}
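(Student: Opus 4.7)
The plan is to start from the explicit derivative formula displayed immediately above the lemma and reshape it into the Bregman form, using the co-simplex identities $\sum_i q_{ji}^{2-t} = \sum_i q_{(j+1)i}^{2-t} = 1$ (which hold because both weights live in $\tilde{\Delta}_m$) to absorb the bookkeeping constants. The positivity assumption $\ve{q}_j>\ve{0}$ is only used to make $H_t'$ well-defined pointwise, with no boundary contributions from the $[\cdot]_+$ inside $\exp_t$.

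The first step is to compute $H_t'$ in closed form. Writing $H_t(z) = \phi(z^{2-t})$ with $\phi(w) \defeq (w\log w - w)/(2-t)$ and using $\phi'(w) = (\log w)/(2-t)$ together with the chain rule yields
\[
H_t'(z) \;=\; z^{1-t}\log z^{2-t}.
\]
With this, the standard Bregman expansion becomes
\[
D_{H_t}(\ve{q}_{j+1}\|\ve{q}_j) \;=\; \sum_i\!\Bigl[H_t(q_{(j+1)i})-H_t(q_{ji}) - q_{ji}^{1-t}\log q_{ji}^{2-t}\,(q_{(j+1)i}-q_{ji})\Bigr].
\]
Substituting the explicit form of $H_t$ and grouping yields four kinds of terms: the ``target'' atoms $\sum_i q_{(j+1)i}^{2-t}\log q_{(j+1)i}^{2-t}$ and the cross sum $\sum_i q_{(j+1)i}q_{ji}^{1-t}\log q_{ji}^{2-t}$, together with a residual $\sum_i q_{ji}^{2-t}\log q_{ji}^{2-t}$ and plain power sums $\sum_i q^{2-t}$.

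The co-simplex constraints collapse the plain-power sums to $1$, turning every $\sum_i q^{2-t}$ contribution into a constant. The constants from $H_t(q_{(j+1)i})$ and $-H_t(q_{ji})$ cancel inside $D_{H_t}$, while the additive term $H_t(\ve{q}_j) = \sum_i H_t(q_{ji})$ supplies exactly the constant that, combined with the $\tfrac{1-t}{2-t}\sum_i q_{ji}^{2-t}\log q_{ji}^{2-t}$ residue produced by the expansion, reassembles the pre-computed derivative. Dividing by $|1-t|=1-t$ (valid for $t\in[0,1)$) then matches the prefactor $1/|1-t|$. The main obstacle is the coefficient bookkeeping: the $(1-t)$ and $(2-t)$ denominators spawned by the chain rule have to interlock with the outer $1/|1-t|$ and with the added $H_t(\ve{q}_j)$ so that every stray $\sum_i q_{ji}^{2-t}\log q_{ji}^{2-t}$ fragment telescopes away, leaving only the two atoms appearing in the target derivative formula.
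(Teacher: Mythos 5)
You take the displayed derivative formula above the lemma as ground truth and claim the Bregman regrouping closes after the co-simplex identities; the paper itself gives no proof of this lemma (the appendix only proves Theorems A--C), so the only question is whether your bookkeeping actually works out --- and it does not. Your formula $H_t'(z)=z^{1-t}\log z^{2-t}$ is correct. Now write $A \defeq \sum_i q_{(j+1)i}^{2-t}\log q_{(j+1)i}^{2-t}$, $B \defeq \sum_i q_{(j+1)i}\,q_{ji}^{1-t}\log q_{ji}^{2-t}$, $C \defeq \sum_i q_{ji}^{2-t}\log q_{ji}^{2-t}$. Using $\sum_i q_{(j+1)i}^{2-t}=\sum_i q_{ji}^{2-t}=1$, the separable Bregman expansion gives
\[
D_{H_t}(\ve{q}_{j+1}\|\ve{q}_j) \;=\; \frac{A}{2-t} - B + \frac{1-t}{2-t}\cdot C,
\qquad
H_t(\ve{q}_j) \;=\; \frac{C-1}{2-t},
\]
so that, for $t\in[0,1)$ where $|1-t|=1-t$,
\[
\frac{1}{|1-t|}\cdot D_{H_t}(\ve{q}_{j+1}\|\ve{q}_j) + H_t(\ve{q}_j)
\;=\;
\frac{1}{1-t}\left(\frac{A}{2-t}-B\right) + \frac{2C-1}{2-t}.
\]
The first term is exactly the displayed derivative, but the residual $\frac{2C-1}{2-t}$ does not vanish: $C=\sum_i p_{ji}\log p_{ji}\in[-\log m,0]$ (e.g.\ $C=-\log m$ for uniform weights). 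The two $C$-fragments --- the one produced by the Bregman part, with coefficient $\frac{1}{1-t}\cdot\frac{1-t}{2-t}=\frac{1}{2-t}$, and the one inside $H_t(\ve{q}_j)$ --- enter with the \emph{same} sign, so they add instead of cancelling, and a constant $-\frac{1}{2-t}$ survives as well. The step you summarize as ``every stray $\sum_i q_{ji}^{2-t}\log q_{ji}^{2-t}$ fragment telescopes away'' is precisely where the argument fails; it is asserted, not verified, and it is false under the stated definitions.

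Concretely, the identity that \emph{does} follow from the displayed formula with this $H_t$ is
\[
\frac{\partial {Z}^{2-t}_{tj}}{\partial t} \;=\; \frac{1}{1-t}\cdot D_{H_t}(\ve{q}_{j+1}\|\ve{q}_j) \;-\; \frac{1}{2-t}\sum_i q_{ji}^{2-t}\log q_{ji}^{2-t},
\]
whose additive term is $\frac{1}{2-t}$ times the Shannon entropy of the co-density $\ve{p}_j$, not $H_t(\ve{q}_j)=\frac{C-1}{2-t}$. In other words, the displayed derivative and the lemma's right-hand side are mutually inconsistent as written, so no amount of regrouping can derive one from the other. A complete proof has to resolve this discrepancy rather than inherit it: either differentiate $Z^{2-t}_{tj}(\mu)=\sum_i \exp_t^{2-t}(\log_t q_{ji}-\mu u_{ji})$ with respect to $t$ from first principles (specifying what is held fixed and whether $\ve{q}_{j+1}$ means the normalized or unnormalized update) and determine which of the two expressions is the correct derivative, or exhibit the corrected constant/sign convention for $H_t(\ve{q}_j)$ under which the rewriting is an identity, and only then perform the bookkeeping you sketch.
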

  The arguments of $H_t$ are \acrotem s, so \eqref{eqDERPARZ} is in disguise (and up to multiplicative factors) the subtraction of two non-negative quantities: the Kullback-Leibler divergence between two successive co-densities and Shannon's entropy of the previous co-density. The total is thus negative especially when $\ve{q}_{j+1}$ and $\ve{q}_j$ come closer to each other (a Bregman divergence satisfies the identity of indiscernibles). From a ML standpoint, we thus have the following behaviour impacting the loss:
  \begin{itemize}
  \item [{(\textbf{Z})}] early during training or as long as \tadaboost~is far from the training optimum of ${Z}^{2-t}_{tj}$, $\ve{q}_{j+1}$ and $\ve{q}_j$ are substantially different from each other while as \tadaboost~gets closer to the optimal classifier on training, $\ve{q}_{j+1} \rightarrow \ve{q}_j$.
  \end{itemize}
  Hence, increasing $t$ early during training or reducing it later on, as suggested by (\textbf{L}) may lead from {(\textbf{Z})} to a worse upperbound of the empirical risk in \eqref{eqGEN}. Also true is the fact that reducing the loss parameter via {(\textbf{Z})} may worsen the dependency on the Lipschitz constant from (\textbf{L}). There should thus be data- and classifier-dependent ways to tune $t$. We have made simple attempts at loss selection but none has proven impactful in terms of results.

\section{Conclusion}\label{sec-conc}


\adaboost~is one of the original and simplest Boosting algorithms.
In this paper, we generalized \adaboost~to maintaining a
tempered measure over the examples by minimizing a tempered relative entropy.
We kept the setup as simple as possible and therefore
focused on generalizing \adaboost.
However, more advanced boosting algorithms have been designed based on
relative entropy minimization subject to linear constraints.
There are versions that constrain the edges of all past hypotheses to be zero \cite{totcorr}.
Also, when the maximum margin of the game is larger than zero, then \adaboost~cycles over
non-optimal solutions \cite{cycles}. Later Boosting algorithms provably
optimize the margin of the solution by adjusting the
constraint value on the dual edge away from zero (see, e.g.,
\cite{adastar}).
Finally, the ELRP-Boost algorithm optimizes a trade-off
between relative entropy and the edge \cite{erlp}.
We conjecture that all of these orthogonal
direction have generalizations to the tempered case as well
and are worth exploring.

These are theoretical directions that, if successful, would contribute
to bringing more tools to the design of rigorous boosting
algorithms. This is important because boosting suffers several
impediments, not all of which we have mentioned: for example, to get
statistical consistency for \adaboost, it is known that early stopping
is mandatory \cite{btAI}. More generally, non-Lipschitz losses like
the exponential loss seem to be harder to handle compared to Lipschitz
losses \cite{tBW} (but they yield, in general, better convergence rates on training). The validity of the weak learning assumption of boosting can also be discussed, in particular, regarding the negative result of \cite{lsRC} which advocates, beyond just better (ada)boosting, for boosting for \textit{more} classes of models / architectures \cite{mnwRC}. Alongside this direction, we feel that our experiments on noise handling give a preliminary account of the fact that there is no "one $t$ fits all" case, but a much more in-depth analysis is required to elicit / tune a "good" $t$. This is a crucial issue for noise handling \cite{mnwRC}, but as we explain in Section \ref{sec-exp}, this could bring benefits in much wider contexts as well. Putting this in the context of loss selection (Section \ref{sec-disc}) suggests potential directions to solve the problem. We tried a few elementary solutions alongside those directions but did not get anything substantial in terms of results. this is another direction worth exploring.





\bibliographystyle{plain}
\bibliography{bibgen}

\newpage
\appendix
\onecolumn
\renewcommand\thesection{\Roman{section}}
\renewcommand\thesubsection{\thesection.\arabic{subsection}}
\renewcommand\thesubsubsection{\thesection.\thesubsection.\arabic{subsubsection}}

\renewcommand*{\thetheorem}{\Alph{theorem}}
\renewcommand*{\thelemma}{\Alph{lemma}}
\renewcommand*{\thecorollary}{\Alph{corollary}}

\renewcommand{\thetable}{A\arabic{table}}

\begin{center}
\Huge{Appendix}
\end{center}

To
differentiate with the numberings in the main file, the numbering of
Theorems, etc. is letter-based (A, B, ...).

\section*{Table of contents}

\noindent \textbf{A short primer on Tempered Exponential Measures} \hrulefill Pg \pageref{sec-sup-primer}\\

\noindent \textbf{Supplementary material on proofs} \hrulefill Pg
\pageref{sec-sup-pro}\\
\noindent $\hookrightarrow$ Proof of Theorem \ref{thENTBOOST} \hrulefill Pg \pageref{proof-thENTBOOST}\\
\noindent $\hookrightarrow$ Proof of Theorem \ref{thBOOST1} \hrulefill Pg \pageref{proof-thBOOST1}\\
\noindent $\hookrightarrow$ Proof of Theorem \ref{th-CPE} \hrulefill Pg \pageref{proof-th-CPE}\\

\noindent \textbf{Supplementary material on experiments} \hrulefill Pg
\pageref{sec-sup-exp}\\
\noindent $\hookrightarrow$ Domains \hrulefill Pg \pageref{sec-doms}\\
\noindent $\hookrightarrow$ Implementation details and full set of experiments on linear combinations of decision trees \hrulefill Pg \pageref{sec-full-exp}

\newpage 

\section{A short primer on Tempered Exponential Measures} \label{sec-sup-primer}

We describe here the minimal amount of material necessary to understand how our approach to boosting connects to these measures. We refer to \cite{anwCA} for more details. With a slight abuse of notation, we define the perspective transforms $(\log_t)^*(z) \defeq t^* \cdot \log_{t^*} (z/t^*)$ and $(\exp_t)^*(z) \defeq t^* \cdot \exp_{t^*} (z/t^*)$. Recall that $t^* \defeq 1/(2-t)$.
\begin{definition}\cite{anwCA}
  A tempered exponential measure (\acrotem) family is a set of unnormalized densities in which each element admits the following canonical expression:
  \begin{eqnarray}
\label{eq:exp_t_density_form}
q_{t|\ve{\theta}} (\ve{x}) \defeq \frac{\exp_t(\ve{\theta}^\top \ve{\phi}(\ve{x}))}{\exp_t(G_t(\ve{\theta}))} = \exp_t(\ve{\theta}^\top \ve{\phi}(\ve{x}) \ominus_t G_t(\ve{\theta})) \quad\left(a \ominus_t b \defeq \frac{a-b}{1+(1-t)b} \right),
  \end{eqnarray}
  where $\ve{\theta}$ is the element's natural parameter, $\ve{\phi}(\ve{x})$ is the sufficient statistics and
  \begin{eqnarray*}
    G_t(\ve{\theta}) & = & (\log_t)^* \int (\exp_t)^* (\ve{\theta}^\top \ve{\phi}(\ve{x}))\mathrm{d}\xi
  \end{eqnarray*}
  is the (convex) cumulant, $\xi$ being a base measure (implicit).
\end{definition}
Except for $t=1$ (which reduces a \acrotem~family to a classical exponential family), the total mass of a \acrotem~is not 1 (but it has an elegant closed form expression \cite{anwCA}). However, the exponentiated $q_{t|\ve{\theta}}^{1/t^*}$ does sum to 1. In the discrete case, this justifies extending the classical simplex to what we denote as the co-simplex.
\begin{definition}\label{defCOSIMPLEX}
The co-simplex of $\mathbb{R}^m$, $\tilde{\Delta}_m$ is defined as $\tilde{\Delta}_m \defeq \{\ve{q}\in \mathbb{R}^m: \ve{q} \geq \ve{0} \wedge \ve{1}^\top \ve{q}^{1/t^*} = 1\}$.
\end{definition}
The connection between \tadaboost's update and \acrotem's is immediate from the equation's update (\eqref{defWoptM} in \mainfile). We can show that $\tilde{\Delta}_m$ can also be represented as \acrotem s.
\begin{lemma}\label{lemDeltam}
$\tilde{\Delta}_m$ is a (discrete) family of tempered exponential measures.
\end{lemma}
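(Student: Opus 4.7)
The plan is to exhibit $\tilde{\Delta}_m$ as the image of an explicit discrete TEM parameterization. I would take the base measure $\xi$ to be counting measure on $[m]$ and the sufficient statistic $\ve{\phi}:[m]\to\mathbb{R}^m$ to be the one-hot encoding $\phi_j(i)\defeq \iver{i=j}$, so that $\ve{\theta}^\top\ve{\phi}(i)=\theta_i$. Using the identity $\exp_t(a\ominus_t b)=\exp_t(a)/\exp_t(b)$ already recorded alongside \eqref{eq:exp_t_density_form}, the canonical density specializes to $q_{t|\ve{\theta}}(i) = \exp_t(\theta_i)/\exp_t(G_t(\ve{\theta}))$.

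Next I would verify that the cumulant $G_t$ is precisely what enforces the co-simplex constraint. Leveraging the identity $\exp_t(z)^{1/t^*}=\exp_{t^*}(z/t^*)$, which follows from the elementary computation $(1-t^*)/t^*=1-t$, the definition of $G_t$ (an $(\log_t)^*$ of a $\xi$-integral of $(\exp_t)^*$ terms) unfolds to $\exp_t(G_t(\ve{\theta}))^{1/t^*} = \sum_{i\in[m]} \exp_t(\theta_i)^{1/t^*}$. Substituting back gives $\sum_i q_{t|\ve{\theta}}(i)^{1/t^*}=1$, hence $q_{t|\ve{\theta}}\in\tilde{\Delta}_m$ for every $\ve{\theta}\in\mathbb{R}^m$. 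For the converse inclusion, given any $\ve{q}\in\tilde{\Delta}_m$ with $\ve{q}>\ve{0}$, I would set $\theta_i\defeq \log_t q_i$; since $\exp_t\log_t=\mathrm{id}$ we get $\exp_t(\theta_i)=q_i$, and combining this with the co-simplex constraint forces $\exp_t(G_t(\ve{\theta}))=1$, i.e.\ $G_t(\ve{\theta})=0$, so that the canonical form recovers $q_{t|\ve{\theta}}(i)=q_i$ exactly.

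The only delicate point is extending the parameterization to boundary elements where some $q_i=0$. For $t<1$ this is painless: $\log_t 0 = -1/(1-t)$ is finite and the clipping convention in \eqref{defExpLogT} yields $\exp_t(-1/(1-t))=0$, so boundary elements live in the TEM family with genuine finite parameters. For $t>1$ one has $\log_t 0 = -\infty$ and these elements are recovered only as limits $\theta_i\to -\infty$, exactly as for ordinary exponential families. The main obstacle, if one can call it that, is really just keeping bookkeeping straight on the perspective-transform identity $\exp_t(z)^{1/t^*}=\exp_{t^*}(z/t^*)$ linking the $t$-cumulant to co-simplex normalization; once that is in hand the proof is essentially a one-line construction.
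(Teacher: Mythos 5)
Your proposal is correct, and it reaches the same conclusion by a slightly different parameterization than the paper. The paper mimics the classical exponential-family argument (Amari's construction): it fixes a reference atom and uses the $m-1$ natural parameters $\theta_i \defeq \log_t(q_i/q_m)$ together with the cumulant $G_t(\ve{\theta}) \defeq \log_t(1/q_m)$, so that checking $\exp_t(\theta_n)/\exp_t(G_t(\ve{\theta})) = q_n$ is a one-line verification for each atom, with co-simplex normalization inherited from $\ve{q}\in\tilde{\Delta}_m$ rather than recomputed. You instead use the overcomplete one-hot sufficient statistic with $m$ parameters $\theta_i=\log_t q_i$, and you derive the cumulant directly from its integral definition via the identity $\exp_t(z)^{1/t^*}=\exp_{t^*}(z/t^*)$ (your check $(1-t^*)/t^*=1-t$ is right), obtaining $\exp_t(G_t(\ve{\theta}))^{1/t^*}=\sum_i \exp_t(\theta_i)^{1/t^*}$ and hence both inclusions: every such canonical measure lies in $\tilde{\Delta}_m$, and every $\ve{q}>\ve{0}$ in $\tilde{\Delta}_m$ is attained with $G_t(\ve{\theta})=0$. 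What each buys: the paper's route is shorter and gives a minimal ($m-1$-dimensional) parameterization, but it leaves implicit that its $G_t$ coincides with the cumulant of the TEM definition and silently assumes positive coordinates (its $\theta_i=\log_t(q_i/q_m)$ degenerates if $q_m=0$, or if some $q_i=0$ with $t>1$); your route makes the cumulant/normalization link explicit and treats the boundary honestly, noting that for $t<1$ zero weights are realized with finite parameters thanks to the clipping in $\exp_t$, while for $t>1$ they arise only as limits — a point the paper's proof does not address at all.
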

\begin{proof}
We proceed as in \cite[Section 2.2.2]{aIG} for exponential families: let ${\ve{q}} \in \tilde{\Delta}_m$, which we write
\begin{eqnarray}
  {q}(n) & \defeq & \sum_{i\in [m]} {q}_i \cdot \iver{i=n}, n \in [m].\label{defPN}
\end{eqnarray}
$\iver{\pi}$, the Iverson bracket \cite{kTN}, takes value 1 if Boolean predicate $\pi$ is true (and 0 otherwise). We create $m-1$ natural parameters and the cumulant,
\begin{eqnarray*}
\theta_i \defeq \log_t \frac{{q}_i}{{q}_m}, i \in [m-1] & ; & G_t(\ve{\theta}) \defeq \log_t \frac{1}{{q}_m},
\end{eqnarray*}
and end up with \eqref{defPN} also matching the atom mass function
\begin{eqnarray*}
{q}(n) & = & \frac{\exp_t\left(\sum_{i\in [m-1]} \theta_i \cdot \iver{i=n}\right)}{\exp_t G_t(\ve{\theta})},
\end{eqnarray*}
which clearly defines a tempered exponential measure over $[m]$. This ends the proof of Lemma \ref{lemDeltam}.
  \end{proof}

\newpage
  
\section{Supplementary material on proofs} \label{sec-sup-pro}

\subsection{Proof of Theorem  \ref{thENTBOOST}}\label{proof-thENTBOOST}

To improve readability, we remove dependency in $t$ in normalization coefficient $Z$. We use notations from \cite[proof of Theorem 3.2]{anwCA} and denote the Lagrangian
\begin{eqnarray}
\mathcal{L} & = & \Delta(\tilde{\ve{q}} \| \ve{q}) + \lambda \left(\sum_i \tilde{q}^{1/t^*}_{i}-1\right) - \sum_i \nu_i \tilde{q}_{i} + \mu \sum_i \tilde{q}_{i} u_i,
\end{eqnarray}
which yields $\partial \mathcal{L} / \partial \tilde{q}_i = \log_t \tilde{q}_i  - \log_t q_{i} + \lambda \tilde{q}^{1-t}_{i} - \nu_i + \mu u_i$ ($\lambda$ absorbs factor $2-t$), and, rearranging (absorbing factor $1-t$ in $\nu_i$),
\begin{eqnarray}
(1+(1-t)\lambda) \tilde{q}^{1-t}_i & = & \nu_i + 1 + (1-t) (\log_t q_{i} - \mu  u_i), \forall i \in [m]. \label{lagAN}
\end{eqnarray}
We see that $\lambda \neq -1/(1-t)$ otherwise the Lagrangian drops its dependence in the unknown. In fact, the solution necessarily has $1+(1-t)\lambda > 0$. To see this, we distinguish two cases: (i) if some $u_k = 0$, then since $\log_t q_{k} \geq -1/(1-t)$ there would be no solution to \eqref{lagAN} if $1+(1-t)\lambda < 0$ because of the KKT conditions $\nu_i \geq 0, \forall i\in [m]$;  (ii) otherwise, if all $u_k \neq 0, \forall k \in [m]$, then there must be two coordinates of different signs otherwise there is no solution to our problem \eqref{defBATEP} (main file, we must have indeed $\tilde{\ve{q}} \geq 0$ because of the co-simplex constraint). Thus, there exists at least one coordinate $k\in [m]$ for which $- (1-t) \mu  u_k > 0$ and since $\log_t q_{k} \geq -1/(1-t)$ (definition of $\log_t$) and $\nu_k \geq 0$ (KKT conditions), the RHS of \eqref{lagAN} for $i=k$ is $>0$, preventing $1+(1-t)\lambda < 0$ in the LHS.

We thus have $1+(1-t)\lambda > 0$. The KKT conditions $(\nu_i \geq 0, \nu_i \tilde{q}_{i} = 0, \forall i\in [m])$ yield the following: $1 + (1-t) (\log_t q_{i} - \mu  u_i) > 0$ imply $\nu_i = 0$ and $1 + (1-t) (\log_t q_{i} - \mu  u_i) \leq 0$ imply $\tilde{q}^{1-t}_i = 0$ so we get the necessary form for the optimum:
\begin{eqnarray}
\tilde{q}_i & = & \frac{\exp_t\left(\log_t q_{i} - \mu
                  u_i\right)}{\exp_t \lambda} \nonumber\\
  & = & \frac{q_{i} \otimes_t \exp_t(- \mu  u_i)}{Z_t}\label{defWopt},
\end{eqnarray}
where $\lambda$ or $Z_t \defeq \exp_t \lambda$ ensures normalisation for the co-density. Note that we have a simplified expression for the co-density: 
\begin{eqnarray}
p_i & = & \frac{p_{ji} \otimes_{t^*} \exp_{t^*}(- \mu  u_i/t^*)}{Z^{\mbox{\tiny{co}}}_t},\label{defWCODopt}
\end{eqnarray}
with $Z^{\mbox{\tiny{co}}}_t \defeq Z^{1/t^*}_t = \sum_i p_{ji} \otimes_{t^*} \exp_{t^*}(- \mu  u_i/t^*)$. For the analytic form in \eqref{defWopt}, we can simplify the Lagrangian to a dual form that depends on $\mu$ solely:
\begin{eqnarray}
\mathcal{D}(\mu) & = & \Delta(\tilde{\ve{q}}(\mu) \| \ve{q}) + \mu \sum_i \tilde{q}_{i}(\mu) u_i.
\end{eqnarray}
The proof of \eqref{defLossFromZ} (main file) is based on a key Lemma.
\begin{lemma}\label{lemLogZ}
For any $\tilde{\ve{q}}$ having form \eqref{defWopt} such that $\tilde{\ve{q}}^\top \ve{u} = 0$, $\mathcal{D}(\mu) = -\log_t Z_t(\mu)$.
\end{lemma}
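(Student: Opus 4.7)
The plan is to reduce the Lagrangian $\mathcal{D}(\mu)$ to $D_t(\tilde{\ve{q}}\|\ve{q})$ using the orthogonality constraint, then evaluate the divergence explicitly using the form \eqref{defWopt} together with both normalization conditions ($\ve{q}, \tilde{\ve{q}} \in \tilde{\Delta}_m$). First, since $\tilde{\ve{q}}^\top \ve{u} = 0$, the last term of $\mathcal{D}(\mu)$ vanishes and it remains to show that the tempered relative entropy itself equals $-\log_t Z_t(\mu)$.

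To compute $D_t(\tilde{\ve{q}}\|\ve{q})$, I would split it into the two natural pieces
\[
\sum_i \tilde{q}_i \bigl(\log_t \tilde{q}_i - \log_t q_i\bigr) \quad\text{and}\quad \sum_i \bigl(\log_{t-1} q_i - \log_{t-1} \tilde{q}_i\bigr).
\]
Using $\log_{t-1}(z) = (z^{2-t}-1)/(2-t)$, the second sum equals $(2-t)^{-1}(\sum_i q_i^{2-t} - \sum_i \tilde{q}_i^{2-t}) = 0$ by the co-simplex constraint applied to both $\ve{q}$ and $\tilde{\ve{q}}$. For the first sum, I would rewrite \eqref{defWopt} as $\log_t \tilde{q}_i = (\log_t q_i - \mu u_i) \ominus_t \log_t Z_t$ on the support of $\tilde{\ve{q}}$, which yields
\[
\log_t \tilde{q}_i - \log_t q_i \;=\; \frac{-(\log_t Z_t)\, q_i^{1-t} - \mu u_i}{Z_t^{1-t}},
\]
after substituting $1 - Z_t^{1-t} = -(1-t)\log_t Z_t$ and $1 + (1-t)\log_t q_i = q_i^{1-t}$. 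Multiplying by $\tilde{q}_i$, summing, and using $\tilde{\ve{q}}^\top \ve{u} = 0$ reduces the first sum to $-(\log_t Z_t)\cdot(\sum_i \tilde{q}_i q_i^{1-t})/Z_t^{1-t}$.

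The crux is then the identity $\sum_i \tilde{q}_i q_i^{1-t} = Z_t^{1-t}$. I would derive it by multiplying the defining relation $\tilde{q}_i^{1-t} Z_t^{1-t} = q_i^{1-t} - (1-t)\mu u_i$ (valid on $\{i : \tilde{q}_i > 0\}$, from the KKT analysis preceding the lemma) by $\tilde{q}_i$ and summing: the LHS gives $Z_t^{1-t}\sum_i \tilde{q}_i^{2-t} = Z_t^{1-t}$ by the co-simplex constraint, while the RHS gives $\sum_i \tilde{q}_i q_i^{1-t} - (1-t)\mu \sum_i \tilde{q}_i u_i = \sum_i \tilde{q}_i q_i^{1-t}$ by orthogonality. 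Combining everything gives $D_t(\tilde{\ve{q}}\|\ve{q}) = -\log_t Z_t$, hence $\mathcal{D}(\mu) = -\log_t Z_t(\mu)$.

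The main (and essentially only) obstacle is bookkeeping around indices $i \in [m]_j^\dagger$ where $\tilde{q}_i = 0$: the form \eqref{defWopt} only satisfies the clean relation $\tilde{q}_i^{1-t}Z_t^{1-t} = q_i^{1-t} - (1-t)\mu u_i$ on the support. This is harmless because (i) the contributions $\tilde{q}_i \log_t \tilde{q}_i$ and $\tilde{q}_i \log_t q_i$ vanish at zero entries (using $\log_t 0 = -1/(1-t)$, finite for $t<1$), (ii) the $\log_{t-1}$ sum still telescopes via the co-simplex constraint since $\tilde{q}_i^{2-t} = 0$ at those entries, and (iii) both the orthogonality constraint and the co-simplex identity effectively restrict to the support. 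I would note this support reduction explicitly once and then proceed with the clean algebraic identities above.
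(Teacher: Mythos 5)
Your proof is correct and follows essentially the same route as the paper's: expand $D_t$ using the closed form of $\tilde{\ve{q}}$ on its support, kill the $\log_{t-1}$ terms via the co-simplex normalizations and the cross term via $\tilde{\ve{q}}^\top \ve{u}=0$, and reduce everything to the quantity $\sum_i \tilde{q}_i q_i^{1-t}$, with your treatment of the zero-weight indices matching the paper's restriction to $[m]_*$. The only cosmetic difference is that you evaluate $\sum_i \tilde{q}_i q_i^{1-t} = Z_t^{1-t}$ directly from the update relation plus normalization and orthogonality, whereas the paper reaches the same conclusion by closing the self-consistent equation $\Delta = -\frac{\lambda}{1+(1-t)\lambda}\left(1-(1-t)\Delta\right)$ to get $\Delta = -\lambda$.
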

\begin{proof}
  For any $\tilde{\ve{q}}$ having form \eqref{defWopt}, denote
  \begin{eqnarray}
    [m]_* & \defeq & \{i : \tilde{q}_i \neq 0\}.\label{defMSTAR}
  \end{eqnarray}
  We first compute (still using $\lambda \defeq \log_t Z_t(\mu)$ for short):
\begin{eqnarray}
  A & \defeq & \sum_i \tilde{q}_i \cdot \log_t \tilde{q}_i\nonumber\\
    & = & \sum_{i\in [m]_*} \tilde{q}_i \cdot \log_t\left( \frac{\exp_t\left(\log_t q_{i} - \mu  u_i\right)}{\exp_t \lambda}\right)\nonumber\\
    & = & \sum_{i\in [m]_*} \tilde{q}_i \cdot \left(\frac{1}{1-t}\cdot\left[\frac{1+(1-t)(\log_t q_{i} - \mu  u_i)}{1+(1-t)\lambda}-1\right]\right)\nonumber\\
    & = & \frac{1}{1-t} \cdot \sum_{i\in [m]_*}\tilde{q}_i \cdot \left(\frac{q_{i}^{1-t} - (1-t) \mu u_i}{1+(1-t)\lambda} \right) - \frac{1}{1-t} \cdot \sum_{i\in [m]_*}\tilde{q}_i \nonumber\\
    & = & -\frac{\mu}{1+(1-t)\lambda} \cdot  \sum_{i\in [m]_*}\tilde{q}_i u_i + \frac{1}{(1-t)(1+(1-t)\lambda)} \cdot \sum_{i\in [m]_*}\tilde{q}_i q_{i}^{1-t} - \frac{1}{1-t} \cdot \sum_{i\in [m]_*}\tilde{q}_i\nonumber\\
  & = & \underbrace{-\frac{\mu}{1+(1-t)\lambda} \cdot \tilde{\ve{q}}^\top \ve{u}}_{\defeq B} + \underbrace{\frac{1}{(1-t)(1+(1-t)\lambda)} \cdot \sum_{i\in [m]}\tilde{q}_i q_{i}^{1-t}}_{\defeq C} - \underbrace{\frac{1}{1-t} \cdot \sum_{i\in [m]}\tilde{q}_i}_{\defeq D} \label{simplA}.
\end{eqnarray}
  Remark that in the last identity, we have put back summations over the complete set $[m]$ of indices. We note that $B=0$ because $\tilde{\ve{q}}^\top \ve{u} = 0$. We then remark that without replacing the expression of $\tilde{\ve{q}}$, we have in general for any $\tilde{\ve{q}}\in \tilde{\Delta}_m$:
\begin{eqnarray*}
  E & \defeq & \sum_{i\in[m]} \tilde{q}_i \cdot (\log_t \tilde{q}_i - \log_t q_{i})\\
    & = & \sum_{i\in[m]} \tilde{q}_i \cdot \left(\frac{1}{1-t}\cdot\left(\tilde{q}_i^{1-t} - 1\right)-\frac{1}{1-t}\cdot\left(q_{i}^{1-t} - 1\right)\right)\\
    & = & \frac{1}{1-t} \cdot \sum_{i\in[m]}\tilde{q}^{2-t}_i - \frac{1}{1-t} \cdot \sum_{i\in[m]}\tilde{q}_i q_{i}^{1-t}\\
  & = & \frac{1}{1-t} \cdot\left(1-\sum_{i\in[m]}\tilde{q}_i q_{i}^{1-t}\right),
\end{eqnarray*}
and we can check that for any $\tilde{\ve{q}}, \ve{q} \in \tilde{\Delta}_m$, $E = \Delta(\tilde{\ve{q}} \| \ve{q})$. We then develop $\Delta(\tilde{\ve{q}} \| \ve{q})$ with a partial replacement of $\tilde{\ve{q}}$ by its expression:
\begin{eqnarray*}
  \Delta(\tilde{\ve{q}} \| \ve{q}) & = & A - \sum_i \tilde{q}_i \log_t q_{i}\\
                                             & = & A - \frac{1}{1-t} \cdot \sum_i \tilde{q}_i q_{i}^{1-t} + \frac{1}{1-t} \cdot \sum_i \tilde{q}_i \\
                                             & = & C - \frac{1}{1-t} \cdot \sum_i \tilde{q}_i q_{i}^{1-t} \\
                                             & = & \frac{1}{1-t} \cdot \left(\frac{1}{1+(1-t)\lambda} - 1\right) \cdot \sum_i \tilde{q}_i q_{i}^{1-t} \\
                                             & = & - \frac{\lambda}{1+(1-t)\lambda} \cdot \sum_i \tilde{q}_i q_{i}^{1-t} \\
  & = &  - \frac{\lambda}{1+(1-t)\lambda} \cdot \left(1 - (1-t) \cdot \Delta(\tilde{\ve{q}} \| \ve{q})\right).
\end{eqnarray*}
Rearranging gives that for any $\tilde{\ve{q}}, \ve{q} \in \tilde{\Delta}_m$ such that (i) $\tilde{\ve{q}}$ has the form \eqref{defWopt} for some $\mu \in \mathbb{R}$ and (ii) $\tilde{\ve{q}}^\top \ve{u} = 0$,
\begin{eqnarray*}
  \Delta(\tilde{\ve{q}} \| \ve{q}) & = & -\lambda\\
  & = & -\log_t (Z_t),
\end{eqnarray*}
as claimed. This ends the proof of Lemma \ref{lemLogZ}.
\end{proof}
\noindent We thus get from the definition of the dual that $\mu = \arg\max -\log_t Z_t(\mu) = \arg\min Z_t(\mu)$. We have the explicit form for $Z_t$:  
\begin{eqnarray*}
  Z_t(\mu) & = & \left(\sum_i \exp^{2-t}_t\left(\log_t q_{i} - \mu  u_i\right)\right)^{\frac{1}{2-t}}\\
  & = & \left(\sum_{i\in [m]_*} \exp^{2-t}_t\left(\log_t q_{i} - \mu  u_i\right)\right)^{\frac{1}{2-t}},
\end{eqnarray*}
where $[m]_*$ is defined in \eqref{defMSTAR}. We remark that the last expression is differentiable in $\mu$, and get
\begin{eqnarray}
  Z'_t(\mu) & = & \frac{1}{2-t} \cdot \left(\sum_{i\in [m]_*} \exp^{2-t}_t\left(\log_t q_{i} - \mu  u_i\right)\right)^{-\frac{1-t}{2-t}} \nonumber\\
                                             & & \cdot (2-t) \sum_{i\in [m]_*} \exp^{1-t}_t\left(\log_t q_{i} - \mu  u_i\right) \cdot \exp^{t}_t\left(\log_t q_{i} - \mu  u_i\right) \cdot - u_i\nonumber\\
                                             & = & - Z_t^{t-1} \cdot \sum_{i\in [m]_*} \exp_t\left(\log_t q_{i} - \mu  u_i\right) \cdot u_i\nonumber\\
                                             & = & - Z_t^{t} \cdot \sum_{i\in [m]_*} \tilde{q}_i u_i\nonumber\\
                                             & = & - Z_t^{t} \cdot \sum_{i\in [m]} \tilde{q}_i u_i\nonumber\\
  & = & - Z_t^{t} \cdot \tilde{\ve{q}}^\top \ve{u},\label{optZj}
\end{eqnarray}
so
\begin{eqnarray*}
  \frac{\partial -\log_t (Z_t)}{\partial \mu} & = & - Z^{-t}_t Z'_t\\
  & = & \tilde{\ve{q}}(\mu)^\top \ve{u},
\end{eqnarray*}
and we get that any critical point of $Z_t(\mu)$ satisfies $\tilde{\ve{q}}(\mu)^\top \ve{u} = 0$. A sufficient condition to have just one critical point, being the minimum sought is the strict convexity of $Z_t(\mu)$. The next Lemma provides the proof that it is for all $t>0$.
\begin{lemma}\label{convexZ}
$Z''_t(\mu) \geq t\cdot Z_t(\mu)^{2t-1} (\tilde{\ve{q}}(\mu)^\top \ve{u})^2$.
\end{lemma}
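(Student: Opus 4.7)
The plan is to differentiate the expression for $Z'_t(\mu)$ obtained in \eqref{optZj} and then bound the resulting expression using the co-simplex normalization together with Cauchy--Schwarz.

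First I would differentiate $Z'_t(\mu) = - Z_t^{t} \cdot \tilde{\ve{q}}(\mu)^\top \ve{u}$ with respect to $\mu$. Using the product rule this gives
\begin{eqnarray*}
Z''_t(\mu) & = & -t Z_t^{t-1} Z'_t \cdot (\tilde{\ve{q}}^\top \ve{u}) - Z_t^{t} \cdot \frac{\partial (\tilde{\ve{q}}^\top \ve{u})}{\partial \mu}\\
& = & t Z_t^{2t-1} (\tilde{\ve{q}}^\top \ve{u})^2 - Z_t^{t} \cdot \frac{\partial (\tilde{\ve{q}}^\top \ve{u})}{\partial \mu},
\end{eqnarray*}
after substituting $Z'_t = - Z_t^t \tilde{\ve{q}}^\top \ve{u}$. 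So the proof reduces to controlling the sign of the second term, i.e., showing $\partial (\tilde{\ve{q}}^\top \ve{u})/\partial \mu \leq 0$ (in a suitably strong form).

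Next I would compute $\partial \tilde{q}_i/\partial \mu$ directly from the closed form $\tilde{q}_i = \exp_t(\log_t q_i - \mu u_i)/Z_t$. Using $(\exp_t)'(z) = \exp_t^t(z)$ (on the set $[m]_*$ where $\tilde{q}_i > 0$) and the identity $\exp_t(\log_t q_i - \mu u_i) = Z_t \tilde{q}_i$, this yields
\begin{eqnarray*}
\frac{\partial \tilde{q}_i}{\partial \mu} & = & - u_i Z_t^{t-1} \tilde{q}_i^{\,t} - \tilde{q}_i \cdot \frac{Z'_t}{Z_t}.
\end{eqnarray*}
Summing against $u_i$ and substituting once more $Z'_t/Z_t = - Z_t^{t-1} \tilde{\ve{q}}^\top \ve{u}$, I get
\begin{eqnarray*}
\frac{\partial (\tilde{\ve{q}}^\top \ve{u})}{\partial \mu} & = & - Z_t^{t-1}\sum_{i\in [m]_*} u_i^2 \tilde{q}_i^{\,t} + Z_t^{t-1}(\tilde{\ve{q}}^\top \ve{u})^2.
\end{eqnarray*}
Plugging this back gives the clean intermediate identity
\begin{eqnarray*}
Z''_t(\mu) & = & (t-1) Z_t^{2t-1}(\tilde{\ve{q}}^\top \ve{u})^2 + Z_t^{2t-1}\sum_{i\in [m]_*} u_i^2 \tilde{q}_i^{\,t}.
\end{eqnarray*}

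The final step — which I expect to be the real content of the lemma — is to show that $\sum_{i\in [m]_*} u_i^2 \tilde{q}_i^{\,t} \geq (\tilde{\ve{q}}^\top \ve{u})^2$. This is where the co-simplex constraint enters crucially: since $\tilde{\ve{q}}\in \tilde{\Delta}_m$ and $2-t = 1/t^*$, we have $\sum_{i\in [m]_*} \tilde{q}_i^{\,2-t} = 1$. Applying Cauchy--Schwarz with the factorization $\tilde{q}_i u_i = (\tilde{q}_i^{\,t/2} u_i)\cdot \tilde{q}_i^{\,1-t/2}$ (valid since $t \geq 0$, keeping all exponents nonnegative on $[m]_*$) yields
\begin{eqnarray*}
(\tilde{\ve{q}}^\top \ve{u})^2 & \leq & \left(\sum_{i\in [m]_*} \tilde{q}_i^{\,t} u_i^2\right)\cdot \left(\sum_{i\in [m]_*} \tilde{q}_i^{\,2-t}\right) \;=\; \sum_{i\in [m]_*} \tilde{q}_i^{\,t} u_i^2.
\end{eqnarray*}
Substituting this lower bound into the identity for $Z''_t(\mu)$ gives $Z''_t(\mu) \geq (t-1) Z_t^{2t-1}(\tilde{\ve{q}}^\top \ve{u})^2 + Z_t^{2t-1}(\tilde{\ve{q}}^\top \ve{u})^2 = t Z_t^{2t-1}(\tilde{\ve{q}}^\top \ve{u})^2$, as claimed. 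The main delicate point is making sure the differentiation is legitimate on $[m]_*$ and that the contribution of indices outside $[m]_*$ is zero (which follows because $\tilde{q}_i = 0$ there corresponds to the clipped branch of $\exp_t$, on which the derivative vanishes almost everywhere), so that restricting sums to $[m]_*$ is harmless.
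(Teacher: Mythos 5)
Your proof is correct and follows essentially the same route as the paper: you arrive at the same intermediate identity $Z''_t = (t-1)Z_t^{2t-1}(\tilde{\ve{q}}^\top\ve{u})^2 + Z_t^{2t-1}\sum_i \tilde{q}_i^{\,t}u_i^2$ (the paper gets it by differentiating the explicit expression for $Z_t$ twice, keeping $Q_i \defeq \exp_t(\log_t q_i - \mu u_i) = Z_t\tilde q_i$ unnormalized), and your Cauchy--Schwarz step with $\sum_i \tilde q_i^{\,2-t}=1$ is exactly the inequality the paper establishes by hand via $a^2+b^2\ge 2ab$ on the cross terms. The differentiability caveat on $[m]_*$ that you flag is treated at the same level of (in)formality in the paper, so there is no gap relative to its argument.
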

\begin{proof}
After simplifications, we have
\begin{eqnarray}
  Z_t^{3-2t} \cdot Z''_t  & = & (t-1) \cdot \left(\sum_{i\in [m]} \exp_t\left(\log_t q_{i} - \mu  u_i\right) \cdot u_i\right)^2 \\
                                          & & +\left(\sum_{i\in [m]} \exp^{2-t}_t\left(\log_t q_{i} - \mu  u_i\right)\right) \cdot \left( \sum_{i\in [m]} \exp^t_t\left(\log_t q_{i} - \mu  u_i\right) \cdot u^2_i\right) \\
  & = &  (t-1) \cdot \sum_{i, k\in [m]} Q_i Q_k u_i u_k + \sum_{i, k\in [m]} Q^{2-t}_i Q^t_k  u^2_k,\label{zConv1}
\end{eqnarray}
where we have let $Q_i \defeq \exp_t\left(\log_t q_{i} - \mu  u_i\right) \geq 0$. Since $a^2 + b^2 \geq 2ab$, we note that for any $i\neq k$,
\begin{eqnarray}
  Q^{2-t}_i Q^t_k  u^2_k +  Q^{2-t}_k Q^t_i  u^2_i & \geq & 2 \sqrt{Q^{2-t}_i Q^t_k Q^{2-t}_k Q^t_i} u_i u_k\nonumber\\
  & & = 2 Q_i Q_k u_i u_k,\label{prod1}
  \end{eqnarray}
  so we split \eqref{zConv1} in two terms and get
  \begin{eqnarray}
    Z_t^{3-2t} \cdot Z''_t  & = &  (t-1) \cdot \sum_{i \in [m]} Q^2_i u^2_i + \sum_{i \in [m]} Q^{2-t}_i Q^t_i  u^2_i\nonumber\\
                                            & & + \sum_{i, k\in [m], i<k} 2(t-1)Q_i Q_k u_i u_k + \sum_{i, k\in [m], i<k} Q^{2-t}_i Q^t_k  u^2_k +  Q^{2-t}_k Q^t_i  u^2_i \nonumber\\
                                            & = &  t \cdot \sum_{i \in [m]} Q^2_i u^2_i\nonumber\\
    & & + \sum_{i, k\in [m], i<k} 2(t-1)Q_i Q_k u_i u_k + \sum_{i, k\in [m], i<k} Q^{2-t}_i Q^t_k  u^2_k +  Q^{2-t}_k Q^t_i  u^2_i \nonumber\\
                                            & \geq &  t \cdot \sum_{i \in [m]} Q^2_i u^2_i + 2t \cdot \sum_{i, k\in [m], i<k} Q_i Q_k u_i u_k  \label{zConv2}\\
                                            & & = t\cdot \left(\sum_{i \in [m]} \exp_t\left(\log_t q_{i} - \mu  u_i\right) \cdot u_i\right)^2\nonumber\\
    & = & t Z^2_t \cdot (\tilde{\ve{q}}^\top \ve{u})^2,\label{binfZ1}
\end{eqnarray}
where we have used \eqref{prod1} in \eqref{zConv2}. Since $Z_t(\mu) > 0$, we get the statement of Lemma \ref{convexZ} after reorganising \eqref{binfZ1}.
\end{proof}
Lemma \ref{convexZ} shows the strict convexity of $Z_t(\mu)$ for any $t>0$. The case $t=0$ follows by direct differentiation: we get after simplification
\begin{eqnarray*}
Z''_t(\mu) & = & \frac{\left(\sum_{i\in [m]} u_i^2\right)\cdot\left(\sum_{i\in [m]} (q_{i} - \mu  u_i)^2\right)-\left(\sum_{i\in [m]} (q_{i} - \mu  u_i) u_i\right)^2}{\left(\sum_{i\in [m]} (q_{i} - \mu  u_i)^2\right)^{\frac{3}{2}}}.
\end{eqnarray*}
Cauchy-Schwartz inequality allows to conclude that $Z''_t(\mu)\geq 0$ and is in fact $>0$ \textit{unless} $\tilde{\ve{q}}$ is collinear to $\ve{u}$. This completes the proof of Theorem \ref{thENTBOOST}.

\subsection{Proof of Theorem  \ref{thBOOST1}}\label{proof-thBOOST1}

The proof involves several arguments, organised into several subsections. Some are more general than what is strictly needed for the proof of the Theorem, on purpose.
\subsubsection{Clamped summations}
For any $\delta \geq 0$, we define clamped summations of the sequence of ordered elements $v_1, v_2, ..., v_J$: if $J>1$,
\begin{eqnarray}
\sideset{^{(\delta)}}{}\sum^J_{j=1} v_j \defeq \min\left\{v_J + \sideset{^{(\delta)}}{}\sum^{J-1}_{j=1} v_j , \delta\right\} &, & \sideset{_{(-\delta)}}{}\sum^J_{j=1} v_j \defeq  \max\left\{v_J +\sideset{_{(-\delta)}}{}\sum^{J-1}_{j=1} v_j , -\delta\right\},
\end{eqnarray}
and the base case ($J=1$) is obtained by replacing the inner sum by 0. We also define the doubly clamped summation:
\begin{eqnarray*}
\sideset{^{(\delta)}_{(-\delta)}}{}\sum^J_{j=1} v_j \defeq \max\left\{\min\left\{v_J + \sideset{^{(\delta)}_{(-\delta)}}{}\sum^{J-1}_{j=1} v_j , \delta\right\} , -\delta\right\} ,
\end{eqnarray*}
with the same convention for the base case. We prove a series of simple but useful properties of the clamped summation.
\begin{lemma}\label{propcsum}
  The following properties hold true for clamped summation:
  \begin{enumerate}
  \item (doubly) clamped summations are non commutative;
  \item (doubly) clamped summations are ordinary summation in the limit: for any $J\geq 1$ and any sequence $v_1, v_2, ..., v_J$,
    \begin{eqnarray*}
\lim_{\delta \rightarrow +\infty} \sideset{^{(\delta)}}{}\sum^J_{j=1} v_j = \lim_{\delta \rightarrow +\infty} \sideset{_{(-\delta)}}{}\sum^J_{j=1} v_j = \lim_{\delta \rightarrow +\infty} \sideset{^{(\delta)}_{(-\delta)}}{}\sum^J_{j=1} v_j = \sum^J_{j=1} v_j 
      \end{eqnarray*}
    \item clamped summations sandwich ordinary summation and the doubly clamped summation:  for any $\delta \geq 0$, any $J\geq 1$ and any sequence $v_1, v_2, ..., v_J$,
\begin{eqnarray*}
  \sideset{^{(\delta)}}{}\sum^J_{j=1} v_j \leq \sum^J_{j=1} v_j \leq \sideset{_{(-\delta)}}{}\sum^J_{j=1} v_j & ; & \sideset{^{(\delta)}}{}\sum^J_{j=1} v_j \leq \sideset{^{(\delta)}_{(-\delta)}}{}\sum^J_{j=1} v_j \leq \sideset{_{(-\delta)}}{}\sum^J_{j=1} v_j
  \end{eqnarray*}
    \end{enumerate}
  \end{lemma}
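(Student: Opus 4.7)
My plan is to dispatch the three claims separately: Claim 1 is a single counter-example, Claim 2 reduces to observing that the clamps eventually become inactive for any fixed finite sequence, and Claim 3 is a routine but key induction on $J$ that carries the real content. All three should go through without any analytic subtlety; the main care required is bookkeeping around the base case $J=1$ where, by convention, the inner clamped sum is replaced by $0$.

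For Claim 1, I would reuse the footnote example. Take $J=2$, $v_1=-1$, $v_2=3$, $\delta=2$: the upper-clamped sum evaluates to $\min\{3+\min\{-1,2\},2\}=2$ in one order and to $\min\{-1+\min\{3,2\},2\}=1$ in the other. Flipping signs gives a witness for the lower-clamped sum, and since all partial sums remain inside $[-\delta,\delta]$, the same example works verbatim for the doubly clamped sum. For Claim 2, I would fix the sequence and let $M := \max_{1\leq k \leq J}\bigl|\sum_{j\leq k} v_j\bigr|$. A short induction on $J$ then shows that as soon as $\delta>M$, every partial clamped sum equals the ordinary partial sum: by induction the inner clamped sum equals $\sum_{j<J} v_j$, so $v_J + \sum_{j<J} v_j = \sum_{j\leq J} v_j \in (-\delta,\delta)$ and neither the $\min$ nor the $\max$ is active. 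Passing to the limit $\delta\to\infty$ then gives the claim for all three variants.

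For Claim 3 I would denote by $A_J$, $B_J$, $C_J$ the upper-, lower-, and doubly-clamped partial sums, and run two parallel inductions on $J$. The first sandwich $A_J \leq \sum_{j\leq J} v_j \leq B_J$ is immediate from $\min\{x,\delta\}\leq x \leq \max\{x,-\delta\}$ combined with the monotonicity bump $v_J + A_{J-1}\leq v_J + \sum_{j<J} v_j$ supplied by induction, together with its mirror. The second sandwich $A_J \leq C_J \leq B_J$ is the slightly more delicate half: assuming $A_{J-1}\leq C_{J-1}\leq B_{J-1}$, monotonicity of $\min(\cdot,\delta)$ in its first argument gives $A_J = \min\{v_J+A_{J-1},\delta\} \leq \min\{v_J+C_{J-1},\delta\}$, and wrapping this in the outer $\max\{\cdot,-\delta\}$ can only increase it, so $A_J\leq C_J$; the symmetric argument using monotonicity of $\max(\cdot,-\delta)$ yields $C_J\leq B_J$. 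The hardest aspect here is purely notational — keeping track of which monotonicity is being invoked in the doubly clamped step — rather than anything substantive.
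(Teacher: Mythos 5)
Your proposal is correct and follows essentially the same route as the paper: a two-term counterexample for non-commutativity (yours is the footnote's, and it does also witness the doubly clamped case since the lower clamp never activates there), the observation that the clamps become inactive once $\delta$ exceeds the largest partial sum in absolute value (the paper simply calls property 2 trivial), and the same induction on $J$ using monotonicity of $\min\{\cdot,\delta\}$ and $\max\{\cdot,-\delta\}$ together with $\min\{x,\delta\}\leq x\leq\max\{x,-\delta\}$ for both sandwiches. No gaps.
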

  \begin{proof}
    Non commutativity follows from simple counterexamples: for example, for $v \defeq -1$ and $w \defeq 2$, if we fix $v_1 \defeq v, v_2 \defeq w$, then $\sideset{^{(0)}}{}\sum^2_{j=1} v_j = 1$ while $\sideset{^{(0)}}{}\sum^2_{j=1} v_{3-j} = -1$. Property [2.] is trivial. The set of leftmost inequalities of property [3.] can be shown by induction, noting the base case is trivial and otherwise, using the induction hypothesis in the leftmost inequality,
    \begin{eqnarray*}
      \sideset{^{(\delta)}}{}\sum^{J+1}_{j=1} v_j \defeq  \min\left\{v_{J+1} + \sideset{^{(\delta)}}{}\sum^{J}_{j=1} v_j  , \delta\right\} \leq \min\left\{v_{J+1} + \sum^{J}_{j=1} v_j  , \delta\right\} \leq  v_{J+1} + \sum^{J}_{j=1} v_j = \sum^{J+1}_{j=1} v_j,
    \end{eqnarray*}
    and similarly
    \begin{eqnarray*}
      \sideset{_{(-\delta)}}{}\sum^{J+1}_{j=1} v_j \defeq \max\left\{v_{J+1} + \sideset{_{(-\delta)}}{}\sum^{J}_{j=1} v_j  , -\delta\right\} \geq \max\left\{v_{J+1} + \sum^{J}_{j=1} v_j  , -\delta\right\} \geq v_{J+1} + \sum^{J}_{j=1} v_j = \sum^{J+1}_{j=1} v_j.
    \end{eqnarray*}
    A similar argument holds for the set of rightmost inequalities: for example, the induction's general case holds
    \begin{eqnarray*}
      \sideset{^{(\delta)}}{}\sum^{J+1}_{j=1} v_j & \defeq & \min\left\{ v_{J+1} + \sideset{^{(\delta)}}{}\sum^{J}_{j=1} v_j  , \delta\right\} \\
      & \leq & \min\left\{ v_{J+1} + \sideset{^{(\delta)}_{(-\delta)}}{}\sum^{J}_{j=1} v_j  , \delta\right\}  \leq   \max\left\{\min\left\{v_{J+1} +\sideset{^{(\delta)}_{(-\delta)}}{}\sum^{J}_{j=1} v_j  , \delta\right\},  -\delta\right\} = \sideset{^{(\delta)}_{(-\delta)}}{}\sum^J_{j=1} v_j.
      \end{eqnarray*}
for the leftmost inequality. This ends the proof of Lemma \ref{propcsum}.
\end{proof}
\subsubsection{Unravelling weights} 
\begin{lemma}\label{lemIUNRAVEL}
  Define
\begin{eqnarray}
v_{j} & \defeq & m^{1-t^*}\cdot \left(\prod_{k=1}^{j-1} Z_{tk}\right)^{1-t} \cdot \mu_j \quad\left(\mathrm{convention:} \prod_{k=1}^{0} u_k \defeq 1 \right).\label{defVJ}
\end{eqnarray}
Then $\forall J\geq 1$, weights unravel as:
\begin{eqnarray*}
  q_{(J+1)i} & = & \left\{
                           \begin{array}{rcl}
                             \frac{1}{m^{t^*} \prod_{j=1}^J Z_{tj}} \cdot \exp_t\left(-\sideset{^{(\nicefrac{1}{1-t})}}{}\sum^{J}_{j=1} v_j u_{ji}\right) & \mbox{ if } & t < 1\\
                             \frac{1}{m^{t^*} \prod_{j=1}^J Z_{tj}} \cdot \exp_t\left(-\sideset{_{(-\nicefrac{1}{1-t})}}{}\sum^{J}_{j=1} v_j u_{ji}\right) & \mbox{ if } & t > 1
                           \end{array} \right. .
  \end{eqnarray*}
\end{lemma}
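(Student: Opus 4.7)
The plan is to proceed by induction on $J$; the case $t>1$ is entirely parallel to $t<1$ (with lower-clamping at $-1/(t-1)$ replacing upper-clamping at $1/(1-t)$, since $\exp_t$ then saturates from the opposite side), so I focus on $t<1$.

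\textbf{Base case $J=1$.} Starting from $q_{1i}=1/m^{t^*}$ and applying the weight update, I would expand $\otimes_t$ and $\exp_t$ by definition, factor out $1/m^{t^*}$, and use the identity $(1-t)t^* = 1-t^*$, so that $(1/m^{t^*})^{1-t} = 1/m^{1-t^*}$. The quantity $q_{2i}^{1-t}$ collapses to $[1-(1-t)m^{1-t^*}\mu_1 u_{1i}]_+ / (m^{t^*}Z_{t1})^{1-t}$ (the inner $[\cdot]_+$ coming from $\exp_t(-\mu_1 u_{1i})^{1-t}$ either agrees with the outer $[\cdot]_+$ from $\otimes_t$ or forces both to zero), which is exactly $\exp_t(-v_1 u_{1i})^{1-t}/(m^{t^*}Z_{t1})^{1-t}$ since $v_1=m^{1-t^*}\mu_1$. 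Combined with the clamped-sum convention $\sideset{^{(1/(1-t))}}{}\sum_{j=1}^1 v_j u_{1i} = \min\{v_1 u_{1i},\,1/(1-t)\}$, this matches the claimed formula.

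\textbf{Inductive step.} Assume the formula at $J$, namely $q_{(J+1)i} = \exp_t(-C_J)/A$ with $A \defeq m^{t^*}\prod_{j=1}^J Z_{tj}$ and $C_J \defeq \sideset{^{(1/(1-t))}}{}\sum_{j=1}^J v_j u_{ji}$. Apply the update $q_{(J+2)i} = (q_{(J+1)i}\otimes_t \exp_t(-\mu_{J+1}u_{(J+1)i}))/Z_{t(J+1)}$, raise to the power $1-t$, and expand both $[\cdot]_+$ operations. The key algebraic identification is $A^{1-t} = m^{(1-t)t^*}\prod_{j=1}^J Z_{tj}^{1-t} = m^{1-t^*}\prod_{j=1}^J Z_{tj}^{1-t}$, so the coefficient attached to $\mu_{J+1}u_{(J+1)i}$ becomes exactly $v_{J+1}$. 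The remaining identity to verify is
\[
\bigl[1 - (1-t)C_J + A^{1-t}\bigl([1-(1-t)\mu_{J+1}u_{(J+1)i}]_+ - 1\bigr)\bigr]_+ = \bigl[1-(1-t)(C_J+v_{J+1}u_{(J+1)i})\bigr]_+ ,
\]
which, together with $C_{J+1} = \min\{C_J + v_{J+1}u_{(J+1)i},\,1/(1-t)\}$, closes the induction. When the inner $[\cdot]_+$ is inactive, both sides are literally equal. When it is active, both sides vanish: the right side by definition of $C_{J+1}$; the left side because the co-simplex constraint forces $q_{(J+1)i}\leq 1$ (automatic from the inductive $Z_{tj}$-normalizations), which translates to $A^{1-t}\geq 1-(1-t)C_J$ and thus drives the argument of the outer $[\cdot]_+$ to a nonpositive value.

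\textbf{Main obstacle.} The delicate point is the interaction between the two nested $[\cdot]_+$ truncations: the inner one from $\exp_t(-\mu_{J+1}u_{(J+1)i})^{1-t}$ and the outer one built into $\otimes_t$. The attractive identity $\exp_t(a)\otimes_t\exp_t(b)=\exp_t(a+b)$ breaks in the saturated regime where an argument drops below $-1/(1-t)$; exploiting the cosimplex bound on $q_{(J+1)i}$ is precisely what rescues the saturated branch and produces the upper-clamping at $1/(1-t)$ that appears in the unravelled formula.
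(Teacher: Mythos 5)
Your induction has the same skeleton as the paper's proof, but the inductive step is executed by a genuinely different piece of algebra. The paper never touches the nested brackets you wrestle with: it writes the update as $\exp_t(\log_t q_{Ji}-\mu_J u_{Ji})$, substitutes the induction hypothesis inside the $\log_t$, pulls the normalizer $A \defeq m^{t^*}\prod_j Z_{tj}$ out via $\log_t(a/b)=\log_t a\ominus_t \log_t b$, and uses $\log_t\exp_t(z)=\max\{-1/(1-t),z\}$; after rescaling, a \emph{single} $[\,\cdot\,]_+$ survives and the upper clamp at $1/(1-t)$ drops out with no extra input. Your route instead raises the $\otimes_t$-form of the update to the power $1-t$ and reconciles the two nested $[\,\cdot\,]_+$'s, which is what forces you to import the co-simplex bound $q_{(J+1)i}\le 1$, i.e. $A^{1-t}\ge 1-(1-t)C_J$. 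That bound is indeed automatic ($\sum_i q_{(J+1)i}^{2-t}=1$ with nonnegative entries and $2-t>0$), the identification $v_{J+1}=A^{1-t}\mu_{J+1}$ is right, and your key identity holds in both branches, so the argument goes through; the paper's $\ominus_t$ bookkeeping simply buys you not having to notice this bound at all, while your version makes explicit where the $q\le 1$ structure of the co-simplex enters.

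One justification is misstated and needs a one-line repair: in the saturated branch the right-hand side $[1-(1-t)(C_J+v_{J+1}u_{(J+1)i})]_+$ does \emph{not} vanish ``by definition of $C_{J+1}$'' --- the definition only gives $C_{J+1}\le 1/(1-t)$, not equality. It vanishes because saturation means $\mu_{J+1}u_{(J+1)i}\ge 1/(1-t)$, hence $v_{J+1}u_{(J+1)i}= A^{1-t}\mu_{J+1}u_{(J+1)i}\ge A^{1-t}/(1-t)\ge (1-(1-t)C_J)/(1-t)$ by the very co-simplex inequality you invoke for the left-hand side, so $C_J+v_{J+1}u_{(J+1)i}\ge 1/(1-t)$ and the bracket is nonpositive. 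With that fixed, both sides vanish for the same reason. Finally, you dispatch $t>1$ by symmetry; the paper does write it out (with $\min\{1/(t-1),\cdot\}$ replacing the $\max$ and the lower clamp at $-1/(t-1)$), and in your power-$(1-t)$ bookkeeping the exponent turns negative there, so the degenerate branch corresponds to weights blowing up rather than switching off --- worth writing out rather than asserting it is ``entirely parallel''.
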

\begin{proof}
 We start for the case $t<1$. We proceed by induction, noting first that the normalization constraint for the initial weights imposes $q_{1i} = 1/m^{1/(2-t)} = 1/m^{t^*}$ and so (using $(1-t)t^* = 1-t^*$)
  \begin{eqnarray*}
    q_{2i} & = & \frac{\exp_t(\log_t q_{1i} - \mu_1  u_{1i})}{Z_1}\\
                   & = & \frac{1}{Z_1}\cdot \left[1+(1-t) \cdot \left(\frac{1}{1-t}\cdot\left(\frac{1}{m^{\frac{1-t}{2-t}}}-1\right) -  \mu_1  u_{1i}\right) \right]_+^{\frac{1}{1-t}}\\
    & = & \frac{1}{Z_1}\cdot \left[\frac{1}{m^{1-t^*}} - (1-t)\mu_1  u_{1i}\right]_+^{\frac{1}{1-t}}\\
                   & = & \frac{1}{m^{t^*}Z_1}\cdot \left[1- (1-t)m^{1-t^*} \mu_1  u_{1i}\right]_+^{\frac{1}{1-t}}\\
    & = & \frac{1}{m^{t^*}Z_1}\cdot \exp_t\left(-\sideset{^{(\nicefrac{1}{1-t})}}{}\sum^{1}_{j=1} v_j u_{ji}\right),
    \end{eqnarray*}
    completing the base case. Using the induction hypothesis, we unravel at iteration $J+1$:
    {\footnotesize
\begin{eqnarray*}
  \lefteqn{q_{(J+1)i}}\nonumber\\
  & = & \frac{\exp_t(\log_t q_{Ji} - \mu_J  u_{Ji})}{Z_J} \nonumber \\
                     & = & \frac{\exp_t\left(\log_t\left(\frac{1}{m^{t^*} \prod_{j=1}^{J-1} Z_{tj}} \cdot \exp_t\left(-\sideset{^{(\nicefrac{1}{1-t})}}{}\sum^{J-1}_{j=1} v_j u_{ji}\right)\right) - \mu_J  u_{Ji}\right)}{Z_J} \nonumber \\
                     & = & \frac{\exp_t\left(\log_t\exp_t\left(-\sideset{^{(\nicefrac{1}{1-t})}}{}\sum^{J-1}_{j=1} v_j u_{ji}\right) \ominus_t \log_t \left(m^{t^*}\prod_{j=1}^{J-1} Z_{tj}\right) - \mu_J  u_{Ji}\right)}{Z_J}\nonumber \\
  & = & \frac{1}{Z_J} \cdot \exp_t\left(\frac{\max\left\{-\frac{1}{1-t}, -\sideset{^{(\nicefrac{1}{1-t})}}{}\sum^{J-1}_{j=1} v_j u_{ji}\right\} - \log_t \left(m^{t^*}\prod_{j=1}^{J-1} Z_{tj}\right)}{1+(1-t) \log_t \left(m^{t^*}\prod_{j=1}^{J-1} Z_{tj}\right)}- \mu_J  u_{Ji}\right)\nonumber \\
  & = & \frac{1}{Z_J} \cdot \left[1 + \frac{(1-t)\cdot \max\left\{-\frac{1}{1-t}, -\sideset{^{(\nicefrac{1}{1-t})}}{}\sum^{J-1}_{j=1} v_j u_{ji}\right\} - (1-t) \log_t \left(m^{t^*}\prod_{j=1}^{J-1} Z_{tj}\right)}{1+(1-t) \log_t \left(m^{t^*}\prod_{j=1}^{J-1} Z_{tj}\right)}- (1-t) \mu_J  u_{Ji}\right]^{\frac{1}{1-t}}_+\nonumber \\
  & = & \frac{1}{Z_J} \cdot \left[1 + \frac{(1-t)\cdot \max\left\{-\frac{1}{1-t}, -\sideset{^{(\nicefrac{1}{1-t})}}{}\sum^{J-1}_{j=1} v_j u_{ji}\right\} - \left(\left(m^{t^*}\prod_{j=1}^{J-1} Z_{tj}\right)^{1-t} - 1\right)}{\left(m^{t^*}\prod_{j=1}^{J-1} Z_{tj}\right)^{1-t}}- (1-t) \mu_J  u_{Ji}\right]^{\frac{1}{1-t}}_+,
\end{eqnarray*}
}
which simplifies into (using $(1-t)t^* = 1-t^*$)
\begin{eqnarray}
  \lefteqn{q_{(J+1)i}}\nonumber\\
  & = & \frac{1}{m^{t^*}\prod_{j=1}^{J} Z_{tj}} \cdot \left[1 + (1-t)\cdot \left(\max\left\{-\frac{1}{1-t}, -\sideset{^{(\nicefrac{1}{1-t})}}{}\sum^{J-1}_{j=1} v_j u_{ji}\right\}  - v_J u_{Ji}\right)\right]^{\frac{1}{1-t}}_+\label{eqFORSMALL}\\
  & = &  \frac{1}{m^{t^*}\prod_{j=1}^{J} Z_{tj}} \cdot \exp_t\left(-S_J\right),\nonumber
\end{eqnarray}
with
\begin{eqnarray*}
  S_J & \defeq & \min\left\{-\max\left\{-\frac{1}{1-t}, -\sideset{^{(\nicefrac{1}{1-t})}}{}\sum^{J-1}_{j=1} v_j u_{ji}\right\}  + v_J u_{Ji}, \frac{1}{1-t}\right\}\\
      & = & \min\left\{v_J u_{Ji} + \min\left\{\frac{1}{1-t}, \sideset{^{(\nicefrac{1}{1-t})}}{}\sum^{J-1}_{j=1} v_j u_{ji}\right\}  , \frac{1}{1-t}\right\}\\
      & = & \min\left\{v_J u_{Ji} + \sideset{^{(\nicefrac{1}{1-t})}}{}\sum^{J-1}_{j=1} v_j u_{ji} , \frac{1}{1-t}\right\}\\
  & \defeq & \sideset{^{(\nicefrac{1}{1-t})}}{}\sum^{J}_{j=1} v_j u_{ji}
\end{eqnarray*}
(we used twice the definition of clamped summation), which completes the proof of Lemma \ref{lemIUNRAVEL} for $t<1$.\\

\noindent We now treat the case $t>1$. The base induction is equivalent, while unravelling gives, instead of \eqref{eqFORSMALL}:
\begin{eqnarray}
  \lefteqn{q_{(J+1)i}}\nonumber\\
  & = & \frac{1}{m^{t^*}\prod_{j=1}^{J} Z_{tj}} \cdot \left[1 + (1-t)\cdot \left(\min\left\{-\frac{1}{1-t}, -\sideset{_{-(\nicefrac{1}{t-1})}}{}\sum^{J-1}_{j=1} v_j u_{ji}\right\}  - v_J u_{Ji}\right)\right]^{\frac{1}{1-t}}_+\nonumber\\
  & = &  \frac{1}{m^{t^*}\prod_{j=1}^{J} Z_{tj}} \cdot \exp_t\left(-S_J\right),\nonumber
\end{eqnarray}
and, this time,
\begin{eqnarray}
  S_J & \defeq & \max\left\{-\min\left\{-\frac{1}{1-t}, -\sideset{_{-(\nicefrac{1}{t-1})}}{}\sum^{J-1}_{j=1} v_j u_{ji}\right\}  + v_J u_{Ji}, -\frac{1}{t-1}\right\}\\
      & = & \max\left\{v_J u_{Ji} + \max\left\{-\frac{1}{t-1}, \sideset{_{-(\nicefrac{1}{t-1})}}{}\sum^{J-1}_{j=1} v_j u_{ji}\right\}, -\frac{1}{t-1}\right\}\\
      & = & \max\left\{v_J u_{Ji} + \sideset{_{-(\nicefrac{1}{t-1})}}{}\sum^{J-1}_{j=1} v_j u_{ji}, -\frac{1}{t-1}\right\}\\
  & \defeq & \sideset{_{(-\nicefrac{1}{t-1})}}{}\sum^{J}_{j=1} v_j u_{ji},
\end{eqnarray}
which completes the proof of Lemma \ref{lemIUNRAVEL}.
\end{proof}
\subsubsection{Introducing classifiers}
\paragraph{Ordinary linear separators} Suppose we have a classifier
\begin{eqnarray*}
H_J(\ve{x}) & \defeq & \sum_{j=1}^J \beta_j^{1-t} \mu_j \cdot h_j(\ve{x}), \quad \beta_j \defeq m^{t^*} \prod_{k=1}^{j-1} Z_{tk},
\end{eqnarray*}
where $\mu_j \in \mathbb{R}, \forall j\in[J]$. We remark that $\iver{z \neq r} \leq \exp_t^{2-t}(-zr)$ for any $t\leq 2$ and $z,r\in \mathbb{R}$, and $z\mapsto \exp_t^{2-t}(-z)$ is decreasing for any $t\leq 2$, so using [3.] in Lemma \ref{propcsum}, we get for our training sample $\mathcal{S} \defeq \{(\ve{x}_i, y_i), i \in [m]\}$ and any $t<1$ (from Lemma \ref{lemIUNRAVEL}),
{
\begin{eqnarray}
  \lefteqn{ \frac{1}{m} \cdot \sum_{i\in [m]} \iver{\mathrm{sign}(H_J(\ve{x}_i)) \neq y_i} }\nonumber\\
  & \leq & \sum_{i\in [m]} \frac{\exp_t^{2-t}\left(-\sum_{j=1}^{J} m^{1-t^*} \left(\prod_{k=1}^{j-1} Z_{tk}\right)^{1-t} \mu_j \cdot y_i h_j(\ve{x}_i) \right)}{m}\nonumber\\
                                                                  & \leq & \sum_{i\in [m]} \frac{\exp_t^{2-t}\left(-\sideset{^{(\nicefrac{1}{1-t})}}{}\sum_{j=1}^{J} m^{1-t^*} \left(\prod_{k=1}^{j-1} Z_{tk}\right)^{1-t} \mu_j \cdot y_i h_j(\ve{x}_i) \right)}{m}\label{bsup22}\\
  & & = \sum_{i\in [m]} \frac{\exp_t^{2-t}\left(-\sideset{^{(\nicefrac{1}{1-t})}}{}\sum_{j=1}^{J} v_j u_{ji}\right)}{m}\nonumber
\end{eqnarray}
}
where
\begin{eqnarray}
v_j \defeq m^{1-t^*} \left(\prod_{k=1}^{j-1} Z_{tk}\right)^{1-t} \mu_j & ; & u_{ji} \defeq y_i h_j(\ve{x}_i). \label{defVU}
\end{eqnarray}
Using Lemma \ref{lemIUNRAVEL} with those definitions, we get
\begin{eqnarray*}
  \frac{1}{m} \cdot \sum_{i\in [m]} \iver{\mathrm{sign}(H_J(\ve{x}_i)) \neq y_i} & \leq & \sum_{i\in [m]} \frac{\left(q_{(J+1)i} m^{t^*} \prod_{j=1}^J Z_{tj} \right)^{2-t}}{m}\\
  & & = \prod_{j=1}^J Z^{2-t}_{tj} \cdot \sum_{i\in [m]} q^{2-t}_{(J+1)i}\\
  & =& \prod_{j=1}^J Z^{2-t}_{tj} ,
\end{eqnarray*}
because $q_J \in \tilde{\Delta}_m$. We thus have proven the following Lemma.
\begin{lemma}\label{lemLS}
  For any $t < 1$ and any linear separator
  \begin{eqnarray*}
H_J(\ve{x}) & \defeq & \sum_{j=1}^J \beta_j^{1-t} \mu_j \cdot h_j(\ve{x}), \quad \left(\beta_j \defeq m^{t^*} \prod_{k=1}^{j-1} Z_{tk}, \mu_j \in \mathbb{R}, h_j \in \mathbb{R}^{\mathcal{X}}, \forall j \in [J]\right),
  \end{eqnarray*}
  where $Z_{tk}$ is the normalization coefficient of $\ve{q}$ in \eqref{defWopt} with $u_{ji} \defeq y_i h_j(\ve{x}_i)$,
  \begin{eqnarray}
  \frac{1}{m} \cdot \sum_{i\in [m]} \iver{\mathrm{sign}(H_J(\ve{x}_i)) \neq y_i} & \leq & \prod_{j=1}^J Z^{2-t}_{tj}.\label{bZOERRLS}
\end{eqnarray}
\end{lemma}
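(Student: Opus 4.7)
The plan is to upper-bound the empirical $0/1$ risk by a tempered exponential surrogate and then collapse that surrogate against the product $\prod_j Z_{tj}^{2-t}$ using the closed form of the iterated weights given by Lemma \ref{lemIUNRAVEL}. Three algebraic facts will carry the argument: (i) the pointwise surrogate inequality $\iver{y \neq \mathrm{sign}(H(\ve{x}))} \leq \exp_t^{2-t}(-yH(\ve{x}))$, valid for every $t \leq 2$ since $\exp_t(0)=1$ and $\exp_t^{2-t}(-\cdot)$ is nonnegative; (ii) the co-simplex normalization $\sum_i q_{(J+1)i}^{2-t} = 1$, which is just the constraint defining $\tilde{\Delta}_m$ rewritten with $1/t^* = 2-t$; and (iii) the identity $t^*(2-t) = 1$, which neutralizes a factor of $m^{t^*(2-t)}$ against the leading $1/m$ in the definition of the empirical risk.

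Concretely, first I would rewrite $y_i H_J(\ve{x}_i) = \sum_{j=1}^J v_j u_{ji}$ with $v_j \defeq m^{1-t^*}(\prod_{k<j} Z_{tk})^{1-t}\mu_j$ and $u_{ji}\defeq y_i h_j(\ve{x}_i)$, using the identity $t^*(1-t) = 1-t^*$ to match the hypothesis of Lemma \ref{lemIUNRAVEL} exactly. Second, I would slip the upper-clamped summation inside the surrogate: since $z\mapsto \exp_t^{2-t}(-z)$ is non-increasing and, by property [3.] of Lemma \ref{propcsum}, $\sideset{^{(1/(1-t))}}{}\sum_j v_j u_{ji} \leq \sum_j v_j u_{ji}$, we obtain $\exp_t^{2-t}(-\sum_j v_j u_{ji}) \leq \exp_t^{2-t}(-\sideset{^{(1/(1-t))}}{}\sum_j v_j u_{ji})$. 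Third, Lemma \ref{lemIUNRAVEL} identifies the right-hand side (before raising to $2-t$) with $m^{t^*}(\prod_{j=1}^J Z_{tj}) \cdot q_{(J+1)i}$, so that raising to the power $2-t$, summing over $i\in[m]$, dividing by $m$, and collapsing via $\sum_i q_{(J+1)i}^{2-t} = 1$ together with $m^{t^*(2-t)}=m$ yields exactly $\prod_{j=1}^J Z_{tj}^{2-t}$, which is the claimed bound.

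The main obstacle I anticipate is keeping the direction of the inequality consistent through the composition of clamping and negation: because $\exp_t^{2-t}(-\cdot)$ is decreasing, the upper-clamped sum (which is \emph{below} the ordinary sum) must be inserted on the \emph{inside} of the surrogate in order to \emph{increase} its value, yielding a valid upper bound rather than a useless lower one. A related subtlety is that the $[\cdot]_+$ in the definition of $\exp_t$ automatically clamps its argument at $-1/(1-t)$, which is precisely what makes the upper clamp of the inner summation at level $1/(1-t)$ compatible with the iterated weight formula of Lemma \ref{lemIUNRAVEL}; without this matched clamping, the identification with $q_{(J+1)i}$ would break down on coordinates where the weight has already switched off. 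Once these points are handled, what remains is routine bookkeeping with the exponent $2-t = 1/t^*$ and the co-simplex identity.
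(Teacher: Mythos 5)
Your proposal is correct and follows essentially the same route as the paper's proof: bound the 0/1 loss by $\exp_t^{2-t}(-y_iH_J(\ve{x}_i))$, insert the upper-clamped summation inside the surrogate via monotonicity and property [3.] of Lemma \ref{propcsum}, identify the result with $m^{t^*}\bigl(\prod_j Z_{tj}\bigr) q_{(J+1)i}$ through Lemma \ref{lemIUNRAVEL}, and collapse using $\sum_i q_{(J+1)i}^{2-t}=1$ and $m^{t^*(2-t)}=m$. The direction-of-inequality and clamping-compatibility subtleties you flag are exactly the points the paper's argument relies on, so no gap remains.
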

\paragraph{Clamped linear separators} Suppose we have a classifier ($t<1$)
\begin{eqnarray*}
H^{(\nicefrac{1}{1-t})}_J(\ve{x}) & \defeq & \sideset{^{(\nicefrac{1}{1-t})}_{(-\nicefrac{1}{1-t})}}{}\sum_{j=1}^J \beta_j^{1-t} \mu_j \cdot h_j(\ve{x}), \quad \beta_j \defeq m^{t^*} \prod_{k=1}^{j-1} Z_{tk}.
\end{eqnarray*}
We can now replace \eqref{bsup22} by
\begin{eqnarray}
  \lefteqn{ \frac{1}{m} \cdot \sum_{i\in [m]} \iver{\mathrm{sign}(H^{(\nicefrac{1}{1-t})}_J(\ve{x}_i)) \neq y_i}}\nonumber\\
  & \leq & \sum_{i\in [m]} \frac{\exp_t^{2-t}\left(-y_i \cdot \sideset{^{(\nicefrac{1}{1-t})}_{(-\nicefrac{1}{1-t})}}{}\sum_{j=1}^{J} m^{1-t^*} \left(\prod_{k=1}^{j-1} Z_{tk}\right)^{1-t} \mu_j \cdot  h_j(\ve{x}_i) \right)}{m}\nonumber\\
  & & = \sum_{i\in [m]} \frac{\exp_t^{2-t}\left(- \sideset{^{(\nicefrac{1}{1-t})}_{(-\nicefrac{1}{1-t})}}{}\sum_{j=1}^{J} m^{1-t^*} \left(\prod_{k=1}^{j-1} Z_{tk}\right)^{1-t} \mu_j \cdot  y_i h_j(\ve{x}_i) \right)}{m}\nonumber\\\nonumber\\
                                                                  & \leq & \sum_{i\in [m]} \frac{\exp_t^{2-t}\left(-\sideset{^{(\nicefrac{1}{1-t})}}{}\sum_{j=1}^{J} m^{1-t^*} \left(\prod_{k=1}^{j-1} Z_{tk}\right)^{1-t} \mu_j \cdot y_i h_j(\ve{x}_i) \right)}{m}\label{bsup23}\\
  & & = \sum_{i\in [m]} \frac{\exp_t^{2-t}\left(-\sideset{^{(\nicefrac{1}{1-t})}}{}\sum_{j=1}^{J} v_j u_{ji}\right)}{m}\nonumber.
\end{eqnarray}
The first identity has used the fact that $y_i \in \{-1,1\}$, so it can be folded in the doubly clamped summation without changing its value, and the second inequality used [3.] in Lemma \ref{propcsum}. This directly leads us to the following Lemma.
\begin{lemma}\label{lemClampedLS}
  For any $t < 1$ and any clamped linear separator
  \begin{eqnarray*}
H^{(\nicefrac{1}{1-t})}_J(\ve{x}) & \defeq & \sideset{^{(\nicefrac{1}{1-t})}_{(-\nicefrac{1}{1-t})}}{}\sum_{j=1}^J \beta_j^{1-t} \mu_j \cdot h_j(\ve{x}), \quad \left(\beta_j \defeq m^{t^*} \prod_{k=1}^{j-1} Z_{tk}, \mu_j \in \mathbb{R}, h_j \in \mathbb{R}^{\mathcal{X}}, \forall j \in [J]\right),
  \end{eqnarray*}
  where $Z_{tk}$ is the normalization coefficient of $\ve{q}$ in \eqref{defWopt} with $u_{ji} \defeq y_i h_j(\ve{x}_i)$,
  \begin{eqnarray}
  \frac{1}{m} \cdot \sum_{i\in [m]} \iver{\mathrm{sign}(H^{(\nicefrac{1}{1-t})}_J(\ve{x}_i)) \neq y_i} & \leq & \prod_{j=1}^J Z^{2-t}_{tj}.\label{bZOERRCLS}
\end{eqnarray}
\end{lemma}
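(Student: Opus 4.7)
The plan is to mirror the proof of Lemma~\ref{lemLS} with one additional ``sandwich'' step to handle the doubly-clamped summation. The three ingredients are (i) the surrogate bound $\iver{y \neq \mathrm{sign}(z)} \leq \exp_t^{2-t}(-yz)$, valid for $t \leq 2$ since $\exp_t \geq 0$ and $\exp_t(0) = 1$; (ii) the sandwich inequality~[3.] of Lemma~\ref{propcsum}, which lower bounds the doubly-clamped sum by the singly-upper-clamped sum; and (iii) the unravelled closed form for $q_{(J+1)i}$ from Lemma~\ref{lemIUNRAVEL}.

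First I would fold the label $y_i \in \{-1,+1\}$ inside the doubly-clamped sum, which is legitimate because multiplication by a $\pm 1$ factor commutes with the $\min/\max$ operations defining a symmetric $[-\delta,\delta]$ clamped summation. With the notation $v_j$, $u_{ji}$ of Lemma~\ref{lemIUNRAVEL} (and noting that the leveraging coefficients $\alpha_j$ of the classifier coincide with $v_j$), this yields $y_i H_J^{(\nicefrac{1}{1-t})}(\ve{x}_i) = \sideset{^{(\nicefrac{1}{1-t})}_{(-\nicefrac{1}{1-t})}}{}\sum_{j=1}^J v_j u_{ji}$. Combining this with (i), the decreasing monotonicity of $z \mapsto \exp_t^{2-t}(-z)$ (for $t<2$), and property~(ii), one obtains
\begin{eqnarray*}
\iver{\mathrm{sign}(H^{(\nicefrac{1}{1-t})}_J(\ve{x}_i)) \neq y_i} & \leq & \exp_t^{2-t}\left(-\sideset{^{(\nicefrac{1}{1-t})}_{(-\nicefrac{1}{1-t})}}{}\sum_{j} v_j u_{ji}\right) \;\leq\; \exp_t^{2-t}\left(-\sideset{^{(\nicefrac{1}{1-t})}}{}\sum_{j} v_j u_{ji}\right),
\end{eqnarray*}
which reduces the estimate to the quantity already handled in the unclamped case.

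From here I would close exactly as in the proof of Lemma~\ref{lemLS}: applying Lemma~\ref{lemIUNRAVEL}, the rightmost surrogate equals $m^{(2-t)t^*} \cdot q_{(J+1)i}^{2-t} \cdot \prod_{j=1}^{J} Z_{tj}^{2-t}$; the identity $(2-t) t^* = 1$ reduces the prefactor to $m$, cancelling the $1/m$ in the empirical risk upon summing over $i \in [m]$, and the co-simplex constraint $\sum_i q_{(J+1)i}^{1/t^*} = \sum_i q_{(J+1)i}^{2-t} = 1$ delivers the announced bound $\prod_{j=1}^J Z_{tj}^{2-t}$. The only real subtlety is tracking inequality directions through the sandwich step: the outer $-$ together with the monotonicity of $\exp_t^{2-t}(-\cdot)$ turns the ``lower bound on the sum'' in Lemma~\ref{propcsum}[3.] into the ``upper bound on the surrogate'' that we need; everything else is algebraic bookkeeping already done for the unclamped classifier.
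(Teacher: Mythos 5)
Your proposal is correct and follows essentially the same route as the paper's proof: fold $y_i$ into the symmetric doubly-clamped sum, use property [3.] of Lemma \ref{propcsum} together with the monotonicity of $z\mapsto \exp_t^{2-t}(-z)$ to pass to the upper-clamped sum, then invoke Lemma \ref{lemIUNRAVEL} and the co-simplex normalization exactly as for the unclamped separator. The inequality directions and the identification $\beta_j^{1-t}\mu_j = v_j$ are handled correctly, so nothing is missing.
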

\subsubsection{Geometric convergence of the empirical risk}
To get the right-hand side of \eqref{bZOERRLS} and \eqref{bZOERRCLS} as small as possible, we can independently compute each $\mu_j$ so as to minimize
\begin{eqnarray}
Z^{2-t}_{tj} (\mu) & \defeq & \sum_{i\in [m]} \exp^{2-t}_t\left(\log_t q_{ji} - \mu  u_{ji}\right). \label{defZMUj}
\end{eqnarray}
We proceed in two steps, first computing a convenient upperbound for \eqref{defZMUj}, and then finding the $\mu$ that minimizes this upperbound.

\noindent\textbf{Step 1}: We distinguish two cases depending on weight $q_{ji}$. Let $[m]_j^+ \defeq \{i : q_{ji} > 0\}$ and $[m]_j^\dagger \defeq \{i : q_{ji} = 0\}$:
\begin{enumerate}
\item [\textbf{Case 1}] $i \in [m]_j^+$. Let $r_{ji} \defeq u_{ji} / q_{ji}^{1-t}$ and suppose $R_j > 0$ is a real that satisfies
  \begin{eqnarray}
|r_{ji}| & \leq & R_j, \forall i\in [m]_j^+ \label{condRJ}.
    \end{eqnarray}
For any convex function $f$ defined on $[-1,1]$, we have $f(z) \leq ((1+z)/2)\cdot f(1) + ((1-z)/2) \cdot f(-1), \forall z \in [-1,1]$ (the straight line is the chord crossing $f$ at $z = -1, 1$). Because $z \mapsto [1-z]^{\frac{2-t}{1-t}}_+$ is convex for $t\leq 2$, for any $i \in [m]_j^+$
  \begin{eqnarray*}
    \lefteqn{\exp^{2-t}_t\left(\log_t q_{ji} - \mu  u_{ji}\right)}\\
    & = & \left[q_{ji}^{1-t} - (1-t) \mu u_{ji}\right]^{\frac{2-t}{1-t}}_+\\
                                                              & = & q_{ji}^{2-t} \cdot \left[1 - (1-t) \mu R_j \cdot \frac{r_{ji}}{R_j}\right]^{\frac{2-t}{1-t}}_+\\
                                                              & \leq & q_{ji}^{2-t} \cdot \frac{R_j +r_{ji}}{2R_j} \left[1 - (1-t) \mu R_j\right]^{\frac{2-t}{1-t}}_+ + q_{ji}^{2-t} \cdot \frac{R_j-r_{ji}}{2R_j} \left[1 + (1-t)\mu R_j\right]^{\frac{2-t}{1-t}}_+\\
    & & =  \frac{q_{ji}^{2-t}R_j + q_{ji} u_{ji}}{2R_j} \left[1 - (1-t) \mu R_j\right]^{\frac{2-t}{1-t}}_+ + \frac{q_{ji}^{2-t}R_j - q_{ji} u_{ji}}{2R_j} \left[1 + (1-t)\mu R_j \right]^{\frac{2-t}{1-t}}_+.
  \end{eqnarray*}
\item [\textbf{Case 2}] $i \in [m]_j^\dagger$. Let ${q^\dagger_j} > 0$ be a real that satisfies
  \begin{eqnarray}
\frac{|u_{ji}|}{{q^\dagger_j}^{1-t}} < R_j, \forall i \in [m]_j^\dagger. \label{condEpsilonJ}
    \end{eqnarray}
Using the same technique as in case 1, we find for any $i \in [m]_j^\dagger$
  \begin{eqnarray*}
    \lefteqn{\exp^{2-t}_t\left(\log_t q_{ji} - \mu  u_{ji}\right) }\\
    & = & \exp^{2-t}_t\left(-\frac{1}{1-t} - \mu  u_{ji}\right)\\
    & = & \left[ - (1-t) \mu u_{ji}\right]^{\frac{2-t}{1-t}}_+\\
    & \leq & \left[{q^\dagger_j}^{1-t} - (1-t) \mu u_{ji}\right]^{\frac{2-t}{1-t}}_+\\
                                                              & \leq & \frac{{q^\dagger_j}^{2-t}R_j + {q^\dagger_j} u_{ji}}{2R_j} \left[1 - (1-t) \mu R_j\right]^{\frac{2-t}{1-t}}_+ + \frac{{q^\dagger_j}^{2-t}R_j - {q^\dagger_j} u_{ji}}{2R_j} \left[1 + (1-t)\mu R_j \right]^{\frac{2-t}{1-t}}_+.
  \end{eqnarray*}
  \end{enumerate}
  Folding both cases into one and letting
  \begin{eqnarray}
    {q'}_{ji} & \defeq & \left\{
                                      \begin{array}{rcl}
                                        q_{ji} & \mbox{ if } & i \in [m]_j^+\\
                                        {q^\dagger_j} & \mbox{ if } & i \in [m]_j^\dagger
                                        \end{array}
                                      \right.,
  \end{eqnarray}
  we get after summation, using $m_j^\dagger \defeq \mathrm{Card}([m]_j^\dagger)$ and
  \begin{eqnarray}
    \rho_{j} & \defeq & \frac{1}{(1+m_j^\dagger {q^\dagger_j}^{2-t})R_j} \cdot \sum_{i\in [m]} {q'}_{ji} u_{ji} \quad (\in [-1,1]), \label{defRHOtj}
  \end{eqnarray}
  that
  \begin{eqnarray}
    \lefteqn{Z^{2-t}_{tj} (\mu)}\nonumber\\
    & \leq & \frac{(1+m_j^\dagger {q^\dagger_j}^{2-t})R_j}{2R_j} \cdot \left( (1+\rho_{j}) \left[1 - (1-t) \mu R_j\right]^{\frac{2-t}{1-t}}_+ + (1-\rho_{j}) \left[1 + (1-t) \mu R_j\right]^{\frac{2-t}{1-t}}_+ \right)\nonumber\\
    & & = \frac{1+m_j^\dagger {q^\dagger_j}^{2-t}}{2} \cdot \left( (1+\rho_{j}) \cdot \exp_{t}^{2-t} \left(-\mu R_j\right) + (1-\rho_{j}) \cdot \exp_{t}^{2-t} \left(\mu R_j\right) \right).\label{bZAda}
  \end{eqnarray}
\noindent\textbf{Step 2}: we have our upperbound for \eqref{defZMUj}. We now compute the minimizer $\mu^*$ of \eqref{bZAda}. If this minimizer satisfies
    \begin{eqnarray}
|\mu^*| & < & \frac{1}{R_j|1-t|}, \label{condMUSTAR}
    \end{eqnarray}
    then it can be found by ordinary differentiation, as the solution to
    \begin{eqnarray*}
 (1-\rho_{j}) \cdot \exp_{t} \left(\mu^* R_j\right) -  (1+\rho_{j}) \cdot \exp_{t} \left(-\mu^* R_j\right) & = & 0,
      \end{eqnarray*}
      which is equivalently
      \begin{eqnarray*}
        \frac{\exp_{t} \left(-\mu^* R_j\right)}{\exp_{t} \left(\mu^* R_j\right)} & = & \exp_t\left(-\mu^* R_j \ominus_t \mu^* R_j\right)\\
        & = & \frac{1-\rho_{j}}{1+\rho_{j}},
      \end{eqnarray*}
      where we recall $a \ominus_t b \defeq (a-b)/(1+(1-t)b)$. Solving it yields
      \begin{eqnarray*}
        \mu^* & = & \frac{1}{R_j} \cdot -\frac{1}{1-t} \cdot \left(\frac{(1-\rho_{j})^{1-t} - (1+\rho_{j})^{1-t}}{(1-\rho_{j})^{1-t} + (1+\rho_{j})^{1-t}}\right)\\
            & = & \frac{1}{R_j} \cdot -\frac{1}{1-t} \cdot \left(\frac{2(1-\rho_{j})^{1-t}}{(1-\rho_{j})^{1-t} + (1+\rho_{j})^{1-t}}-1\right)\\
        & = & -\frac{1}{R_j} \cdot \log_t \left(\frac{1-\rho_{j}}{M_{1-t}(1-\rho_{j},1+\rho_{j})}\right),
      \end{eqnarray*}
      where $M_q(a,b) \defeq ((a^q+b^q)/2)^{1/q}$ is the power mean with exponent $q$. We now check \eqref{condMUSTAR}.
      \begin{lemma}\label{lemBMUj}
        For any $t \in \mathbb{R}$, let
        \begin{eqnarray}
\mu_j & \defeq & -\frac{1}{R_j} \cdot \log_t \left(\frac{1-\rho_{j}}{M_{1-t}(1-\rho_{j},1+\rho_{j})}\right).\label{defMUj}
        \end{eqnarray}
        Then $|\mu_j| \leq 1/(R_j|1-t|)$.
        \end{lemma}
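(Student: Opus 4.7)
The plan is to evaluate the $t$-logarithm in the definition \eqref{defMUj} in closed form using \eqref{defExpLogT}, and then bound the result directly, without needing any convexity, Taylor expansion, or specific properties of $R_j$. First I would introduce the shorthands $a \defeq 1-\rho_j$ and $b \defeq 1+\rho_j$, both nonnegative since $\rho_j \in [-1,1]$ (as noted after \eqref{defRHO} and \eqref{defRHOtj}).

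The key observation is that raising $a/M_{1-t}(a,b)$ to the power $1-t$ transforms the power mean into its defining elementary expression: $M_{1-t}(a,b)^{1-t} = (a^{1-t}+b^{1-t})/2$, so the argument becomes $2a^{1-t}/(a^{1-t}+b^{1-t})$. Plugging this into $\log_t(z) = (z^{1-t}-1)/(1-t)$ yields
\begin{eqnarray*}
\log_t\left(\frac{1-\rho_j}{M_{1-t}(1-\rho_j,1+\rho_j)}\right) & = & \frac{1}{1-t}\cdot \frac{a^{1-t}-b^{1-t}}{a^{1-t}+b^{1-t}}.
\end{eqnarray*}
With $a,b\geq 0$, both $a^{1-t}$ and $b^{1-t}$ are nonnegative, so the inner ratio belongs to $[-1,1]$. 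Multiplying by $1/(1-t)$ then gives $|\log_t(\cdot)| \leq 1/|1-t|$ regardless of the sign of $1-t$, and dividing by $R_j$ delivers the claimed $|\mu_j| \leq 1/(R_j|1-t|)$.

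There is no substantial obstacle; the only care required is at the degenerate boundary $\rho_j \in \{-1,1\}$, where one of $a,b$ vanishes and, for $t>1$, the corresponding $a^{1-t}$ or $b^{1-t}$ could blow up. These cases correspond to a perfect weak classifier and are excluded in our boosting setting (as already remarked after Theorem \ref{thENTBOOST}); otherwise, the bound extends by continuity since the ratio just attains its extremum $\pm 1$. Note also that for $t=1$ the bound becomes vacuous ($1/|1-t|=\infty$), consistently with the unbounded leveraging coefficients of \adaboost~mentioned in the main text.
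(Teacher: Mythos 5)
Your proposal is correct and follows essentially the same route as the paper's proof: both evaluate $\log_t$ of the ratio in closed form via $M_{1-t}(a,b)^{1-t}=(a^{1-t}+b^{1-t})/2$, reducing the claim to bounding $\bigl|(a^{1-t}-b^{1-t})/(a^{1-t}+b^{1-t})\bigr|\leq 1$ for nonnegative $a^{1-t},b^{1-t}$ (the paper phrases this last step through the function $f(z,t)=(1-z^{1-t})/(1+z^{1-t})$ and its symmetry $f(z,t)=-f(z,2-t)$, while you invoke $|x-y|\leq x+y$ directly, which is an equivalent elementary observation). Your remark on the degenerate cases $\rho_j=\pm 1$ and on the vacuity of the bound at $t=1$ is consistent with the paper's setting and does not change the argument.
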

\begin{proof}
Equivalently, we must show 
      \begin{eqnarray*}
\left| \log_t \left(\frac{1-z}{M_{1-t}(1-z,1+z)}\right) \right| & \leq & \frac{1}{|1-t|} , \forall z \in [-1,1],
      \end{eqnarray*}
      which is equivalent to showing
      \begin{eqnarray*}
\left|\frac{2(1-z)^{1-t}}{(1-z)^{1-t}+(1+z)^{1-t}}-1\right| \left(= \left|\frac{1-\left(\frac{1+z}{1-z}\right)^{1-t}}{1+\left(\frac{1+z}{1-z}\right)^{1-t}}\right|\right)& \leq & 1 , \forall z \in [-1,1].
      \end{eqnarray*}
      Define function $f(z,t) \defeq (1-z^{1-t})/(1+z^{1-t})$ over $\mathbb{R}_{\geq 0}\times \mathbb{R}$: it is easy to check that for $t\leq 1, f(z,t) \in [-1,1]$, and the symmetry $f(z,t) = -f(z,2-t)$ also allows to conclude that for $t\geq 1, f(z,t) \in [-1,1]$. This ends the proof of Lemma \ref{lemBMUj}.
    \end{proof}
    \noindent For the expression of $\mu_j$ in \eqref{defMUj}, we get from \eqref{bZAda} the upperbound on $Z^{2-t}_{tj} (\mu_j)$:
    \begin{eqnarray*}
      Z^{2-t}_{tj} (\mu_j) & \leq & \frac{1+m_j^\dagger {q^\dagger_j}^{2-t}}{2} \cdot \left( (1+\rho_{j}) \cdot \exp_{t}^{2-t} \left(-\mu_j R_j\right) + (1-\rho_{j}) \cdot \exp_{t}^{2-t} \left(\mu_j R_j\right) \right)\\
      & & = \frac{1+m_j^\dagger {q^\dagger_j}^{2-t}}{2} \cdot \left( \frac{(1+\rho_{j})(1-\rho_{j})^{2-t}}{M^{2-t}_{1-t}(1-\rho_{j},1+\rho_{j})} + \frac{(1-\rho_{j})(1+\rho_{j})^{2-t}}{M^{2-t}_{1-t}(1-\rho_{j},1+\rho_{j})} \right)\\
                                & = & \left(1+m_j^\dagger {q^\dagger_j}^{2-t}\right) \cdot \frac{(1-\rho^2_{j})M^{1-t}_{1-t}(1-\rho_{j},1+\rho_{j})}{M^{2-t}_{1-t}(1-\rho_{j},1+\rho_{j})} \\
      & = & \left(1+m_j^\dagger {q^\dagger_j}^{2-t}\right) \cdot \frac{1-\rho^2_{j}}{M_{1-t}(1-\rho_{j},1+\rho_{j})}.
      \end{eqnarray*}
      We conclude that for both sets of classifiers defined in Lemmata \ref{lemLS} and \ref{lemClampedLS}, with the choice of $\mu_j$ in \eqref{defMUj}, we get
      \begin{eqnarray*}
  \frac{1}{m} \cdot \sum_{i\in [m]} \iver{\mathrm{sign}(H(\ve{x}_i)) \neq y_i} & \leq & \prod_{j=1}^J \left(1+m_j^\dagger {q^\dagger_j}^{2-t}\right) \cdot \frac{1-\rho^2_{j}}{M_{1-t}(1-\rho_{j},1+\rho_{j})}, \forall H \in \{H_J, H^{(\nicefrac{1}{1-t})}_J\}.
      \end{eqnarray*}
      To complete the proof of Theorem \ref{thBOOST1}, we just need to elicit the best $R_j$ \eqref{condRJ} and ${q^\dagger_j}$ \eqref{condEpsilonJ}; looking at their constraints suggests
      \begin{eqnarray*}
        R_j & \defeq & \max_{i \not\in [m]_j^\dagger} \frac{|y_i h_j(\ve{x}_i)|}{q_{ji}^{1-t}},\\
        {q^\dagger_j} & \defeq & \frac{\max_{i \in [m]_j^\dagger} |y_i h_j(\ve{x}_i)|^{1/(1-t)}}{R^{1/(1-t)}_j}.
      \end{eqnarray*}
      This completes the proof of Theorem \ref{thBOOST1}. We complete the proof by a few additional useful results in the context of Algorithm \tada.
      \begin{lemma}\label{lemADDR}
The following holds true: (i) $\rho_{j} \in [-1,1]$; (ii) if, among indexes not in $[m]_j^\dagger$, there exists at least one index with $u_{ji} > 0$ and one index with $u_{ji} < 0$, then for any $\mu \neq 0$, $Z^{2-t}_{tj} (\mu) > 0$ in \eqref{defZMUj} (in words, the new weigh vector $\ve{q}_{j+1}$ cannot be the null vector before normalization).
\end{lemma}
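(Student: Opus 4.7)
The plan for part (i) is to bound the numerator of $\rho_j$ in absolute value by its exact denominator. Split $\sum_{i\in[m]} {q'}_{ji} u_{ji}$ according to whether $i\in [m]_j^+$ or $i\in [m]_j^\dagger$. For $i \in [m]_j^+$, the definition $R_j \defeq \max_{i\not\in[m]_j^\dagger} |u_{ji}|/q_{ji}^{1-t}$ directly yields $|q_{ji} u_{ji}| \leq R_j q_{ji}^{2-t}$. Summing and using that $\ve{q}_j \in \tilde{\Delta}_m$ enforces $\sum_{i\in[m]} q_{ji}^{2-t} = 1$ (the indices in $[m]_j^\dagger$ contributing zero to this sum) gives $\sum_{i\in[m]_j^+} |q_{ji} u_{ji}| \leq R_j$. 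For $i \in [m]_j^\dagger$, the definition of ${q^\dagger_j}$ yields $|u_{ji}| \leq R_j {q^\dagger_j}^{1-t}$, hence $|{q^\dagger_j} u_{ji}| \leq R_j {q^\dagger_j}^{2-t}$; summing over the $m_j^\dagger$ such indices produces $R_j m_j^\dagger {q^\dagger_j}^{2-t}$. Adding the two contributions yields $|\sum_i {q'}_{ji} u_{ji}| \leq R_j(1+m_j^\dagger {q^\dagger_j}^{2-t})$, and dividing by the denominator in \eqref{defRHOtj} shows $|\rho_j|\leq 1$.

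For part (ii), the plan is to exhibit an index $i_0$ for which the corresponding summand in $Z^{2-t}_{tj}(\mu)$ is strictly positive. Writing the normalizer as $Z^{2-t}_{tj}(\mu) = \sum_{i\in[m]} [q_{ji}^{1-t} - (1-t)\mu u_{ji}]_+^{(2-t)/(1-t)}$, all summands are non-negative, so a single positive term suffices. Since $\mu\neq 0$ and $t \neq 1$, the product $(1-t)\mu$ has a definite non-zero sign. By hypothesis, there exist indices in $[m]_j^+$ with $u_{ji}$ of either sign, so we can pick $i_0 \in [m]_j^+$ with $(1-t)\mu u_{ji_0} \leq 0$. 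For this choice, $q_{ji_0}^{1-t} - (1-t)\mu u_{ji_0} \geq q_{ji_0}^{1-t} > 0$, so the $[\cdot]_+$ is vacuous and the summand is strictly positive, establishing $Z^{2-t}_{tj}(\mu) > 0$.

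There is no real obstacle here: (i) is a direct application of the defining bounds on $R_j$ and ${q^\dagger_j}$ combined with the co-simplex normalization, and (ii) is an extremal-index argument relying only on the sign diversity of the $u_{ji}$ restricted to $[m]_j^+$. The one subtlety worth flagging is that (ii) cannot be closed using indices in $[m]_j^\dagger$, since for those indices the summand reduces to $[-(1-t)\mu u_{ji}]_+^{(2-t)/(1-t)}$, which vanishes when $(1-t)\mu u_{ji}\geq 0$; restricting the search for $i_0$ to $[m]_j^+$ sidesteps this, and is precisely why the hypothesis of (ii) is phrased on indices outside $[m]_j^\dagger$.
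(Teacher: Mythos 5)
Your proof is correct and follows essentially the same route as the paper: for (i) the paper performs exactly your split of $\sum_i q'_{ji}|u_{ji}|$ over $[m]_j^+$ and $[m]_j^\dagger$, bounding the first part by $R_j$ via the co-simplex normalization and the second by $R_j m_j^\dagger {q^\dagger_j}^{2-t}$ via the definition of ${q^\dagger_j}$. For (ii) the paper simply declares the property trivial, and your explicit argument (choosing $i_0\in[m]_j^+$ with $(1-t)\mu u_{ji_0}\leq 0$ so that the corresponding summand stays strictly positive) is a valid instantiation of that claim.
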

\begin{proof}
To show (i) for $\rho_{j} \leq 1$, we write (using $u_{ji} \defeq y_i h_j(\ve{x}_i), \forall i \in [m]$ for short),
\begin{eqnarray*}
  (1+m_j^\dagger {q^\dagger_j}^{2-t})R_j \cdot \rho_{j} & = & \sum_{i\in [m]} {q'}_{ji} u_{ji} \\
           & \leq & \sum_{i\in [m]} {q'}_{ji} |u_{ji}|\\
                                                        & & = \sum_{i\in [m]_j^+} {q}^{2-t}_{ji} \cdot \frac{|u_{ji}|}{{q}^{1-t}_{ji}} + {q^\dagger_j}^{2-t} \cdot\frac{\sum_{i\in [m]_j^\dagger} |u_{ji}|}{{q^\dagger_j}^{1-t}}\\
                                                        & \leq & R_j \cdot \underbrace{\sum_{i\in [m]_j^+} {q}^{2-t}_{ji}}_{=1} + {q^\dagger_j}^{2-t} \cdot \frac{R_j \sum_{i\in [m]_j^\dagger} |u_{ji}|}{\max_{i \in [m]_j^\dagger} |u_{ji}|} \\
  & \leq & R_j + {q^\dagger_j}^{2-t} m_j^\dagger R_j  = (1+m_j^\dagger {q^\dagger_j}^{2-t})R_j,
\end{eqnarray*}
showing $\rho_{j} \leq 1$. Showing $\rho_{j} \geq -1$ proceeds in the same way. Property (ii) is trivial.
\end{proof}

\begin{lemma}
  \begin{eqnarray*}
K_t(z) & \leq & \exp\left(-\left(1-\frac{t}{2}\right)\cdot z^2\right)
    \end{eqnarray*}
\end{lemma}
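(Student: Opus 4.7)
The plan is to pass to the hyperbolic reparametrization $\theta \defeq \tanh^{-1}(z)$, so that $z = \tanh\theta$, $1-z^2 = \mathrm{sech}^2\theta$, and $1\pm z = e^{\pm\theta}\sqrt{1-z^2}$. Substituting into the power mean,
\begin{eqnarray*}
(1-z)^{1-t} + (1+z)^{1-t} & = & 2(1-z^2)^{(1-t)/2}\cosh((1-t)\theta),
\end{eqnarray*}
which gives $M_{1-t}(1-z,1+z) = \sqrt{1-z^2}\cdot \cosh((1-t)\theta)^{1/(1-t)}$ and hence
\begin{eqnarray*}
K_t(z) & = & \frac{\sqrt{1-z^2}}{\cosh((1-t)\theta)^{1/(1-t)}}.
\end{eqnarray*}
Taking logs and using $\log\sqrt{1-z^2} = -\log\cosh\theta$ together with $z^2 = \tanh^2\theta$, the claimed inequality $K_t(z)\leq \exp(-(1-t/2)z^2)$ is equivalent to
\begin{eqnarray*}
G(\theta) & \defeq & \log\cosh\theta + \frac{1}{1-t}\log\cosh((1-t)\theta) - \frac{2-t}{2}\tanh^2\theta \,\geq\, 0.
\end{eqnarray*}

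Next I would establish $G\geq 0$ by a monotonicity argument. Since $G$ is even and $G(0)=0$, it suffices to show $G'(\theta)\geq 0$ for $\theta\geq 0$. Direct differentiation gives
\begin{eqnarray*}
G'(\theta) & = & \tanh\theta + \tanh((1-t)\theta) - (2-t)\tanh\theta\cdot \mathrm{sech}^2\theta.
\end{eqnarray*}
For $t\in [0,1]$ (the regime of Theorem \ref{thBOOST1}) we have $r \defeq 1-t \in [0,1]$, and concavity of $\tanh$ on $[0,\infty)$ combined with $\tanh(0)=0$ yields the chord inequality $\tanh(r\theta)\geq r\tanh\theta$. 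Plugging this in,
\begin{eqnarray*}
G'(\theta) & \geq & \tanh\theta\cdot\bigl[1 + (1-t) - (2-t)\mathrm{sech}^2\theta\bigr] \;=\; (2-t)\tanh\theta\bigl(1-\mathrm{sech}^2\theta\bigr) \;=\; (2-t)\tanh^3\theta \,\geq\, 0.
\end{eqnarray*}
Thus $G$ is non-decreasing on $[0,\infty)$ with $G(0)=0$, so $G\geq 0$ throughout, which is exactly the desired bound.

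The main obstacle is locating the right reparametrization. A crude AM--GM bound on $M_{1-t}$ only gives $K_t(z)\leq \sqrt{1-z^2}\leq \exp(-z^2/2)$, which recovers \adaboost's rate but is strictly weaker than the target $\exp(-(1-t/2)z^2)$ whenever $t<1$, since $1-t/2>1/2$. The hyperbolic substitution is what captures the additional ``tempering'' gain exactly, through the residual factor $\cosh((1-t)\theta)^{1/(1-t)}$; once it is in place, the problem collapses to the elementary concavity of $\tanh$ on $[0,\infty)$.
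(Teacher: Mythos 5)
Your proof is correct, but it follows a genuinely different route from the paper's. The paper works directly in the variable $z$: it asserts that $K_t'(z)$ is concave on $z\geq 0$ for $t\in[0,1)$ with $K_t''(0)=-(2-t)$, deduces $K_t'(z)\leq -(2-t)z$, integrates to get the polynomial bound $K_t(z)\leq 1-\left(1-\tfrac{t}{2}\right)z^2$, and finishes with $1-x\leq e^{-x}$. You instead substitute $z=\tanh\theta$, which turns $K_t$ into $\sqrt{1-z^2}\,\cosh((1-t)\theta)^{-1/(1-t)}$ and reduces the claim to $G(\theta)\geq 0$, proved by monotonicity plus the chord inequality $\tanh(r\theta)\geq r\tanh\theta$ for $r=1-t\in[0,1]$; I checked the algebra ($G'$, the lower bound $(2-t)\tanh^3\theta$) and it is sound. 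What each buys: your argument is fully self-contained and elementary, whereas the paper's hinges on the unproved (and third-derivative-level) claim that $K_t'$ is concave; on the other hand, the paper's route is shorter and yields along the way the slightly sharper quadratic bound $K_t(z)\leq 1-\left(1-\tfrac{t}{2}\right)z^2$, which your logarithmic formulation bypasses. Two cosmetic points: at $t=1$ the term $\tfrac{1}{1-t}\log\cosh((1-t)\theta)$ must be read as its limit (or one just checks $\sqrt{1-z^2}\leq e^{-z^2/2}$ directly, consistent with $K_1$), and $z=\pm 1$ lies outside the range of $\tanh$, but there $K_t(\pm 1)=0$ so the bound is trivial; neither affects correctness, and your range $t\in[0,1]$ matches the regime the paper itself proves and uses.
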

\begin{proof}
  We remark that for $t\in [0,1), z\geq 0$, $K'_t(z)$ is concave and $K''_t(0) = -(2-t)$, so $K'_t(z) \leq -(2-t) z, \forall z\geq 0$, from which it follows by integration
  \begin{eqnarray*}
K_t(z) & \leq &  1 - \left(1-\frac{t}{2}\right)\cdot z^2
  \end{eqnarray*}
  and since $1-z\leq \exp(-z)$, we get the statement of the Lemma.
  \end{proof}

  \subsection{Proof of Theorem  \ref{th-CPE}}\label{proof-th-CPE}

  The proof proceeds in three parts. Part \textbf{(A)} makes a brief recall on encoding linear classifiers with decision trees. Part \textbf{(B)} solves \eqref{eqMU} in \mainfile, \textit{i.e.} finds boosting's leveraging coefficients as solution of:
  \begin{eqnarray}
    \ve{q}(\mu)^\top \ve{u} & = & 0.\label{eqMUSI}
  \end{eqnarray}
  we then simplify the loss obtained and elicit the conditional Bayes risk of the tempered loss, \textit{i.e.} \eqref{pcbr} in \mainfile. Part \textbf{(C)} elicits the partial losses and shows properness and related properties.

\begin{figure}[t]
  \begin{center}
    \includegraphics[trim=0bp 0bp 0bp 0bp,clip,width=\columnwidth]{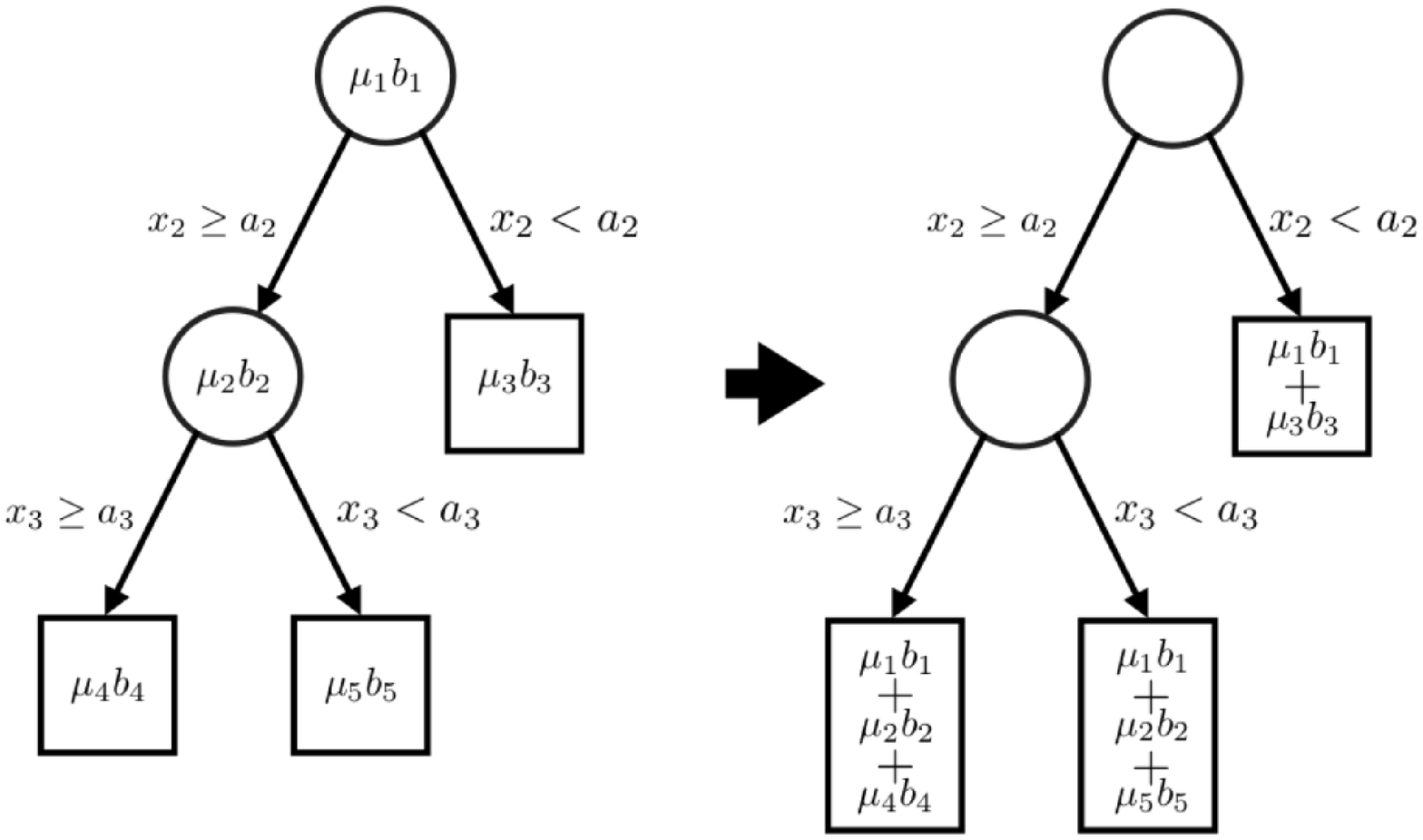}
    \end{center}
\caption{The weak learner provides weak hypotheses of the form $\iver{x_k \geq a_j} \cdot b_j$. From the boosting standpoint, this weak hypothesis is "as good as" the weak hypothesis $\overline{h}_j(\ve{x}) \defeq \iver{x_k < a_j} \cdot -b_j$. The predicates of both are used to craft a split, \textit{e.g.} for the root (in our depiction, $b_3 = -b_2$) and then solving \eqref{eqMUSI} provides the leveraging coefficients $\mu_.$. We then repeat this for as many splits as necessary. At the end, we can "percolate" nodes reals towards the leaves below and get an equivalent classifier that resembles a decision tree (right). See \cite{mnwRC} for further details.}
\label{f-DT-transformation}
\end{figure}

\paragraph{Part (A): encoding linear models with a tree architecture} We use the reduction trick of \cite{mnwRC} to design a decision tree (\cdt) boosting procedure, find out the (concave) loss equivalently minimized, just like in classical top-down \cdt~induction algorithms \cite{bfosCA}. The trick is simple: a \cdt~can be thought of as a set of constant linear classifiers. The prediction is the sum of predictions put at all nodes. Boosting fits those predictions at the nodes and percolating those to leaves gets a standard \cdt~with real predictions at the leaves. Figure \ref{f-DT-transformation} provides a detailed description of the procedure. 
  Let $\leaf$ denote a leaf node of the current tree $H$, with $H_\leaf \in \mathbb{R}$ the function it implements for leaf $\leaf$. If $\parent(\leaf)$ denotes its parent node (assuming wlog it is not the root node), we have
\begin{eqnarray}
H_\leaf & \defeq & H_{\parent(\leaf)} +  \mu_\leaf  h_\leaf, \label{defdtclass}
\end{eqnarray}
\paragraph{Part (B): eliciting the Bayes risk of the tempered loss} With our simple classifiers at hand, the tempered exponential loss $Z^{2-t}_{tj}$ in \eqref{defTEXPloss} (\mainfile) can be simplified to loss
  \begin{eqnarray}
    L(H)  & \defeq & \sum_i \exp^{2-t}_t\left(\log_t q_{1i} - y_i H_{\leaf(\ve{x}_i)} \right)\nonumber\\
    & = & \sum_{\leaf \in \leafset(H)} m^+_\leaf \exp^{2-t}_t\left(\log_t q_{1i} - H_{\leaf} \right) + m^-_\leaf \exp^{2-t}_t\left(\log_t q_{1i} + H_{\leaf} \right) \label{decompLOSS},
\end{eqnarray}
where $\leaf(\ve{x})$ is the leaf reached by observation $\ve{x}$ and $\leaf(H)$ its set of leaf nodes of $H$, and $H_{\leaf}$ sums all relevant values in \eqref{defdtclass}. Also, $m^+_\leaf, m^-_\leaf$ denote the cardinal of positive and negative examples at $\leaf$ and $p_\leaf \defeq m^+_\leaf / (m^+_\leaf + m^-_\leaf)$ the local proportion of positive examples at $\leaf$, and finally $r_\leaf \defeq (m^+_\leaf + m^-_\leaf) / m$ the total proportion of examples reaching $\leaf$. 
  \begin{theorem}\label{thCPELOSS}
    If we compute $\mu_\leaf$ the solution of \eqref{eqMUSI}, we end up with the prediction $H_\leaf$:
    \begin{eqnarray}
  H_\leaf & = & \frac{q^{1-t}_{1i}}{1-t} \cdot \frac{\left(\frac{m^+_\leaf}{m^-_\leaf}\right)^{1-t} - 1}{\left(\frac{m^+_\leaf}{m^-_\leaf}\right)^{1-t} + 1}\\
  & = & \frac{q^{1-t}_{1i}}{1-t} \cdot \frac{p_\leaf^{1-t} - (1-p_\leaf)^{1-t}}{p_\leaf^{1-t} + (1-p_\leaf)^{1-t}},\label{predHLeaf}
    \end{eqnarray}
    and the loss of the decision tree equals:
    \begin{eqnarray}
      L(H) & = & \sum_{\leaf \in \leafset(H)} r_\leaf \cdot \frac{2 p_\leaf (1-p_\leaf)}{M_{1-t}(p_\leaf, 1-p_\leaf)},\label{cbrTM}\\
      & = & \expect_{\leaf} [\bayestrisk(p_\leaf)] \label{cbrTL}.
\end{eqnarray}
    \end{theorem}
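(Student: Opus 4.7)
}
The plan is to exploit the fact that $L(H)$ in \eqref{decompLOSS} decomposes as a sum over the current leaves of $H$, so solving \eqref{eqMUSI} and evaluating $L$ can both be done leaf by leaf. First I would fix a leaf $\leaf$, abbreviate $a\defeq\log_t q_{1i}$ (a quantity independent of $i$ since the initial \acrotem~is uniform, $q_{1i}=1/m^{t^*}$), and write the per-leaf contribution to $L(H)$ as
\begin{eqnarray*}
L_\leaf(H_\leaf) &\defeq& m^+_\leaf \exp^{2-t}_t(a-H_\leaf)+m^-_\leaf\exp^{2-t}_t(a+H_\leaf).
\end{eqnarray*}
Using $\tfrac{d}{dz}\exp^{2-t}_t(z)=(2-t)\exp_t(z)$ (from $\exp'_t=\exp^t_t$ and $\exp^{1-t}_t\cdot\exp^t_t=\exp_t$), the first-order condition $L'_\leaf(H_\leaf)=0$ reads
\begin{eqnarray*}
m^+_\leaf\exp_t(a-H_\leaf)=m^-_\leaf\exp_t(a+H_\leaf),
\end{eqnarray*}
which is exactly the localized boosting equation \eqref{eqMUSI}, since within the leaf $h$ is constant and the edge constraint reduces to matching positive and negative weighted masses.

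Next I would solve this equation algebraically. Raising both sides to the $(1-t)$-th power linearizes the $\exp_t$ into $1+(1-t)(a\mp H_\leaf)$. Using the key identity $1+(1-t)\log_t z=z^{1-t}$ to eliminate $a$ (so $1+(1-t)a=q_{1i}^{1-t}$), the equation becomes affine in $H_\leaf$ and solves in one line to
\begin{eqnarray*}
H_\leaf \;=\; \frac{q_{1i}^{1-t}}{1-t}\cdot\frac{c-1}{c+1}, \qquad c\defeq(m^+_\leaf/m^-_\leaf)^{1-t}.
\end{eqnarray*}
Substituting $m^+_\leaf/m^-_\leaf=p_\leaf/(1-p_\leaf)$ yields \eqref{predHLeaf}.

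Finally I would plug this optimum back into $L_\leaf$. The two inner arguments simplify using the same identity:
\begin{eqnarray*}
q_{1i}^{1-t}\mp(1-t)H_\leaf \;=\; q_{1i}^{1-t}\cdot\frac{2(1-p_\leaf)^{1-t}\text{ or }2p_\leaf^{1-t}}{p_\leaf^{1-t}+(1-p_\leaf)^{1-t}}.
\end{eqnarray*}
Raising to the exponent $(2-t)/(1-t)$, noting $q_{1i}^{2-t}=1/m$ (since $(2-t)t^*=1$), and using $m^\pm_\leaf=m r_\leaf\cdot\{p_\leaf,1-p_\leaf\}$ factors out $r_\leaf\cdot p_\leaf(1-p_\leaf)\cdot[p_\leaf^{1-t}+(1-p_\leaf)^{1-t}]$, after which the remaining ratio collapses to $2/M_{1-t}(p_\leaf,1-p_\leaf)$ by definition of the power mean. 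This gives \eqref{cbrTM}, and \eqref{cbrTL} is then just the rewriting using Lemma \ref{lemCONT}.

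The step that is mostly notationally delicate is the last substitution: one must carefully track the exponents $1-t$ vs.\ $2-t$ and the factor $q_{1i}^{2-t}$ to see how $M_{1-t}$ reappears and how the $(m^+_\leaf(1-p_\leaf)^{2-t}+m^-_\leaf p_\leaf^{2-t})$ combination factors into $p_\leaf(1-p_\leaf)[p_\leaf^{1-t}+(1-p_\leaf)^{1-t}]$. The conceptual work, however, is minimal: per-leaf decomposition plus the two identities $1+(1-t)\log_t z=z^{1-t}$ and $\exp_t(z)^{1-t}=1+(1-t)z$ drive the entire derivation.
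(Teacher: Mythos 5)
Your proposal is correct and follows essentially the same route as the paper: per-leaf decomposition of the loss \eqref{decompLOSS}, solving the leaf-localized version of \eqref{eqMUSI} via the linearizing identities $\exp_t(z)^{1-t}=1+(1-t)z$ and $1+(1-t)\log_t z=z^{1-t}$ (the paper phrases this through $\ominus_t$ and solves for $\mu_\leaf$ before adding $H_{\parent(\leaf)}$, whereas you solve for $H_\leaf$ directly — a cosmetic difference, legitimized by the equivalence of the edge equation and stationarity of $Z_t$ already established in Theorem \ref{thENTBOOST}), then back-substitution with $q_{1i}^{2-t}=1/m$ to collapse each leaf's contribution to $2r_\leaf p_\leaf(1-p_\leaf)/M_{1-t}(p_\leaf,1-p_\leaf)$. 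Both arguments share the same implicit assumption that the $[\cdot]_+$ clamp is inactive in these manipulations, so no gap relative to the paper's own proof.
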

\begin{proof}
   To compute $\mu_\leaf$, \eqref{eqMU} is reduced to the examples reaching $\leaf$, that is, it simplifies to
  \begin{eqnarray}
m^+_\leaf \exp_t\left(\log_t q_{1i} -  H_{\parent(\leaf)} -  R_\leaf \mu_\leaf  h_\leaf\right) & = & m^-_\leaf \exp_t\left(\log_t q_{1i} +  H_{\parent(\leaf)} +  R_\leaf \mu_\leaf  h_\leaf\right)\,\label{eqMUTREE}
  \end{eqnarray}
  that we solve for $\mu_\leaf$.  Equivalently,
\begin{eqnarray*}
\frac{\exp_t\left(\log_t q_{1i} +  H_{\parent(\leaf)} +  R_\leaf \mu_\leaf  h_\leaf\right)}{\exp_t\left(\log_t q_{1i} -  H_{\parent(\leaf)} -  R_\leaf \mu_\leaf  h_\leaf\right)} & = & \frac{m^+_\leaf}{m^-_\leaf},
\end{eqnarray*}
or, using $\exp_t(u) / \exp_t(v) = \exp_t(u\ominus_t v)$,
\begin{eqnarray*}
\frac{2H_{\parent(\leaf)} +  2R_\leaf \mu_\leaf  h_\leaf}{1 + (1-t) (\log_t q_{1i} -  H_{\parent(\leaf)} -  R_\leaf \mu_\leaf  h_\leaf)} & = & \log_t\left(\frac{m^+_\leaf}{m^-_\leaf}\right),
\end{eqnarray*}
after reorganizing:
\begin{eqnarray*}
  R_\leaf \mu_\leaf  h_\leaf  & = & \frac{(1 + (1-t) (\log_t q_{1i} -  H_{\parent(\leaf)}))\cdot \log_t\left(\frac{m^+_\leaf}{m^-_\leaf}\right) - 2H_{\parent(\leaf)}}{2 + (1-t) \log_t\left(\frac{m^+_\leaf}{m^-_\leaf}\right)},
\end{eqnarray*}
which yields the prediction at $\leaf$:
\begin{eqnarray}
  H_\leaf & = & H_{\parent(\leaf)} + \frac{(1 + (1-t) (\log_t q_{1i} -  H_{\parent(\leaf)}))\cdot \log_t\left(\frac{m^+_\leaf}{m^-_\leaf}\right) - 2H_{\parent(\leaf)}}{2 + (1-t) \log_t\left(\frac{m^+_\leaf}{m^-_\leaf}\right)}\\
          & = & \frac{(1 + (1-t) \log_t q_{1i} )\cdot \log_t\left(\frac{m^+_\leaf}{m^-_\leaf}\right) }{2 + (1-t) \log_t\left(\frac{m^+_\leaf}{m^-_\leaf}\right)}\\
          & = & q^{1-t}_{1i} \cdot \frac{\log_t\left(\frac{m^+_\leaf}{m^-_\leaf}\right)}{2 + (1-t)\log_t\left(\frac{m^+_\leaf}{m^-_\leaf}\right)}\\
  & = & \frac{q^{1-t}_{1i}}{1-t} \cdot \frac{\left(\frac{m^+_\leaf}{m^-_\leaf}\right)^{1-t} - 1}{\left(\frac{m^+_\leaf}{m^-_\leaf}\right)^{1-t} + 1}\\
  & = & \frac{q^{1-t}_{1i}}{1-t} \cdot \frac{p_\leaf^{1-t} - (1-p_\leaf)^{1-t}}{p_\leaf^{1-t} + (1-p_\leaf)^{1-t}}.\label{predHLeaf}
\end{eqnarray}
We plug $H_\leaf$ back in the loss for all leaves and get, using $q_{1i} = 1/m^{1/(2-t)}$:
\begin{eqnarray}
  L(H)   & = & \sum_{\leaf \in \leafset(H)} \left\{\begin{array}{c}
                                                                        m^+_\leaf \exp_t^{2-t} \left(\log_t q_{1i} - \frac{q^{1-t}_{1i}}{1-t} \cdot \frac{p_\leaf^{1-t} - (1-p_\leaf)^{1-t}}{p_\leaf^{1-t} + (1-p_\leaf)^{1-t}}\right) \\
                                                                        +  m^-_\leaf \exp_t^{2-t} \left(\log_t q_{1i} + \frac{q^{1-t}_{1i}}{1-t} \cdot \frac{p_\leaf^{1-t} - (1-p_\leaf)^{1-t}}{p_\leaf^{1-t} + (1-p_\leaf)^{1-t}}\right)
                                                                        \end{array}\right.
\end{eqnarray}
We simplify. First,
\begin{eqnarray}
  \lefteqn{m^+_\leaf \exp_t^{2-t} \left(\log_t q_{1i} - \frac{q^{1-t}_{1i}}{1-t} \cdot \frac{p_\leaf^{1-t} - (1-p_\leaf)^{1-t}}{p_\leaf^{1-t} + (1-p_\leaf)^{1-t}}\right) }\nonumber\\
  & = & m^+_\leaf \left[q^{1-t}_{1i} \cdot \left(1 - \frac{p_\leaf^{1-t} - (1-p_\leaf)^{1-t}}{p_\leaf^{1-t} + (1-p_\leaf)^{1-t}}\right)\right]_+^{\frac{2-t}{1-t}}\nonumber\\
  & = & \frac{m^+_\leaf}{m}\cdot \left[\frac{2(1-p_\leaf)^{1-t}}{p_\leaf^{1-t} + (1-p_\leaf)^{1-t}}\right]_+^{\frac{2-t}{1-t}}\\
  & = & \frac{m^+_\leaf}{m}\cdot \left(\frac{1-p_\leaf}{M_{1-t}(p_\leaf, 1-p_\leaf)}\right)^{2-t},
\end{eqnarray}
and then
\begin{eqnarray}
  \lefteqn{m^-_\leaf \exp_t^{2-t} \left(\log_t q_{1i} + \frac{q^{1-t}_{1i}}{1-t} \cdot \frac{p_\leaf^{1-t} - (1-p_\leaf)^{1-t}}{p_\leaf^{1-t} + (1-p_\leaf)^{1-t}}\right) }\nonumber\\
  & = & m^-_\leaf \left[q^{1-t}_{1i} \cdot \left(1 + \frac{p_\leaf^{1-t} - (1-p_\leaf)^{1-t}}{p_\leaf^{1-t} + (1-p_\leaf)^{1-t}}\right)\right]_+^{\frac{2-t}{1-t}}\nonumber\\
  & = & \frac{m^-_\leaf}{m}\cdot \left[\frac{2 p_\leaf^{1-t}}{p_\leaf^{1-t} + (1-p_\leaf)^{1-t}}\right]_+^{\frac{2-t}{1-t}}\\
  & = & \frac{m^-_\leaf}{m}\cdot \left(\frac{p_\leaf}{M_{1-t}(p_\leaf, 1-p_\leaf)}\right)^{2-t},
\end{eqnarray}
and we can simplify the loss, 
\begin{eqnarray}
  L(H) & = & \sum_{\leaf \in \leafset(H)} r_\leaf p_\leaf\left(\frac{1-p_\leaf}{M_{1-t}(p_\leaf, 1-p_\leaf)}\right)^{2-t} + r_\leaf (1-p_\leaf) \left(\frac{p_\leaf}{M_{1-t}(p_\leaf, 1-p_\leaf)}\right)^{2-t}\label{suggPL}\\
  & = &  \sum_{\leaf \in \leafset(H)} r_\leaf \cdot \frac{p_\leaf (1-p_\leaf)^{2-t} + (1-p_\leaf) p_\leaf^{2-t}}{M^{2-t}_{1-t}(p_\leaf, 1-p_\leaf)}  \\
  & = &  \sum_{\leaf \in \leafset(H)} r_\leaf \cdot \frac{p_\leaf (1-p_\leaf) \cdot (p_\leaf^{1-t} + (1-p_\leaf)^{1-t})}{M^{2-t}_{1-t}(p_\leaf, 1-p_\leaf)}  \\
  & = &  \sum_{\leaf \in \leafset(H)} r_\leaf \cdot \frac{2 p_\leaf (1-p_\leaf) \cdot M^{1-t}_{1-t}(p_\leaf, 1-p_\leaf)}{M^{2-t}_{1-t}(p_\leaf, 1-p_\leaf)}  \\
  & = &  \sum_{\leaf \in \leafset(H)} r_\leaf \cdot \frac{2 p_\leaf (1-p_\leaf)}{M_{1-t}(p_\leaf, 1-p_\leaf)},
\end{eqnarray}
as claimed. This ends the proof of Theorem \ref{thCPELOSS}.
\end{proof}
\paragraph{Part (C): partial losses and their properties} The proof relies on the following Theorem. We recall that a loss is symmetric iff its partial losses satisfy $\partialloss{1}(u) = \partialloss{-1}(1-u), \forall u\in [0,1]$ \cite{nnOT} and differentiable iff its partial losses are differentiable.
\begin{theorem}
  Suppose $t<2$. A set of partial losses having the conditional Bayes risk $\bayestrisk$ in \eqref{cbrTL} are 
\begin{eqnarray}
\partialtloss{1}(u) \defeq \left(\frac{1-u}{M_{1-t}(u, 1-u)}\right)^{2-t} & , & \partialtloss{-1}(u) \defeq \partialtloss{1}(1-u).
\end{eqnarray}
The tempered loss is then symmetric and differentiable. It is strictly proper for any $t \in (-\infty, 2)$ and proper for $t = -\infty$.
\end{theorem}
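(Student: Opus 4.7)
I will split the theorem into four assertions — Bayes-risk consistency, symmetry, differentiability, and strict properness on $(-\infty,2)$ (with properness at $-\infty$) — and dispatch them in that order, with most of the effort going into properness.

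Consistency reduces to a one-line algebraic check. Substituting $u=v$ into $\poirisk(u,v) = v\,\partialtloss{1}(u) + (1-v)\,\partialtloss{-1}(u)$, pulling $M^{2-t}_{1-t}(v,1-v)$ out of the denominator, and using $v^{1-t}+(1-v)^{1-t} = 2M^{1-t}_{1-t}(v,1-v)$ yields
\begin{eqnarray*}
\poirisk(v,v) &=& \frac{v(1-v)^{2-t} + (1-v)\,v^{2-t}}{M^{2-t}_{1-t}(v,1-v)} \;=\; \frac{v(1-v)\bigl(v^{1-t}+(1-v)^{1-t}\bigr)}{M^{2-t}_{1-t}(v,1-v)} \;=\; \frac{2v(1-v)}{M_{1-t}(v,1-v)},
\end{eqnarray*}
which matches $\bayestrisk$ from Lemma \ref{lemCONT}. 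Symmetry is baked into the definition $\partialtloss{-1}(u) \defeq \partialtloss{1}(1-u)$, and differentiability on $(0,1)$ follows from smoothness of $u\mapsto u^{1-t}$, $u\mapsto (1-u)^{1-t}$ together with strict positivity of $M_{1-t}(u,1-u)$ there (with $t=1$ handled by the continuous extension already used for $\log_t$ and $\exp_t$).

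For strict properness on $t\in(-\infty,2)$, the plan is to show that $u\mapsto \poirisk(u,v)$ has $u=v$ as its unique critical point on $(0,1)$ and that this critical point is the (unique) minimum. Writing $\poirisk = N/D$ with $N(u) = v(1-u)^{2-t} + (1-v)u^{2-t}$ and $D(u) = M^{2-t}_{1-t}(u,1-u)$, a chain-rule computation through $M^{1-t}_{1-t} = (u^{1-t}+(1-u)^{1-t})/2$ gives $D'(u) = (2-t)\,M_{1-t}(u,1-u)\cdot(u^{-t}-(1-u)^{-t})/2$. Setting $N'D = ND'$, cancelling the common factor $(2-t)M_{1-t}/2$, and substituting the formula for $M^{1-t}_{1-t}$, the critical-point equation reduces — after expanding and cancelling like terms $(1-v)u^{2-2t}$ and $v(1-u)^{2-2t}$ on both sides — to the polynomial identity $(1-2v)\,u(1-u) = v(1-u)^2 - (1-v)u^2$, which simplifies directly to $u=v$. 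That this unique critical point is the minimum I would confirm by comparing with the boundary values $\poirisk(0,v) = v\cdot 2^{(2-t)/(1-t)}$ and $\poirisk(1,v) = (1-v)\cdot 2^{(2-t)/(1-t)}$ (both finite because $2-t>0$), and verifying each strictly exceeds $\bayestrisk(v)$ on $(0,1)$ via the inequality $M_{1-t}(v,1-v) > (1-v)\cdot 2^{-1/(1-t)}$, itself immediate from $v^{1-t}>0$.

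For the limiting case $t=-\infty$, I would use $M_{+\infty}(v,1-v) = \max\{v,1-v\}$ to get $\bayestrisk_{-\infty}(v) = 2\min\{v,1-v\}$, which is concave but only piecewise linear — hence the loss is proper but not strictly proper. The explicit step-function form $\partialinftloss{1}(u) = 2\iver{u\le 1/2}$ is then read off by observing that $\bigl((1-u)/M_{1-t}(u,1-u)\bigr)^{1-t} = 2/(1+(u/(1-u))^{1-t})$ tends to $2$ for $u<1/2$ and to $0$ for $u>1/2$, while the extra factor $(1-u)/M_{1-t} \to 1$ (resp.\ $(1-u)/u$) in the two regimes. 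The main obstacle is the algebraic reduction of $N'D=ND'$ to $u=v$: the cancellations are clean but have to be carried out carefully, and the auxiliary check that no minimum escapes to $u\in\{0,1\}$ is where the hypothesis $t<2$ (ensuring $2-t>0$ and keeping the partial losses finite on $[0,1]$) enters.
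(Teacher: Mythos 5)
Your proof follows the paper's core strategy: the same decomposition into consistency/symmetry/differentiability/properness, the same reduction of strict properness to the critical-point equation $N'D=ND'$ with $N(u)=v(1-u)^{2-t}+(1-v)u^{2-t}$ and $D$ a power of the $(1-t)$-power mean, and the same algebraic cancellation down to $u=v$ (your identity $(1-2v)u(1-u)=v(1-u)^2-(1-v)u^2$ is exactly the paper's simplified equation, and it does collapse to $u-v=0$). Where you diverge is the finishing step: the paper pushes the computation one step further and exhibits the closed form $\frac{\partial}{\partial u}\poirisk(u,v)=(2-t)\,2^{\frac{2-t}{1-t}}\,\frac{u-v}{(u(1-u))^{t}S_{1-t}^{3-2t}(u,1-u)}$, so the sign of the derivative is that of $u-v$ and strict global minimality at $u=v$ is immediate; you instead argue "unique interior critical point plus boundary domination," which is logically sound and a legitimate alternative, at the price of a separate boundary check. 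Your treatment of $t=-\infty$ (pass to the limit, step-function partial loss, properness of the 0/1 loss) matches the paper's.

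Two details in your boundary argument need repair, though neither threatens the result. The values $\poirisk(0,v)=v\cdot 2^{(2-t)/(1-t)}$ and $\poirisk(1,v)=(1-v)\cdot 2^{(2-t)/(1-t)}$ are only correct for $t<1$: for $t\in[1,2)$ one has $M_{1-t}(u,1-u)\rightarrow 0$ as $u\rightarrow 0$ (since $u^{1-t}\rightarrow+\infty$ and the exponent $1/(1-t)$ is negative), so $\partialtloss{1}(0^+)=+\infty$, not a finite constant; correspondingly your auxiliary inequality $M_{1-t}(v,1-v)>(1-v)\cdot 2^{-1/(1-t)}$ actually reverses for $t>1$ because raising to the negative power $1/(1-t)$ flips the inequality. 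The fix is a trivial case split -- for $t\in[1,2)$ the boundary values are infinite and dominate $\bayestrisk(v)$ automatically, while your computation is valid verbatim for $t<1$. Finally, strict properness also requires the degenerate ground truths $v\in\{0,1\}$, which your boundary comparison (predicated on $v\in(0,1)$) does not cover; the paper checks these directly from its intermediate equation, and in your setup it is a one-line observation that $\poirisk(u,0)=\partialtloss{1}(1-u)$ vanishes only at $u=0$ (and symmetrically for $v=1$). With those touch-ups your argument is complete.
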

\begin{proof}
  Symmetry and differentiability are straightforward. To check strict properness, we analyze the cases for $t\neq 1$ (otherwise, it is Matusita's loss, and thus strictly proper), we compute the solution $u$ to
  \begin{eqnarray}
\frac{\partial}{\partial u} \poirisk(u,v) & = & 0.\label{eqpu}
  \end{eqnarray}
  To this end, let $N(u) \defeq v (1-u)^{2-t} + (1-v) u^{2-t}$ and the $q$-sum
  \begin{eqnarray}
S_{q}(a,b) & \defeq & (a^{q} + b^{q})^{1/q} = 2^{1/q} \cdot M_q(a,b).
    \end{eqnarray}
We also let $D(u) \defeq S_{1-t}^{2-t}(u, 1-u)$. Noting $\poitrisk(u,v) = 2^{\frac{2-t}{1-t}} \cdot N(u) / D(u)$ and $D(u) \neq 0, \forall u\in [0,1]$, the set of solutions of \eqref{eqpu} are the set of solutions to $N'(u) D(u) = N(u) D'(u)$, which boils down, after simplification, to
  \begin{eqnarray*}
    \lefteqn{((1-v)u^{1-t} - v(1-u)^{1-t}) S_{1-t}^{2-t}(u, 1-u)}\\
    & = & (v(1-u)^{2-t} + (1-v)u^{2-t})(u^{-t} - (1-u)^{-t}) S_{1-t}(u, 1-u),
  \end{eqnarray*}
  developing and simplifying yields a first simplified expression $(1-2v)(u(1-u))^{1-t} = vu^{-1}(1-u)^{2-t} - (1-v)(1-u)^{-t}u^{2-t}$, which, after reorganising to isolate expressions depending on $v$, yields
  \begin{eqnarray}
(u(1-u))^{1-t} + (1-u)^{-t}u^{2-t} & = & v \cdot \left(u^{-t}(1-u)^{2-t} + (1-u)^{-t}u^{2-t} + 2(u(1-u))^{1-t}\right). \label{eqUV1}
  \end{eqnarray}
  Assuming $v \in (0,1)$, we multiply by $(u(1-u))^{1-t}$ (we shall check $u \in (0,1)$) and simplify, which yields $u(1-u) + u^2 = v((1-u)^2 + u^2 + 2u(1-u))$, and indeed yields
  \begin{eqnarray}
u & = & v,
  \end{eqnarray}
  and we check from \eqref{eqUV1} that if $v=0$ (resp. $v=1$), then necessarily $u=0$ (resp. $u=1$). To complete the proof, using the previous derivations, we can then simplify
  \begin{eqnarray}
\frac{\partial}{\partial u} \poirisk(u,v) & = & (2-t) \cdot 2^{\frac{2-t}{1-t}} \cdot \frac{u-v}{(u(1-u))^t \cdot S_{1-t}^{3-2t}(u, 1-u)},
  \end{eqnarray}
  which shows that if $2-t > 0$ but $t \neq -\infty$, $u=v$ is a strict minimum of the pointwise conditional risk, completing the proof for strict properness. Strict properness is sufficient to show by a simple computation that $\bayestrisk$ is \eqref{cbrTL}. For $t=-\infty$, we pass to the limit and use the fact that we can also write
\begin{eqnarray}
\partialtloss{1}(u) & = & \frac{1}{M_{1-t^*}\left(1,\left(\frac{u}{1-u}\right)^{\frac{1}{t^*}}\right)} \quad (\mbox{we recall }t^* \defeq 1/(2-t))
\end{eqnarray}
$t\rightarrow -\infty$ is equivalent to $t^* \rightarrow 0^+$. If $u<1/2$, $u/(1-u) < 1$ and so we see that
\begin{eqnarray*}
  \lim_{t^* \rightarrow 0^+} M_{1-t^*}\left(1,\left(\frac{u}{1-u}\right)^{\frac{1}{t^*}}\right)  & = & \frac{1}{2},
\end{eqnarray*}
because $M_1$ is the arithmetic mean. When $u > 1/2$, $u/(1-u) > 1$ and so this time
\begin{eqnarray*}
  \lim_{t^* \rightarrow 0^+} M_{1-t^*}\left(1,\left(\frac{u}{1-u}\right)^{\frac{1}{t^*}}\right)  & = & + \infty.
\end{eqnarray*}
Hence,
\begin{eqnarray}
\partialinftloss{1}(u) & = & 2 \cdot \iver{u \leq 1/2},
  \end{eqnarray}
which is (twice) the partial loss of the 0/1 loss \cite{rwID}.
\end{proof}
This ends the proof of Theorem \ref{th-CPE}.

\newpage

\section{Supplementary material on experiments} \label{sec-sup-exp}

\subsection{Domains}\label{sec-doms}

Table \ref{t-s-domains} presents the 10 domains we used for our experiments.

\begin{table}[t]
\begin{center}
\begin{tabular}{|cc|r|r|l|}
  \hline \hline
  Domain & Source & \multicolumn{1}{c|}{$m$} & \multicolumn{1}{c|}{$d$} & \\ \hline
  \domainname{sonar} & UCI & 208 & 60 & {\tiny \url{https://archive.ics.uci.edu/ml/datasets/wine+quality}}\\
  \domainname{winered} & UCI & 1 599 & 12 & {\tiny \url{https://archive.ics.uci.edu/ml/datasets/wine+quality}}\\
  \domainname{abalone} & UCI & 4 177 & 9 & {\tiny \url{https://archive.ics.uci.edu/ml/datasets/abalone}}\\
  \domainname{qsar} & UCI & 1 055 & 41 & {\tiny \url{https://archive.ics.uci.edu/ml/datasets/QSAR+biodegradation}}\\
  \domainname{winewhite} & UCI & 4 898 & 12 & {\tiny \url{https://archive.ics.uci.edu/ml/datasets/wine+quality}}\\
  \domainname{hillnonoise} & UCI & 1 212 & 101 & {\tiny \url{http://archive.ics.uci.edu/ml/datasets/hill-valley}}\\
  \domainname{hillnoise} & UCI & 1 212 & 101 & {\tiny \url{http://archive.ics.uci.edu/ml/datasets/hill-valley}}\\
  \domainname{eeg} & UCI & 14 980 & 15 & {\tiny \url{https://archive.ics.uci.edu/ml/datasets/EEG+Eye+State}}\\
  \domainname{creditcard}$^{*}$ & UCI & 14 599 & 24 & {\tiny \url{https://archive.ics.uci.edu/ml/datasets/default+of+credit+card+clients}}\\
  \domainname{adult} & UCI & 32 561 & 15 & {\tiny \url{https://archive.ics.uci.edu/ml/datasets/adult}}\\
\hline\hline
\end{tabular} 
\end{center}
\caption{Public domains considered in our experiments ($m=$ total number
  of examples, $d=$ total number of example's features, including class), ordered in
  increasing $m \times d$ (see text). (*) first $m$ rows in the domain.}
  \label{t-s-domains}
\end{table}

\subsection{Implementation details and full set of experiments on linear combinations of decision trees}\label{sec-full-exp}

\paragraph{Summary} This Section depicts the full set of experiments summarized in Table \ref{tab:stattests} (\mainfile), from Table \ref{tab:plots-sonar} to Table \ref{tab:plots-adult}. Tables are ordered in increasing size of the domain (Table \ref{t-s-domains}). In all cases, up to $J=20$ trees have been trained, of size 15 (total number of nodes, except the two biggest domains, for which the size is 5). For all datasets, except \domainname{creditcard}
and \domainname{adult}, we have tested $t$ in the complete range, $t\in \{0.0, 0.2, 0.4, 0.6, 0.8, 0.9, 1.0, 1.1\}$ (the \mainfile~only reports results for $t\geq 0.6$), and in all cases, models both clamped and not clamped. For each dataset, we have set a 10-folds stratified cross-validation experiment, and report the averages for readability (Table \ref{tab:stattests} in \mainfile~gives the results of a Student paired $t$-test on error averages for comparison, limit $p$-val = 0.1). We also provide two examples of training error averages for domains \domainname{hillnoise} and \domainname{hillnonoise} (Tables \ref{tab:plots-terr-hillnonoise} and \ref{tab:plots-terr-hillnoise}).

\paragraph{Implementation details of \tadaboost} First, regarding file format, we only input a \texttt{.csv} file to \tadaboost. We do not specify a file with feature types as in ARFF files. \tadaboost~recognizes the type of each feature from its column content and distinguishes two main types of features: numerical and categorical. The distinction is important to design the splits during decision tree induction: for numerical values, splits are midpoints between two successive observed values. For categorical, splits are partitions of the feature values in two non-empty subsets. Our implementation of \tadaboost~(programmed in Java) makes it possible to choose $t$ not just in the range of values for which we have shown that boosting-compliant convergence is possible ($t \in [0,1]$), but also $t>1$. Because we thus implement \adaboost~($t=1$) but also for $t>1$, weights can fairly easily become infinite, we have implemented a safe-check during training, counting the number of times the weights become infinite or zero (note that in this latter case, this really is a problem just for \adaboost~because in theory this should never happen unless the weak classifiers achieve perfect (or perfectly wrong) classification), but also making sure leveraging coefficients for classifiers do not become infinite for \adaboost, a situation that can happen because of numerical approximations in encoding. In our experiments, we have observed that none of these problematic cases did occur (notice that this could not be the case if we were to boost for a large number of iterations). We have implemented algorithm \tadaboost~exactly as specified in \mainfile. The weak learner is implemented to train a decision tree in which the stopping criterion is the size of the tree reaching a user-fixed number of nodes. There is thus no pruning. Also, the top-down induction algorithm proceeds by iteratively picking the heaviest leaf in the tree and then choosing the split that minimizes the expected Bayes risk of the tempered loss, computing using the same $t$ values as for \tadaboost, and with the constraint to not get pure leaves (otherwise, the real prediction at the leaves, which relies on the link of the loss, would be infinite for \adaboost). In our implementation of decision-tree induction, when the number of possible splits exceeds a fixed number $S$ (currently, 2 000), we pick the best split in a subset of $S$ splits picked at random.

\paragraph{Results} First, one may notice in several plots that the average test error increases with the number of trees. This turns out to be a sign of overfitting, as exemplified for domains \domainname{hillnonoise} and \domainname{hillnoise}, for which we provide the training curves. If we align the training curves at $T=1$ (the value is different because the splitting criterion for training the tree is different), we notice that the experimental convergence on training is similar for all values of $t$ (Tables \ref{tab:plots-terr-hillnonoise} and \ref{tab:plots-terr-hillnoise}). The other key experimental result, already visible from Table \ref{tab:stattests} (\mainfile), is that pretty much all tested values of $t$ are necessary to get the best results. One could be tempted to conclude that $t$ slightly smaller than $1.0$ seems to the a good fit from Table \ref{tab:stattests} (\mainfile), but the curves show that this is more a consequence of the Table being computed for $J=20$ trees. The case of \domainname{eeg} illustrates best this phenomenon: while small $t$-values are clearly the best when there is no noise, the picture is completely reversed when there is training noise. Notice that this ordering is almost reversed on \domainname{creditcard} and \domainname{adult}: when there is noise, small values of $t$ tend to give better results. Hence, in addition to getting (i) a pruning mechanism that works for all instances of the tempered loss and (ii) a way to guess the right number of models in the ensemble, a good problem to investigate is in fact appropriately tuning $t$ in a domain-dependent way. Looking at all plots reveals that substantial gains could be obtained with an accurate procedure (over the strategy that would be to always pick a fixed $t$, \textit{e.g.} $t=1$).

  \setlength\tabcolsep{0pt}
  \newcommand{\dname}{sonar}
\newcommand{\nonoisetag}{May_13th__9h_11m_16s}
\newcommand{\noisetag}{May_13th__9h_44m_44s}
\begin{table*}
  \centering
 \includegraphics[trim=0bp 0bp 0bp 0bp,clip,width=\columnwidth]{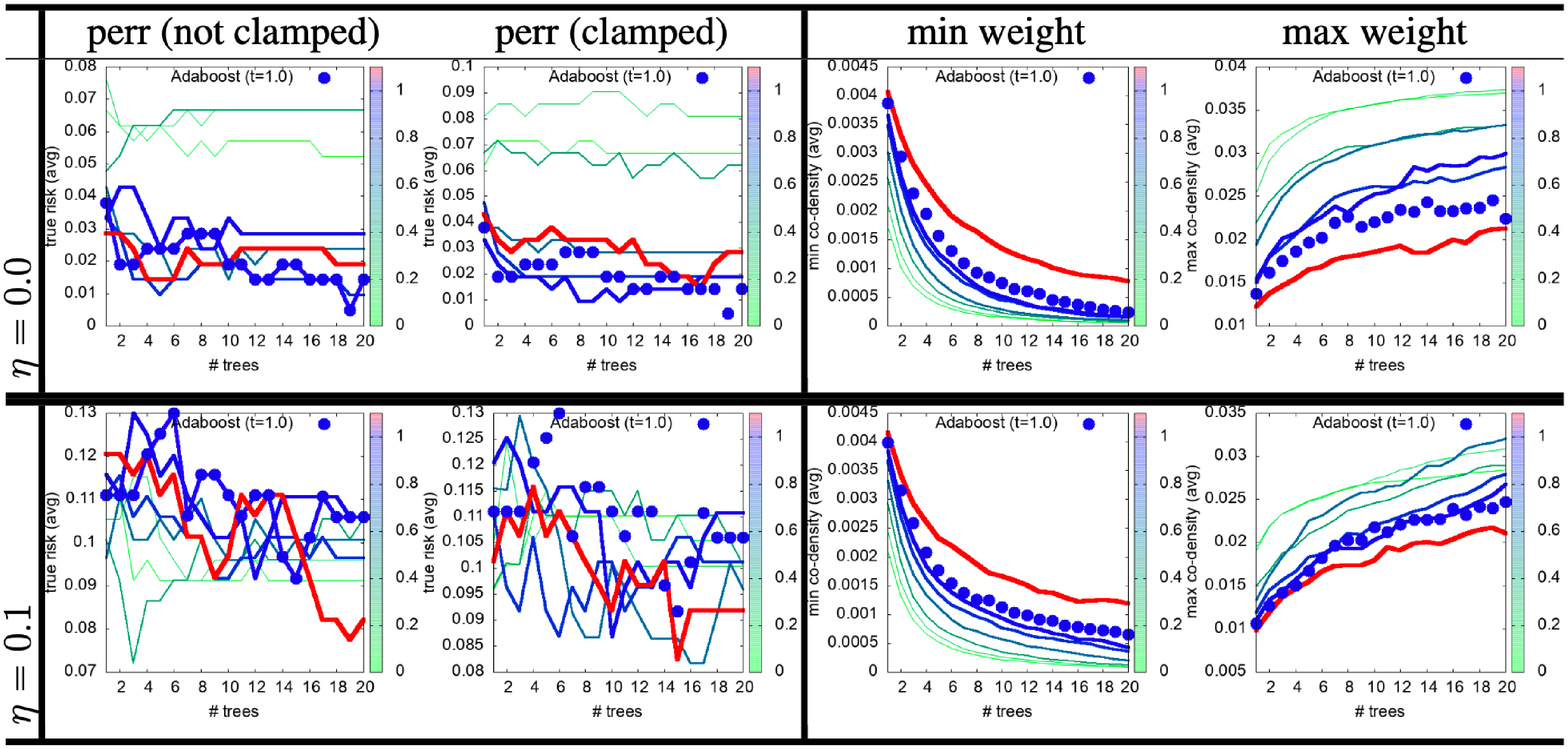}
\caption{Experiments on \tadaboost~comparing with \adaboost~($t=1$, bullets) on domain \domainname{\dname}, when trained without noise ($\eta = 0.0$, top row) and with noise ($\eta = 0.1$, bottom row). Columns are, from left to right, the estimated true error of non-clamped and clamped models, and the min and max codensity weights. The set of $t$ values used is displayed in each plot with a colormap (right), and varying thickness of curves for an additional ease of reading (the thicker the curve, the larger $t$). \adaboost's reference results are displayed with bullets. Averages shown for readability.}
    \label{tab:plots-\dname}
  \end{table*}

\renewcommand{\dname}{winered}
\renewcommand{\nonoisetag}{May_12th__16h_58m_28s}
\renewcommand{\noisetag}{May_12th__18h_14m_1s}
\begin{table*}
  \centering
   \includegraphics[trim=0bp 0bp 0bp 0bp,clip,width=\columnwidth]{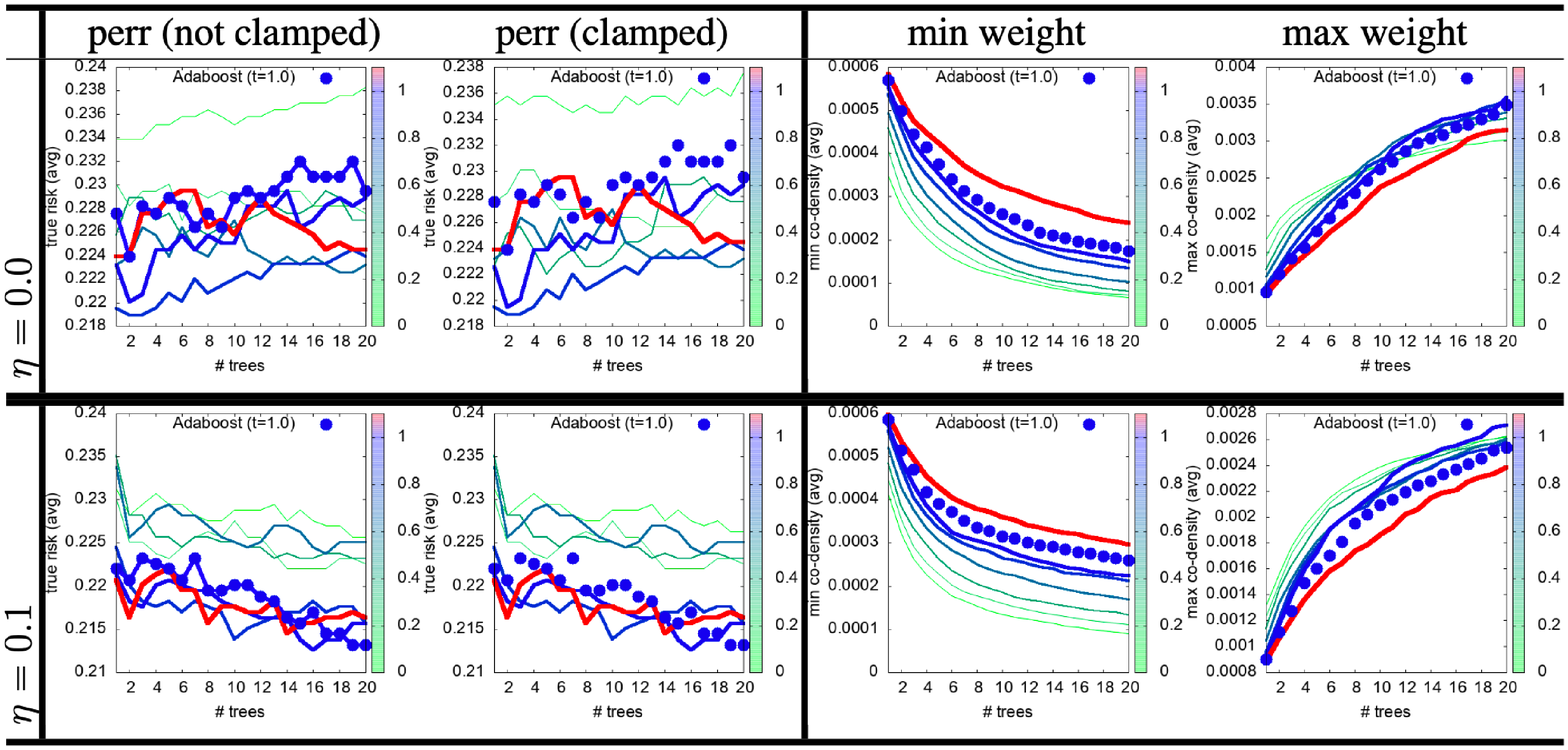}
\caption{Experiments on \tadaboost~comparing with \adaboost~($t=1$, bullets) on domain \domainname{\dname}. Conventions follow Table \ref{tab:plots-sonar}.}
    \label{tab:plots-winered}
  \end{table*}

 \renewcommand{\dname}{abalone}
\renewcommand{\nonoisetag}{May_12th__20h_23m_7s}
\renewcommand{\noisetag}{May_13th__6h_2m_28s}
\begin{table*}
  \centering
   \includegraphics[trim=0bp 0bp 0bp 0bp,clip,width=\columnwidth]{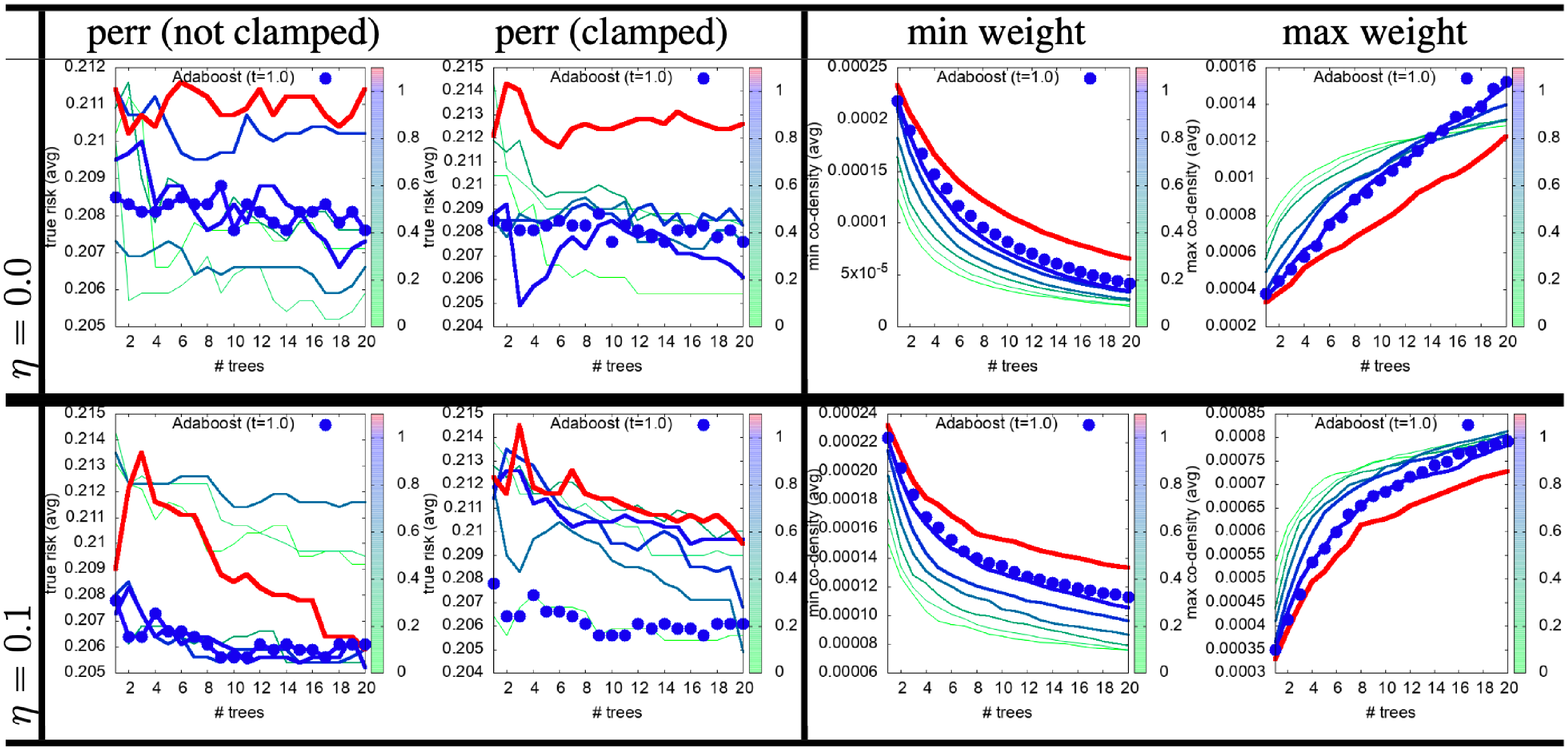}
\caption{Experiments on \tadaboost~comparing with \adaboost~($t=1$, bullets) on domain \domainname{\dname}. Conventions follow Table \ref{tab:plots-sonar}.}
    \label{tab:plots-\dname}
  \end{table*}

\renewcommand{\dname}{qsar}
\renewcommand{\nonoisetag}{May_12th__19h_29m_11s}
\renewcommand{\noisetag}{May_16th__4h_59m_22s}
\begin{table*}
  \centering
   \includegraphics[trim=0bp 0bp 0bp 0bp,clip,width=\columnwidth]{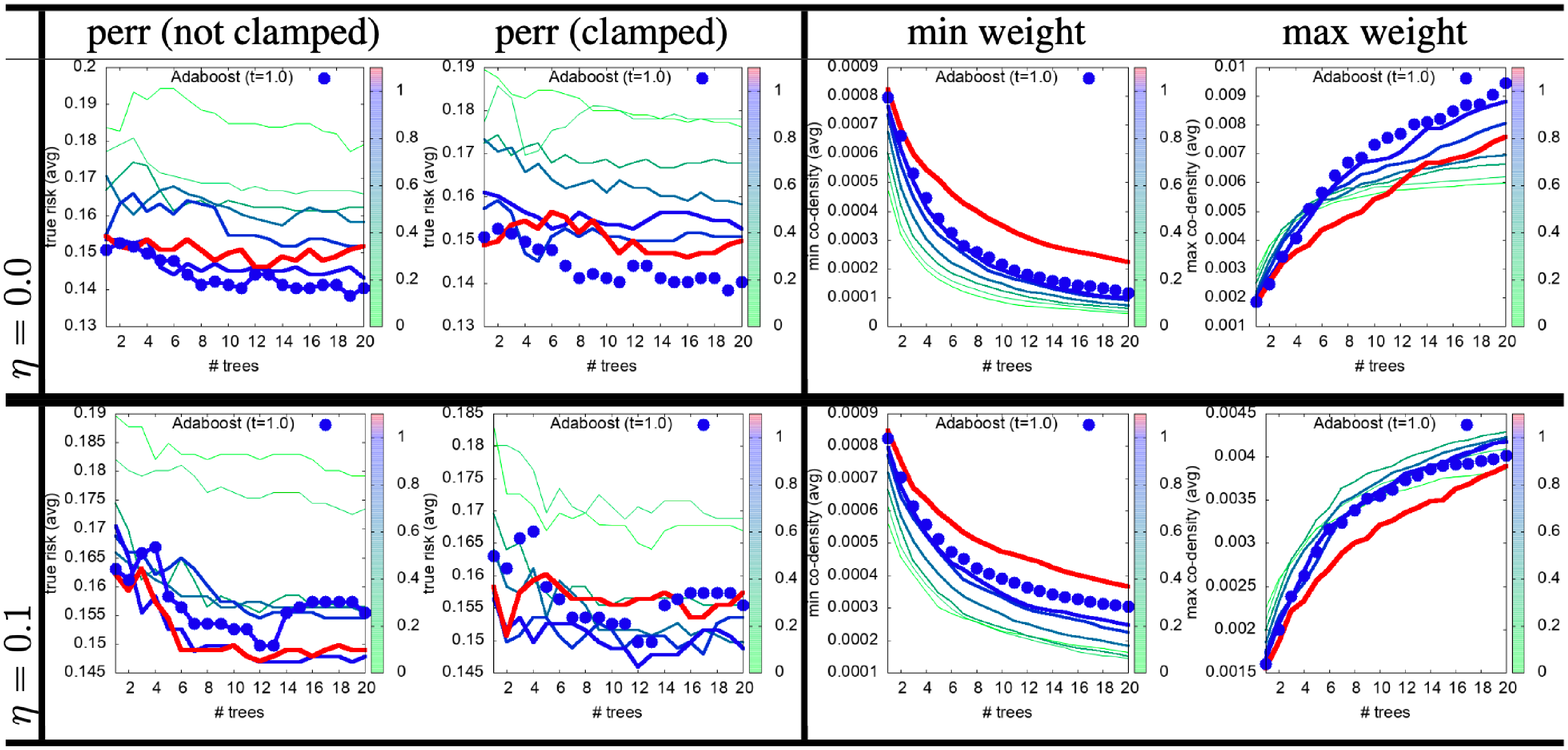}
\caption{Experiments on \tadaboost~comparing with \adaboost~($t=1$, bullets) on domain \domainname{\dname}. Conventions follow Table \ref{tab:plots-sonar}.}
    \label{tab:plots-\dname}
  \end{table*}
  
 \renewcommand{\dname}{winewhite}
\renewcommand{\nonoisetag}{May_13th__10h_44m_37s}
\renewcommand{\noisetag}{May_13th__16h_34m_33s}
\begin{table*}
  \centering
   \includegraphics[trim=0bp 0bp 0bp 0bp,clip,width=\columnwidth]{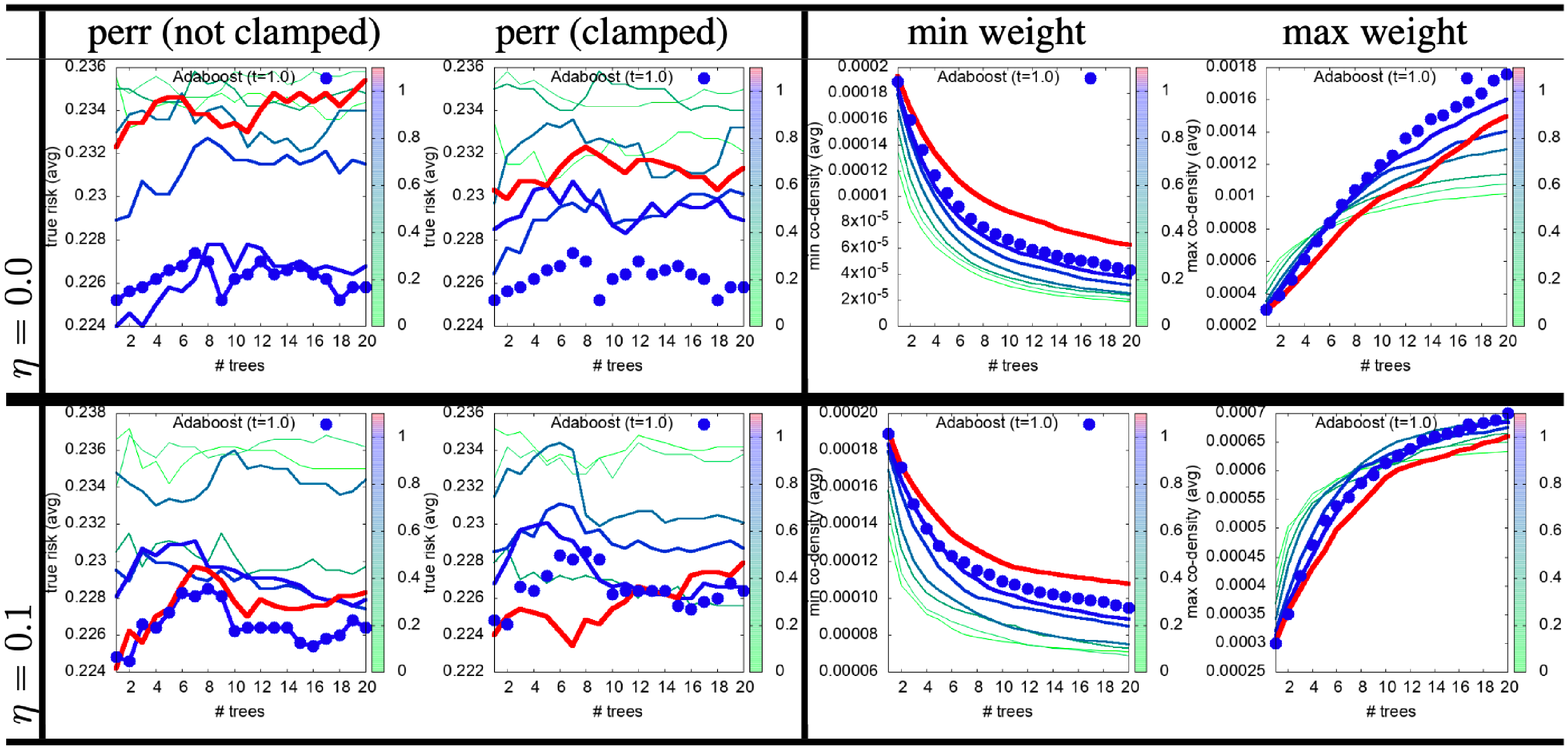}
\caption{Experiments on \tadaboost~comparing with \adaboost~($t=1$, bullets) on domain \domainname{\dname}. Conventions follow Table \ref{tab:plots-sonar}.}
    \label{tab:plots-\dname}
  \end{table*}

\renewcommand{\dname}{hillnonoise}
\renewcommand{\nonoisetag}{May_15th__17h_37m_2s}
\renewcommand{\noisetag}{May_15th__20h_35m_11s}
\begin{table*}
  \centering
    \includegraphics[trim=0bp 0bp 0bp 0bp,clip,width=\columnwidth]{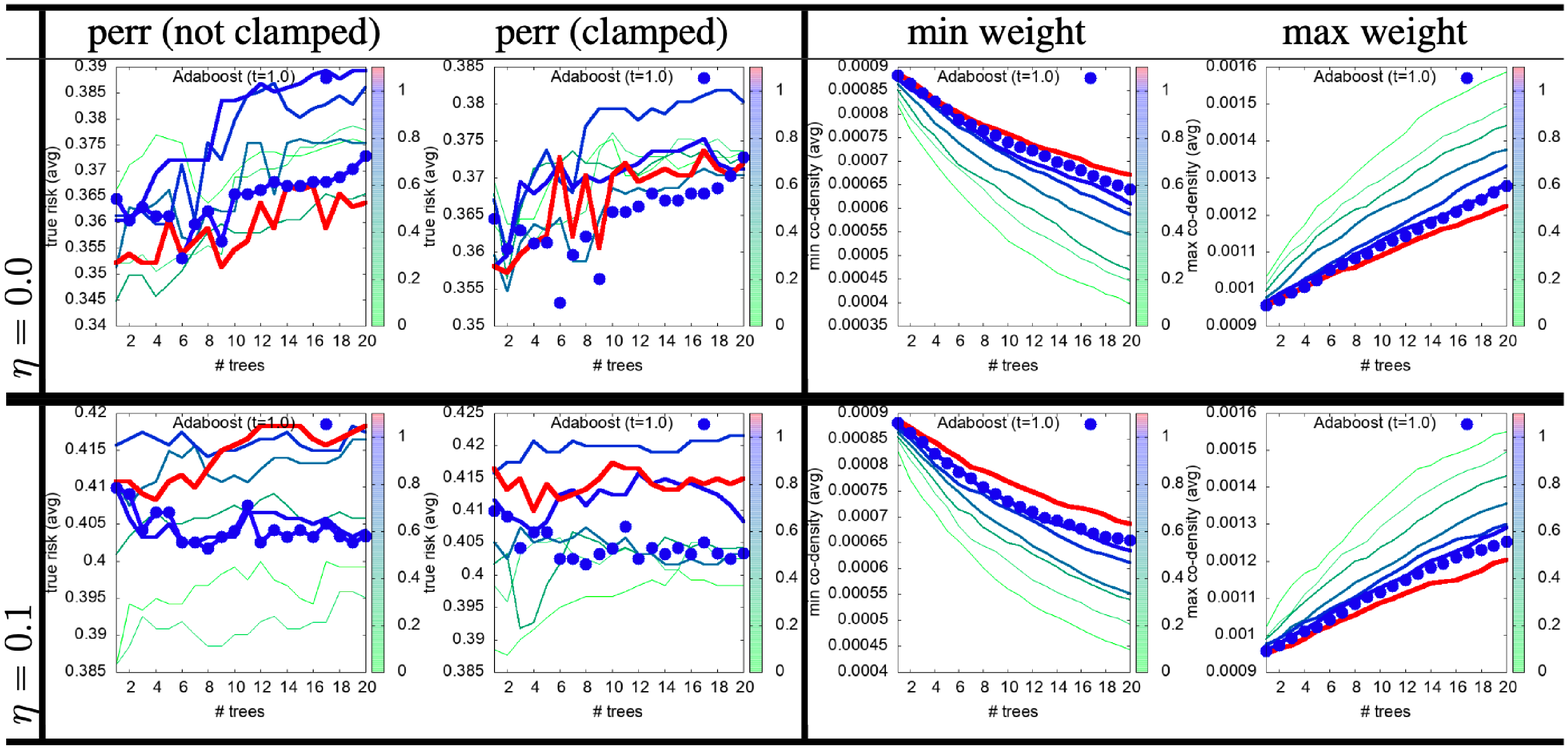}
\caption{Experiments on \tadaboost~comparing with \adaboost~($t=1$, bullets) on domain \domainname{\dname}. Conventions follow Table \ref{tab:plots-sonar}.}
    \label{tab:plots-\dname}
  \end{table*}

  \begin{table*}
  \centering
    \includegraphics[trim=0bp 0bp 0bp 0bp,clip,width=\columnwidth]{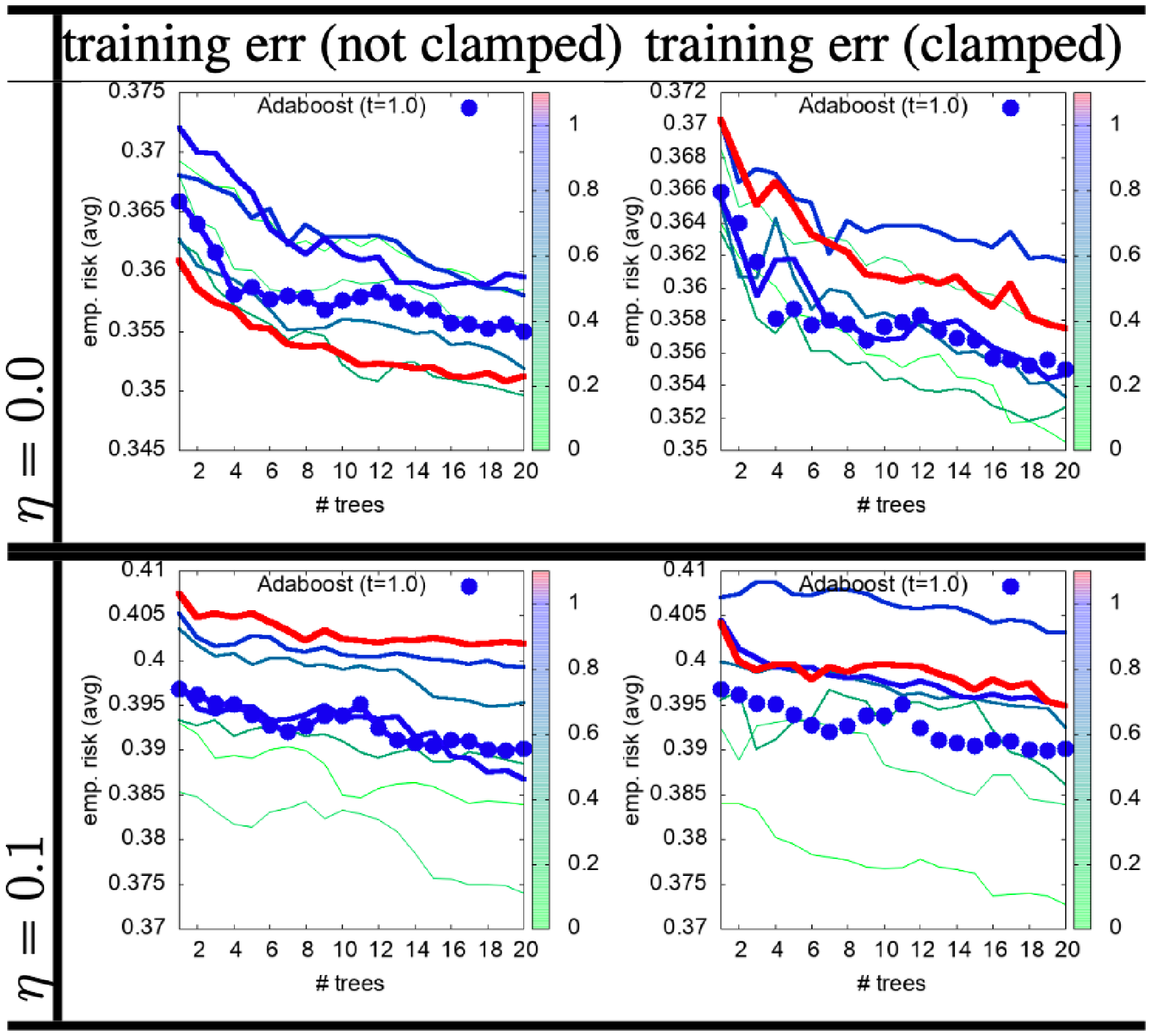}
\caption{Experiments on \tadaboost~comparing with \adaboost~($t=1$, bullets) on domain \domainname{\dname}: training errors displayed for all algorithms using conventions from Table \ref{tab:plots-sonar}. See text for details.}
    \label{tab:plots-terr-\dname}
  \end{table*}

  \renewcommand{\dname}{hillnoise}
\renewcommand{\nonoisetag}{May_16th__4h_0m_13s}
\renewcommand{\noisetag}{May_16th__7h_41m_38s}
\begin{table*}
  \centering
  \includegraphics[trim=0bp 0bp 0bp 0bp,clip,width=\columnwidth]{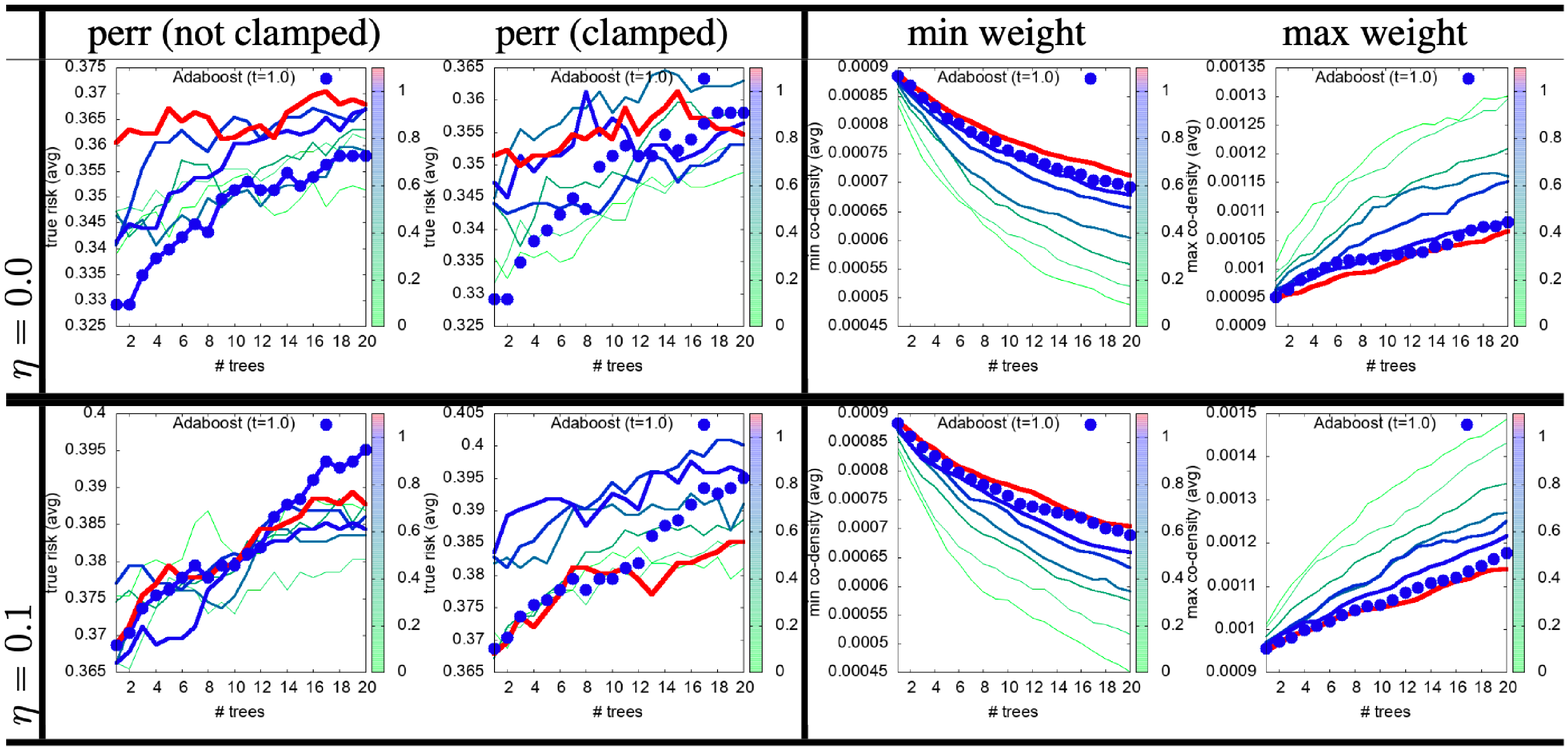}
\caption{Experiments on \tadaboost~comparing with \adaboost~($t=1$, bullets) on domain \domainname{\dname}. Conventions follow Table \ref{tab:plots-sonar}.}
    \label{tab:plots-\dname}
  \end{table*}

  \begin{table*}
  \centering
  \includegraphics[trim=0bp 0bp 0bp 0bp,clip,width=\columnwidth]{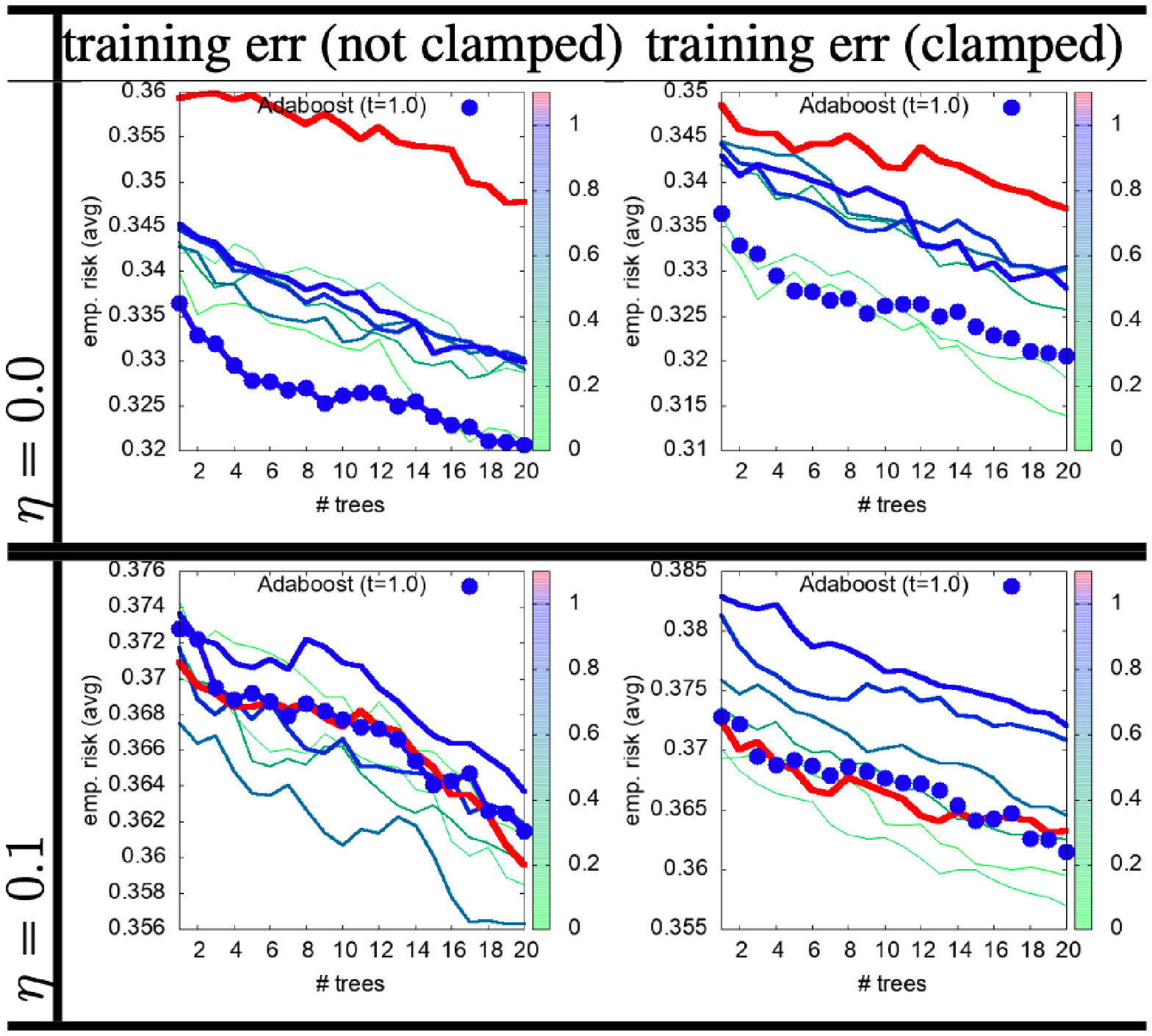}
\caption{Experiments on \tadaboost~comparing with \adaboost~($t=1$, bullets) on domain \domainname{\dname}: training errors displayed for all algorithms using conventions from Table \ref{tab:plots-sonar}. See text for details.}
    \label{tab:plots-terr-\dname}
  \end{table*}

  \renewcommand{\dname}{eeg}
\renewcommand{\nonoisetag}{May_13th__10h_44m_11s}
\renewcommand{\noisetag}{May_14th__12h_16m_28s}
\begin{table*}
  \centering
  \includegraphics[trim=0bp 0bp 0bp 0bp,clip,width=\columnwidth]{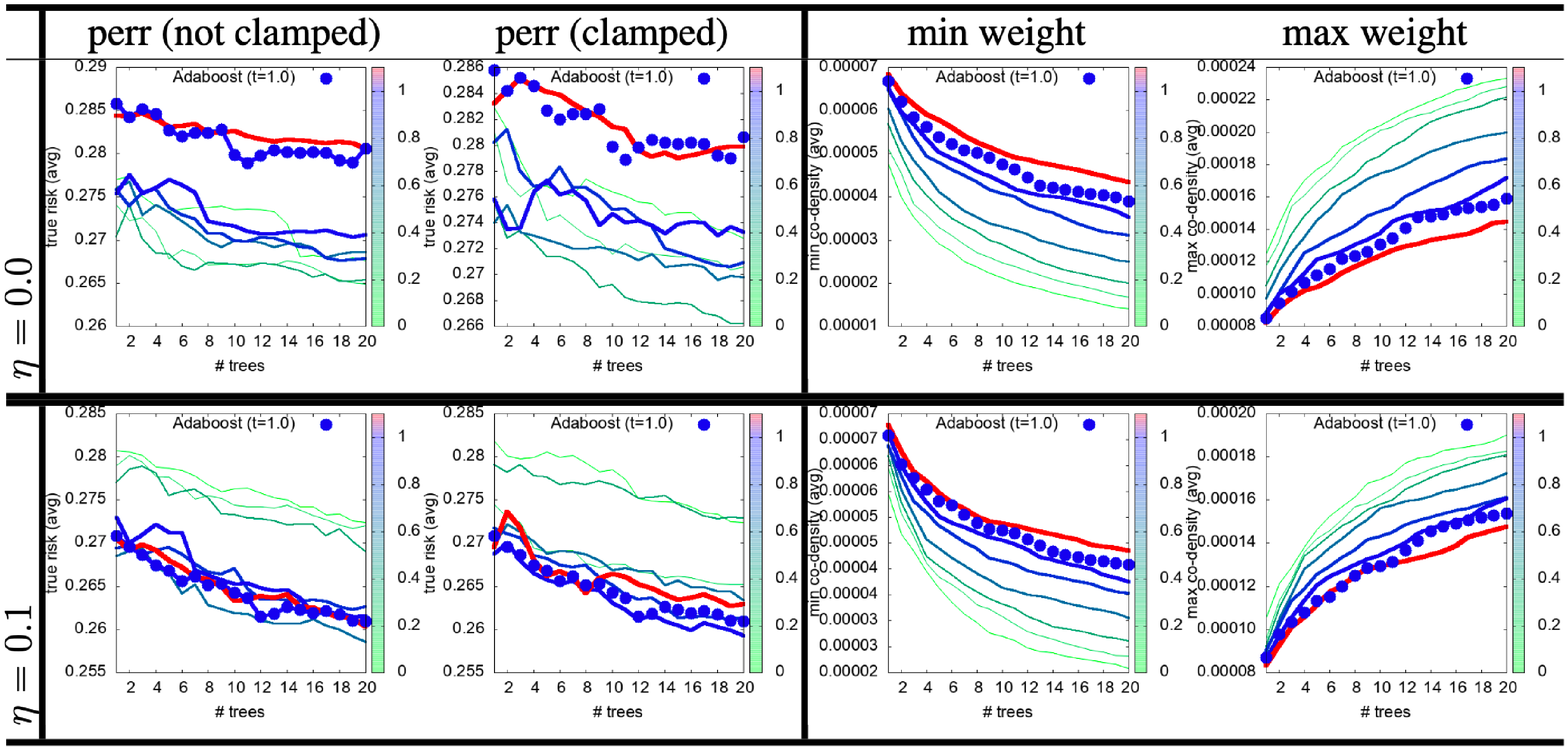}
\caption{Experiments on \tadaboost~comparing with \adaboost~($t=1$, bullets) on domain \domainname{\dname}. Conventions follow Table \ref{tab:plots-sonar}.}
    \label{tab:plots-\dname}
  \end{table*}

  \renewcommand{\dname}{creditcard}
\renewcommand{\nonoisetag}{May_15th__10h_21m_37s}
\renewcommand{\noisetag}{May_16th__3h_56m_51s}
\begin{table*}
  \centering
  \includegraphics[trim=0bp 0bp 0bp 0bp,clip,width=\columnwidth]{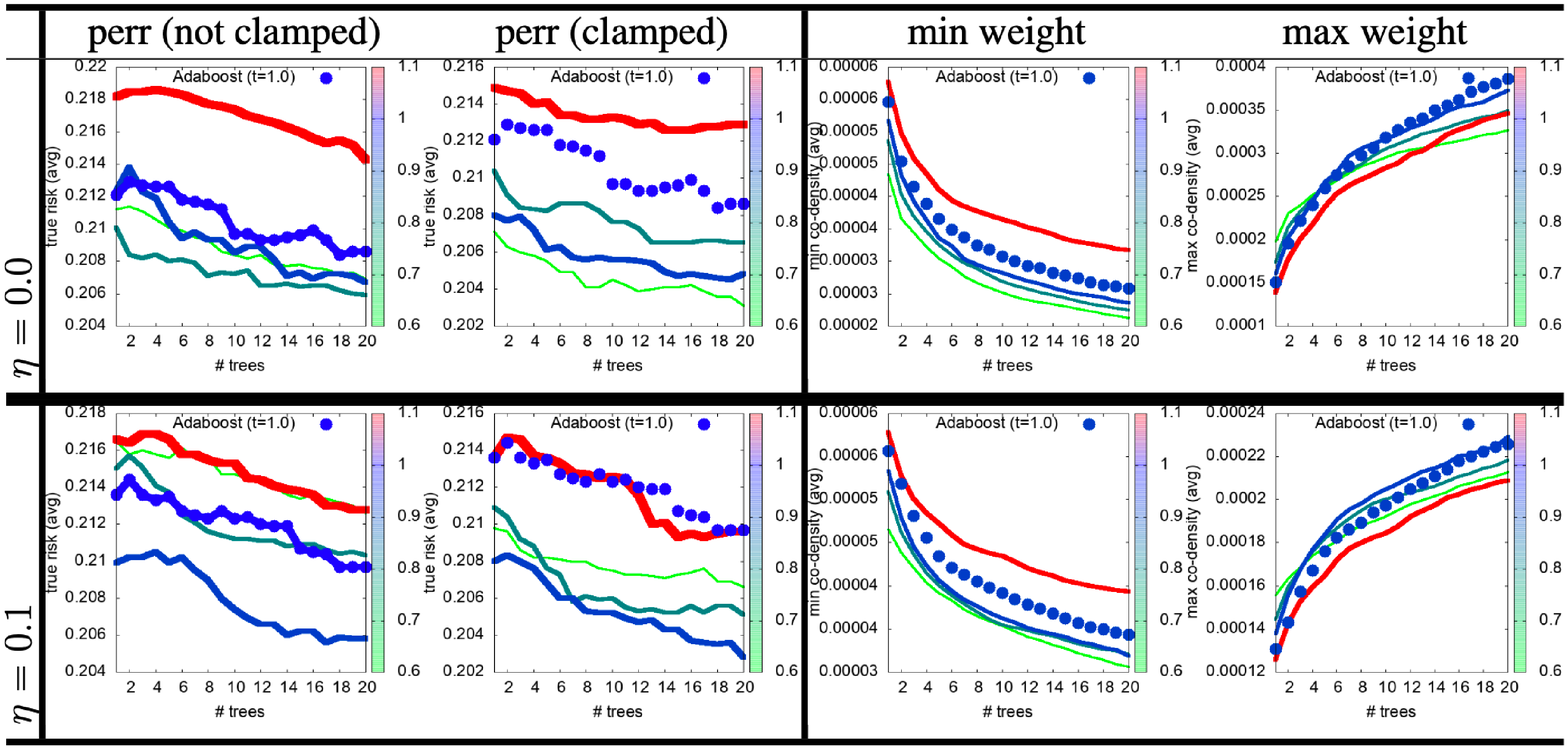}
\caption{Experiments on \tadaboost~comparing with \adaboost~($t=1$, bullets) on domain \domainname{\dname}. Conventions follow Table \ref{tab:plots-sonar}.}
    \label{tab:plots-\dname}
  \end{table*}

   \renewcommand{\dname}{adult}
\renewcommand{\nonoisetag}{May_12th__14h_55m_45s}
\renewcommand{\noisetag}{May_14th__5h_52m_43s}
\begin{table*}
  \centering
  \includegraphics[trim=0bp 0bp 0bp 0bp,clip,width=\columnwidth]{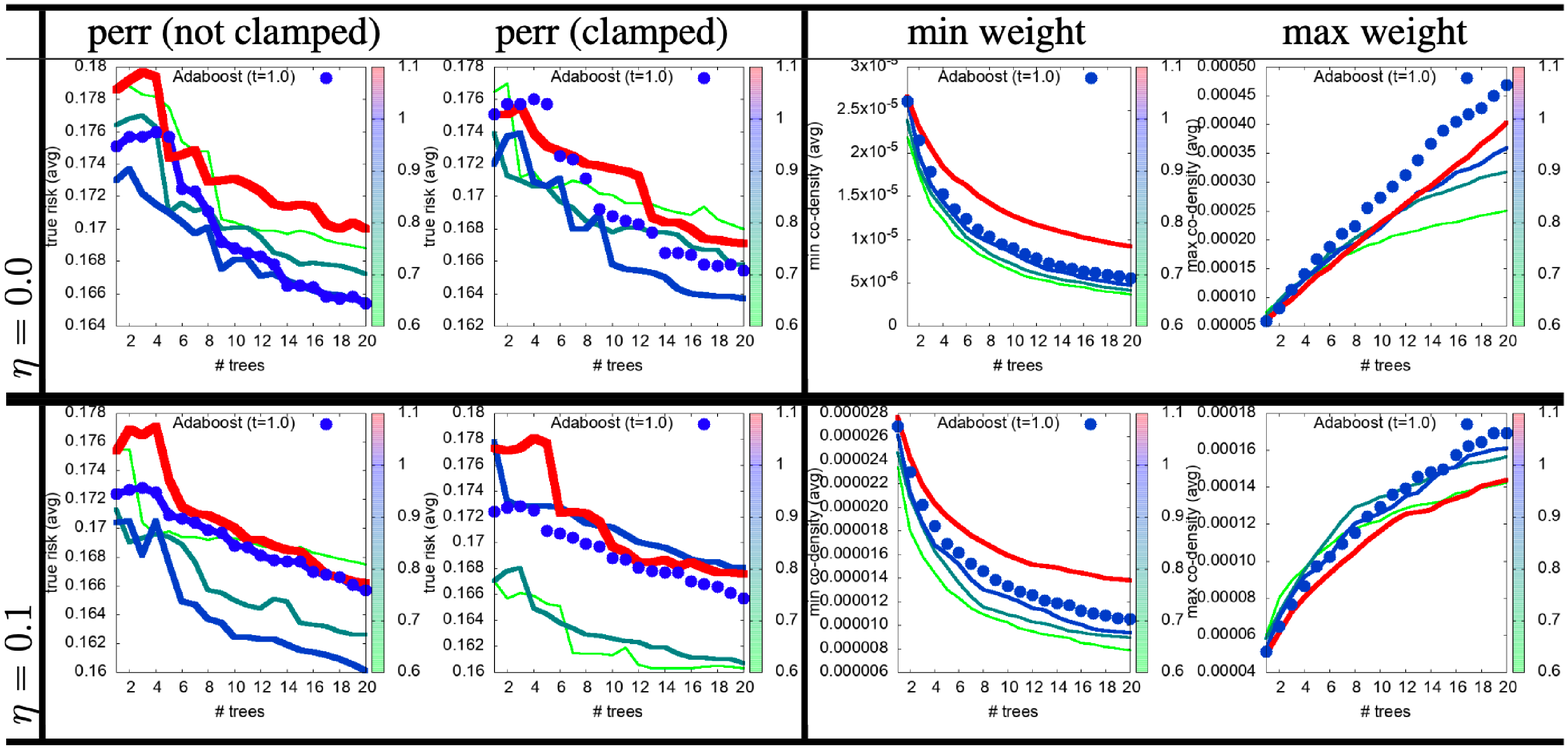}
\caption{Experiments on \tadaboost~comparing with \adaboost~($t=1$, bullets) on domain \domainname{\dname}. Conventions follow Table \ref{tab:plots-sonar}.}
    \label{tab:plots-\dname}
  \end{table*}

\end{document}